\def\eqref#1{equation~\ref{#1}}
\def\1{\bm{1}}
\DeclareMathAlphabet{\mathsfit}{\encodingdefault}{\sfdefault}{m}{sl}
\SetMathAlphabet{\mathsfit}{bold}{\encodingdefault}{\sfdefault}{bx}{n}
\DeclareMathOperator*{\argmax}{arg\,max}
\DeclareMathOperator*{\argmin}{arg\,min}
\definecolor{darkblue}{rgb}{0.0,0.0,0.65}
\definecolor{mydarkblue}{rgb}{0,0.08,0.45}
\theoremstyle{plain}
\newtheorem{corollary}{Corollary}
\newtheorem{theorem}{Theorem}
\newtheorem{definition}{Definition}
\definecolor{White}{rgb}{1, 1, 1}
\definecolor{Periwinkle}{rgb}{0, 0, 0}
\definecolor{myblue}{rgb}{0.82, 0.94, 0.75}
\definecolor{mygreen}{rgb}{0.64, 0.76, 0.68}
\definecolor{myyellow}{rgb}{0.88, 0.54, 0.35}
\definecolor{mygreen}{rgb}{0.68, 0.9, 0.8}
\definecolor{mypink}{rgb}{0.2, 0.87, 0.2}
\def\arrvline{\hfil\kern\arraycolsep\vline\kern-\arraycolsep\hfilneg}
\colorlet{LightGray}{White!98!Periwinkle}
\declaretheoremstyle[
    name=Hypothesis,
]{thmsty}
\declaretheorem[style=thmsty]{hypothesis}
\newcommand*\circled[1]{\tikz[baseline=(char.base)]{
            \node[shape=circle,draw,inner sep=0.6pt] (char) {#1};}}
\begin{document}

\twocolumn[
\icmltitle{Do pretrained Transformers Learn In-Context by Gradient Descent?}
\icmltitlerunning{Do pretrained Transformers Learn In-Context by Gradient Descent?}



\icmlsetsymbol{equal}{*}

\begin{icmlauthorlist}
\icmlauthor{Lingfeng Shen}{equal,yyy}
\icmlauthor{Aayush Mishra}{equal,yyy}
\icmlauthor{Daniel Khashabi}{yyy}
\end{icmlauthorlist}

\icmlaffiliation{yyy}{Johns Hopkins University, Baltimore MD}

\icmlcorrespondingauthor{Aayush Mishra}{amishr24@jhu.edu}


\vskip 0.3in
]



\printAffiliationsAndNotice{\icmlEqualContribution} 

\begin{abstract}
The emergence of In-Context Learning (ICL) in LLMs remains a remarkable phenomenon that is partially understood. 
To explain ICL, recent studies have created theoretical connections to Gradient Descent (GD). 
We ask, do such connections hold up in actual pre-trained language models? 
We highlight the limiting assumptions in prior works that make their setup considerably different from the practical setup in which language models are trained. 
For example, their experimental verification uses \emph{ICL objective} (training models explicitly for ICL), which differs from the emergent ICL in the wild. Furthermore, the theoretical hand-constructed weights used in these studies have properties that don't match those of real LLMs. 
We also look for evidence in real models. We observe that ICL and GD have different sensitivity to the order in which they observe demonstrations. Finally, we probe and compare the ICL vs. GD hypothesis in a natural setting. We conduct comprehensive empirical analyses on language models pre-trained on natural data (LLaMa-7B). Our comparisons of three performance metrics highlight the inconsistent behavior of ICL and GD as a function of various factors such as datasets, models, and the number of demonstrations. 
We observe that ICL and GD modify the output distribution of language models differently. These results indicate that \emph{the equivalence between ICL and GD remains an open hypothesis} and calls for further studies. 
\end{abstract}

\section{Introduction}

In-Context Learning (ICL) is an emergent behavior in Large Language Models (LLMs), which allows them to recognize patterns among demonstrations provided as prompts and extend these patterns to similar tasks~\citep{brown2020language}. 
This fascinating on-the-fly learning behavior has motivated ample studies to better of understand its dynamics.

In particular, a notable line of work tries to explain ICL via Gradient Descent (GD)~\citep{garg2022can,zhang2023trained}.
This connection is interesting because GD has been around for decades and is well-understood, while ICL is a recent phenomenon that has emerged somewhat surprisingly~\citep{wei2022emergent}, and is not fully understood. Therefore, a solid formal bridge between the two approaches would be an exciting finding as it can open new doors for understanding ICL.


\begin{hypothesis}
    \label{hypothesis1}
    \underline{For any} Transformer weights resulting from self-supervised pretraining and \underline{for any} well-defined task, ICL is algorithmically equivalent to GD (whole model or sub-model).
\end{hypothesis}
In this work, we revisit the hypothesis on the equivalence of ICL and GD, i.e., whether these two approaches to ``learning'' are functionally equivalent. 
Consider \cref{hypothesis1} that defines a \emph{universal} notion of equivalence between the ICL and GD. It defines equivalence as a property that must hold for \emph{any} Transformer model with parameters that \emph{emerge} naturally from {pretraining} on massive unlabeled data~\citep{brown2020language}, and is applicable for \emph{any} choice of well-defined tasks~\citep{srivastava2023beyond}. For example, \cite{dai2022can} claims that ICL is equivalent to implicit finetuning.

\begin{hypothesis}
    \label{hypothesis2}
    \underline{For a given} well-defined task, \underline{there exist} Transformer weights such that $\widehat{\text{ICL}}$
    is algorithmically equivalent to GD (whole model or sub-model).
\end{hypothesis}
\begin{wrapfigure}[9]{R}{0.23\textwidth}
\vspace{-4mm}
    \centering
    \includegraphics[width=0.23\textwidth,trim=1.0cm 9.8cm 20cm 0.6cm,clip=true]{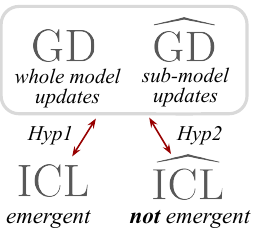}
\end{wrapfigure}
However, other recent works have focused on a different claim outlined in \cref{hypothesis2}, which focuses on in-context learning behavior that is \textbf{not emergent} (denoted as $\widehat{\text{ICL}}$). 
This deviates from \cref{hypothesis1} in the family of models  (differences in training setups) and family of tasks, as we will see in detail in \cref{subsec:existing}.
This hypothesis articulates a tangential target: being able to \emph{simulate} GD on a given task with \emph{some} 
(trained or hand-constructed) Transformer weights.
This is mainly concerned with the \textbf{expressivity} of Transformer architecture~\citep{merrill2022saturated,chiang2023tighter}, ignoring how it may emerge from pre-training. 
A few notable works use this hypothesis to provide a theoretical argument for the ICL$\approx$GD claim. 
Specifically, \cite{akyurek2022learning,von2023Transformers} show (via a different set of arguments) that Transformer-based architectures~\citep{vaswani2017attention}, for appropriate choices of parameters, can process their in-context observations in a way that is equivalent to running gradient updates on an \textit{implicit sub-model}'s parameters using the same demonstrations.

These claims are made under strong assumptions, which raises the question of whether they hold in practice. 
Specifically, \textbf{do the recent results focusing on \cref{hypothesis2} provide any (even partial) evidence for \cref{hypothesis1}?} 
Although these works highlight interesting abilities of the Transformer architecture, their claims about the equivalence between ICL and GD are \emph{too strong} for real-world models. 

We divide our study into three parts. 
In the first part (\cref{subsec:existing}), 
we show that previous works that study the ICL$\approx$GD hypothesis make assumptions that are hard to justify in the real world (\cref{hypothesis2}).
Then, we use order-sensitivity as an argument against the equivalence between ICL and GD (\cref{subsec:42}). 
Finally, we put these claimed equivalences to the test (\cref{sec:empirical}) by presenting a comprehensive empirical study. Our experiments reveal that ICL operates and performs differently from GD (fine-tuning the whole model or intuitive sub-models) on real-world language models across a variety of model sizes, datasets and the number of demonstrations.

In summary, 
\begin{enumerate}[leftmargin=6mm,topsep=0pt,itemsep=0mm]
    \item We provide arguments against existing theories of equivalence between ICL and GD, highlighting the gap between their experimental setup and real-world transformers.
    \item We empirically evaluate the equivalence between ICL and GD in the real-world setting using a variety of metrics and find that the two function quite differently.
    \item We call for more nuanced studies that maintain parallels with real-world LLMs so their inferences about ICL can be practically useful.
\end{enumerate}

\section{Background}
\label{sec:background}

We start with our problem setting (\cref{subsec:sampling:models:tasks}). We use ``sampling'' to emphasize \emph{a priori unknown problem parameters}. 
Specifically, our computational setup consists of sampling (choosing) a learning problem (task) and correspondingly sampling (training) a pretrained model.
We then cover the two learning setups studied for equivalence (\cref{subsec:standard}), followed by the treatment of ICL$\approx$GD hypothesis in recent literature.

\subsection{Sampling tasks and models}
\label{subsec:sampling:models:tasks}
\paragraph{Sampling from the space of well-defined tasks.}
Consider a family of functions (tasks) $\mathcal{F}$ such that each $(f: \mathcal{X} \rightarrow \mathcal{Y}) \in \mathcal{F}$, maps inputs in the domain $\mathcal{X}$ to the domain $\mathcal{Y}$.
A particular function $f \in \mathcal{F}$ elicits a sampling process $x\overset{f}{\sim}\mathcal{X}$ which samples input from $\mathcal{X}$ such that they are compatible with $f$. For example, in natural language, $\mathcal{F}$ defines the space of all tasks that involve mapping from language input to language output, like sentence completion, summarization, QA, translation, etc. 
However, each task $f$ (e.g., translating English to French) would require specific inputs (English and not, say, German) pertinent to the task.  
The goal is to find models that learn (imitate) $f$ by conditioning on a set of examples $S^f = \left\{S^f_i = (x_i, f(x_i)) \Big| f \sim \mathcal{F},   x_i \overset{f}{\sim}\mathcal{X} \right\}$. The model's competence is then evaluated using a test set $S^f_\text{test} = \{(x^t_i, f(x^t_i))\}$, which is disjoint from $S^f$.
During the evaluation, only the inputs in $S^f_\text{test}$ (which we denote as $X^f_{\text{test}}$) are shown to the model.


\paragraph{Sampling from the space of pretrained models.}
LLMs like GPT and LLaMa~\citep{brown2020language,touvron2023llama} are pretrained using the Causal Language Modelling (CLM) objective~\citep{radford2019language} which is more commonly understood as \textit{next-word prediction} objective~\citep{liu2018generating}. This process of pretraining elicits a family of models $\mathcal{M}$ depending primarily on the \textit{data distribution and characteristics of sequences}, and additionally on the choice of architectures, initializations, etc.
Formally, we denote this model $M_{\Theta_{0}}$ with pretrained weights $\Theta_0$, which is one model sampled from a much larger space of low perplexity pretrained models: 
$M_{\Theta_{0}} \sim \mathcal{M}$.

\subsection{Standard Learning Setups}
\label{subsec:standard}
We review the standard treatment of ICL and GD and introduce the relevant notation. 

\paragraph{In-context learning (ICL).}
We follow the dominant definition of In-context Learning (ICL)~\citep{brown2020language}, which involves conditioning pretrained LLMs with a handful of examples 
of task $f$. 
Given these demonstrations, we want the LLM to perform $f$ on new inputs. 
Formally, given demonstrations $S^f = \{S^f_i\}_{i=1}^{N}$ and a test input $x^t_i \in X_
\text{test}$, the model $M_{\Theta_{0}}$ generates a label $y_{t}$ when presented as $M_{\Theta_{0}}(S^f_{1} \circ S^f_{2} \circ ... S^f_{N} \circ x^t_i)$ or $M_{\Theta_{0}}(x_{1} \circ f(x_{1}) \circ x_{2} \circ f(x_{2}) ... x_{N} \circ f(x_{N}) \circ x^t_i)$, where $\circ$ is a delimiter like new-line which separates the instances. $M_{\Theta_{0}}$ produces a confidence distribution $\in \mathbb{R}^{|V|}$ over the vocabulary set $V$. 


\paragraph{Gradient Descent (GD).}
Gradient Descent is an iterative numerical optimization algorithm used to minimize a given objective with respect to model parameters. Given a model with initial parameters $\Theta_0$ and a differentiable loss function $\mathcal{J} \in \mathcal{Y} \times \mathcal{Y} \rightarrow \mathbb{R}
$, the algorithm updates the parameters toward the negative gradient $\nabla_{\Theta_0} \mathcal{J}$. GD is a standard optimizer used to train neural networks including LLMs. Although there are variants, like SGD and Adam, that work well in practice, we focus our study on vanilla GD, which calculates the gradients and takes a step  (learning rate $\eta$) of fixed size. In the context of learning from a set of demonstrations, pretrained models $M_{\Theta_{0}} \sim \mathcal{M}$ are fine-tuned on a particular task $f$ using GD by updating model parameters. 
Formally, parameter updates on the model $M_{\Theta_0}$ are performed for some epochs using the available demonstrations $S^f = \{S^f_i = (x_i, f(x_i))\}_{i=1}^{N}$ as follows:
\begin{gather}\label{eq:gd}
    \Theta_{1} = \Theta_{0} - \eta \nabla_{\Theta} \left(\frac{1}{N}\sum_{\substack{(x_i, f(x_i)) \\\in S^f}} \mathcal{J}\left(M_{\Theta_{0}}(x_i), f(x_i) \right) \right)
\end{gather}
After this process, the model is expected to 
perform this task 
given a new test sample \textit{directly as input}: $M_{\Theta_{1}}(x^t_i)$.

\begin{wrapfigure}[12]{R}{0.24\textwidth}
\vspace{-5mm}
    \centering
    \includegraphics[width=0.24\textwidth,trim=1.5cm 9.1cm 19.1cm 0.7cm,clip=true]{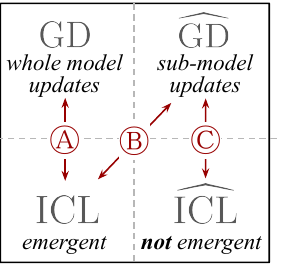}
    \vspace{-6mm}
    
    \caption{\protect\circled{C} is discussed in 
    \cref{subsec:existing}. 
    \protect\circled{A}, \protect\circled{B} in 
    \cref{subsec:42}, \cref{sec:empirical};}
    \label{fig:equivalence}
\end{wrapfigure}
\section{The limiting assumptions in the study of ICL$\approx$GD hypothesis}
\label{subsec:existing}



We highlight how recent studies drift from these conventional definitions of ICL and GD (\cref{subsec:standard}) to support another form of equivalence.
Specifically, they put restrictive assumptions on both the space of 
models $\mathcal{M}$ and the space of tasks $\mathcal{F}$ when training Transformers. Additionally, they impose impractical assumptions on model weights needed to prove their notion of equivalence between ICL and GD. 
We discuss why these deviations from real practice are non-trivial and offer little support for the equivalence between ICL and GD in practical settings. Fig.\ref{fig:equivalence} encapsulates the theme of our arguments discussed in detail next.

\subsection{Real LLMs are \underline{not} pretrained with ICL objective}
\label{subsec:41}

The widely-known ability of ICL \textit{emerges} in pre-trained models ($\mathcal{M}$) that are obtained by training on CLM objective with natural language text as described in \cref{subsec:sampling:models:tasks}. Sequences in the pretraining corpus of natural language have a complicated relationship with the family of tasks $\mathcal{F}$ that they can perform using ICL. Understanding this relationship is an active area of research (cf. \cref{sec:related}). However, we know that the pretraining corpus does not exclusively and explicitly contain sequences pertinent to $\mathcal{F}$. We refer to this training of Transformers with ``natural'' data (not necessarily natural language), which \textit{does not} explicitly train it to perform ICL, as training with the \textit{CLM objective}.

However, recent works use a different set of objectives. In \citet{akyurek2022learning, von2023Transformers, garg2022can}, the models are trained using the \textit{ICL objective}:
\begin{gather}
\scriptstyle{
    \argmin_{\Theta} \mathop{\mathbb{E}}_{
        \substack{
        f \sim \widehat{\mathcal{F}} \\ 
        x_i \overset{f}{\sim} \mathcal{X}
        }
    } 
    \Big[ 
    \mathcal{L}
    \Big(
    f\left(x_i\right), 
    M_\Theta 
    \left(
    x_{1} \circ f(x_{1}) \circ x_{2} \circ f(x_{2}) \hdots \circ x_{i} 
    \right)
    \Big) 
    \Big].
}
\end{gather}
\vspace{-0.1cm}
This deviates from the real settings in at least two aspects: 

\vspace{-0.05cm}
\paragraph{Changing the space of tasks.} This objective trains the model on the same restricted task distribution that it is tested on via ICL. 
We call this $\widehat{\text{ICL}}$, or the ability to perform ICL by training on \textit{ICL objective} (cf. \autoref{fig:equivalence}) and the corresponding family of tasks $\widehat{\mathcal{F}}$.
For example, if the target task to learn is linear regression, 
the model is trained on the sequence of linear regression instances. Therefore, this setup does not necessarily capture the essence of how ICL \textit{emerges} in LLMs, which are not trained to perform ICL on a family of tasks.

\vspace{-0.05cm}
\paragraph{Changing the space of models.} Moreover, optimizing for this objective elicits a family of models $\widehat{\mathcal{M}}$ that is embedded with the inductive bias of expecting a constant structure in the sequence: a series of $(x, y)$ pairs followed with a query input. Combined with the training on sequences specifically related to a restricted family of tasks $\widehat{\mathcal{F}}$, this space of models has different characteristics from the space of models $\mathcal{M}$ defined in \cref{subsec:sampling:models:tasks}.

The relationship between these sets of models is neither clear nor discussed in these recent works. Therefore, these works essentially equate $\widehat{\text{ICL}}$ with $\widehat{\text{GD}}$ (\circled{C} in \autoref{fig:equivalence}). Although restricted to a stricter family of tasks like Linear Regression is reasonable for analysis, it is important to discuss these distinctions between the setups. Using the term \textit{Transformers} to refer to both these spaces of models and using the term ICL for $\widehat{\text{ICL}}$ are both misleading.

\vspace{-0.1cm}
\subsection{\underline{Hand-constructed} weights and their limits}
\label{subsec:43}

In this section, we analyze the weight matrices constructed by \citet{von2023Transformers} and \citet{akyurek2022learning}. 
As no method is provided to arrive at these weights by training, we place these hand-constructed weights under the umbrella of $\widehat{\text{ICL}}$. Next, we show how they are hard to justify for real-world language models (e.g., LLaMa-7B).


We first re-write the weight matrices of Transformers constructed by \citet{von2023Transformers}. Their proposition states that given a reference linear model $W$, there exist key, query, value, and projection matrices $(W_K, W_Q, W_V, P)$ of a Transformer such that a forward pass in that Transformer is identical to a gradient descent step on $W$, i.e., $e_j \leftarrow (x_j, y_j ) + (0, -\Delta W x_j ) = (x_i, y_i) + P V K^T q_j$.

The weight update $\Delta W$ is calculated by the mean squared error loss on the in-context samples as $\Delta W = -\eta\nabla_WL(W) = -\frac{\eta}{N}\sum_{i=1}^N(Wx_i - y_i)x^T_i$.

They construct $W_K = W_Q = \begin{pmatrix} I_x & 0\\ 0 & 0 \end{pmatrix}, W_V = \begin{pmatrix} 0 & 0\\ W_0 & -I_y \end{pmatrix}$ and $P = \frac{\eta}{N}I$, where $I_x, I_y$ and $I$ are identity matrices of size $N_x, N_y$ and $N_x + N_y$ respectively. Using these matrices, they achieve the dynamics of a gradient step in the forward pass of a Linear Self Attention Layer (without softmax). The construction by \citet{akyurek2022learning} is more complex and requires multiple steps to simulate one step of GD on one in-context sample. However, the construction is similar in that it is similarly sparse (see section C.4 in \citet{akyurek2022learning}'s appendix). These constructions raise multiple concerns about their scaling to real-world models.


\vspace{-0.1cm}
\paragraph{How does the model arrive at the correct $P$?} In the construction by \citet{von2023Transformers}, $P$ is trivially assigned the value $\frac{\eta}{N}I$ which would change with the number of in-context samples. There is no insight into how a Transformer model would arrive at this information and how this formation behaves without any in-context samples. An edge case is  $N = 0$ (no demonstrations), which surprisingly makes terms in $P$ go to infinity.


\paragraph{Are LLM weights this sparse?} 
The weight construction by \citet{von2023Transformers} has a lot of extremely sparse weight matrices. To be precise, $W_K$ and $W_Q$ would be matrices with $N_x$ terms equal to $1$ in the top left of the diagonal with the rest of $(N_x + N_y)^2 - N_x$ terms equal to zero. For LLaMa, the embedding size of the token vector, $N_x = N_y = 4096$. This means that the sparsity ratio (SR) in the weight matrices should be $\frac{((N_x + N_y)^2 - N_x)}{(N_x + N_y)^2} > 99.99\%$. The sparsity ratio in $W_V$ should be close to $\approx 75\%$ if we assume each element in $W_0$ to be non-zero. 
In practice, the sparsity ratio is much lower for real-world models like LLaMa and GPT-J. As precisely $0$ values for weights are unlikely, we measured the sparsity ratio in $W_K, W_Q$, and $W_V$ by measuring weights less than a threshold ($\delta$). \autoref{fig:enter-label:left} shows the average sparsity value across layers for LLaMa. Overall, real-world pretrained Transformers have a much lower sparsity ratio than the assumptions.


\begin{figure}[ht]
    \centering
    \hspace{0.1cm}
    \includegraphics[width=0.98\linewidth]{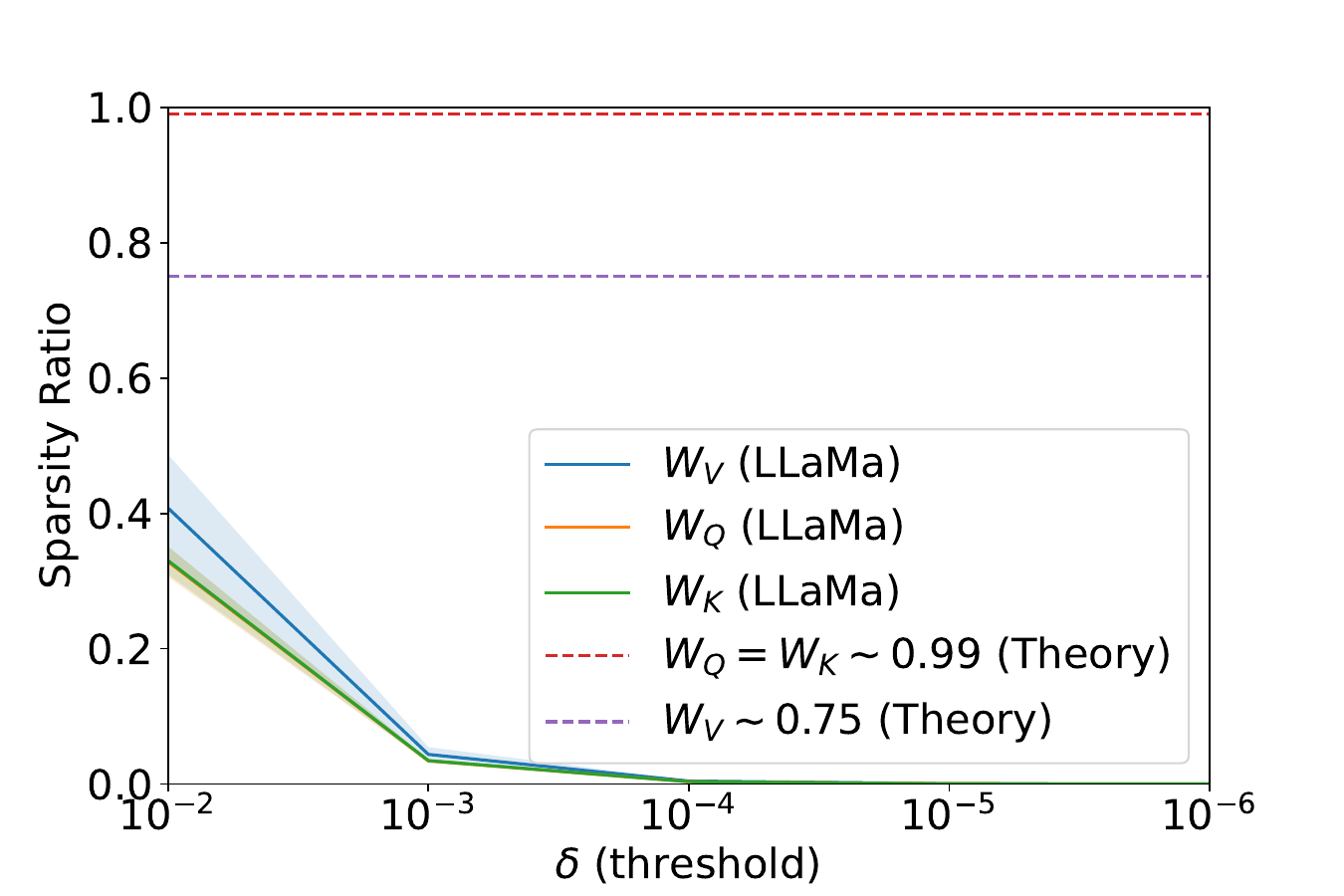} 
    \caption{We show that the sparsity ratio in LLaMA (averaged across layers with standard deviation shown with shade) is much less than required by previous works to implement GD. More plots in \autoref{sparse_rate}.}
    \label{fig:enter-label:left}
\end{figure}

\vspace{-3mm}
\paragraph{How does ICL evolve during training?} From the given constructions, models need to arrive at very specific weights to be able to perform gradient descent on in-context samples, but in practice, we observe models develop, retain, and improve this ability over time in training when the parameters change significantly (A detailed experimental setup is deferred to \autoref{setup:evolve}). In \autoref{fig:enter-label:right}, we look at how the ability to perform ICL evolves compared with how the model parameters change over time (for each check-pointed GPT-J model). We measure the average parameter changes across all layers across $W_K, W_Q$, and $W_V$. 
This reveals that real Transformers do not settle on one set of weights (as required by previous works for performing GD) but continue to evolve throughout training. Although this result is an average over all the weights, certain groups of parameters (as constructed in previous works) are unlikely to remain constant throughout training. Therefore, ICL emerges in real LLMs, not just for a single choice of parameters but a family of parameters. Hence, \textbf{to prove the equivalence between GD and ICL, showing it for a single choice of parameters is not enough.}

\begin{figure}[ht]
    \centering
    \includegraphics[width=0.90\linewidth]{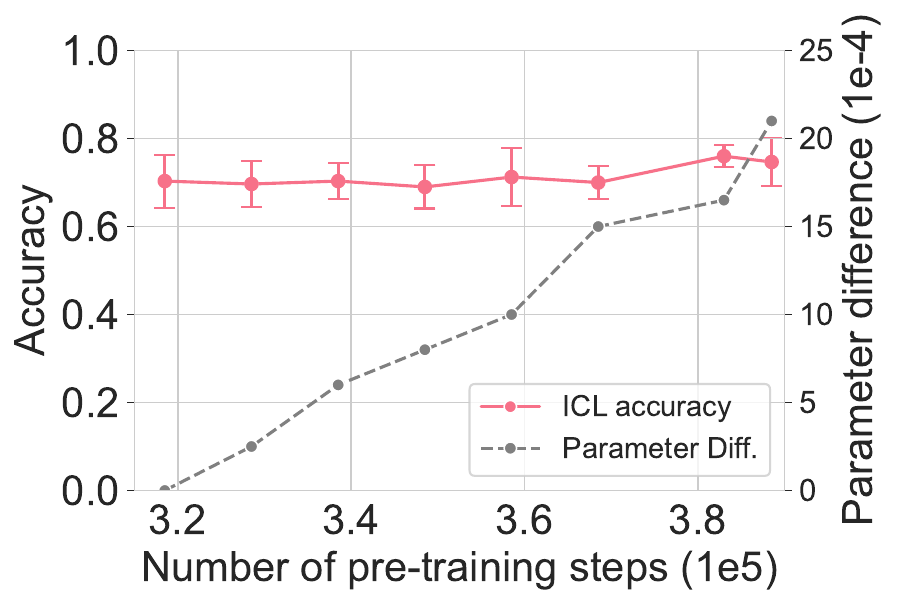}
    \caption{GPT-J's ability to do ICL (on AGNews) does not change much over a time cross-section of training while the parameters change steadily.}
    \label{fig:enter-label:right}
\end{figure}

\section{ICL is likely not equivalent to \underline{order-stable} algorithms}
\label{subsec:42}
While we established some limiting assumptions in previous studies,
it remains unclear whether ICL$\approx$GD hypothesis is actually invalid for real LLMs (\circled{A} or \circled{B} in \autoref{fig:equivalence}).
For two algorithms to be equivalent, they must also have the \emph{same functional behavior}. Namely, they should respond identically to the changes in the ordering of the instances. In this section, we discuss the discrepant sensitivity of ICL and GD to the order in which they process training instances (demonstrations).

Let's begin with the definition of algorithmic equivalence.

\begin{definition}[Algorithmic equivalence to ICL]
\label{def:opt1}
Consider an optimization algorithm $\mathcal{A}$ that modifies a pretrained model $M_{\Theta_0} \in \mathcal{M}$, using demonstrations $S = \{(x_i, f(x_i)\}_{i=1}^N$ of a well defined task $f \sim \mathcal{F}$, i.e., $\Theta_{S} \leftarrow \mathcal{A}(S, M_{\Theta_{0}})$.  We call $\mathcal{A}$ ``equivalent'' to ICL if and only if the following holds:
\begin{equation}
    M_{\Theta_{0}}(S_{1} \circ S_{2} \circ ... S_{N} \circ x^t) = M_{\Theta_{S}}(x^t) 
    \quad \forall \hspace{2pt} x_i, x^t \overset{f}{\sim}\mathcal{X}.
\end{equation}

\end{definition}
The following theorem establishes the equivalence of order sensitivity between ICL and any algorithm $\mathcal{A}$ equivalent to it:
\begin{theorem}[Algorithmic equivalence implies the same order sensitivity]
\label{theorem:order}
    Given a pretrained model $M_{\Theta_0} \in \mathcal{M}$, an algorithm $\mathcal{A}$ equivalent to ICL, and demonstrations $S = \{(x_i, f(x_i)\}_{i=1}^N$ of a well defined task $f \sim \mathcal{F}$, let $\sigma_A, \sigma_B$ denote two orders of elements in $S$, such that $\Theta_{\sigma_A} \leftarrow \mathcal{A}(\sigma_A, M_{\Theta_{0}})$ and $\Theta_{\sigma_B} \leftarrow \mathcal{A}(\sigma_B, M_{\Theta_{0}})$. Then, for $\forall \hspace{2pt} x^t \overset{f}{\sim}\mathcal{X}$, we have
    \begin{align}
        &\underbrace{M_{\Theta_{0}}(\sigma_A \circ x^t) - M_{\Theta_{0}}(\sigma_B \circ x^t)}_{\text{The order sensitivity of ICL}}  = \underbrace{M_{\Theta_{\sigma_A}}(x^t) - M_{\Theta_{\sigma_B}}(x^t)}_{\text{The order sensitivity of algorithm $\mathcal{A}$}}, \nonumber
    \end{align}
\end{theorem}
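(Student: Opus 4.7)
The plan is to derive the order-sensitivity identity as an essentially immediate corollary of \cref{def:opt1}, by applying the equivalence property twice, once for each ordering, and then subtracting.

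First I would observe that \cref{def:opt1} quantifies universally over the inputs $x_i$ (and hence, implicitly, over any permutation of the demonstrations used to form the prompt and to drive $\mathcal{A}$). Concretely, given any ordering $\sigma$ of the demonstration set $S$, applying the definition with the $x_i$ listed in the order prescribed by $\sigma$ yields
\begin{equation}
    M_{\Theta_{0}}(\sigma \circ x^t) \;=\; M_{\Theta_{\sigma}}(x^t), \qquad \Theta_{\sigma} \leftarrow \mathcal{A}(\sigma, M_{\Theta_{0}}).
\end{equation}
This is the only place where the assumption that $\mathcal{A}$ is equivalent to ICL is used.

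Next I would instantiate this identity for the two specific orderings appearing in the statement. Setting $\sigma = \sigma_A$ gives $M_{\Theta_{0}}(\sigma_A \circ x^t) = M_{\Theta_{\sigma_A}}(x^t)$, and setting $\sigma = \sigma_B$ gives $M_{\Theta_{0}}(\sigma_B \circ x^t) = M_{\Theta_{\sigma_B}}(x^t)$. Subtracting the second equality from the first produces
\begin{equation}
    M_{\Theta_{0}}(\sigma_A \circ x^t) - M_{\Theta_{0}}(\sigma_B \circ x^t) \;=\; M_{\Theta_{\sigma_A}}(x^t) - M_{\Theta_{\sigma_B}}(x^t),
\end{equation}
which is exactly the claim for the arbitrarily chosen test point $x^t \overset{f}{\sim}\mathcal{X}$.

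Since the argument is a two-line subtraction, there is no real computational obstacle. The only subtle point — and the one I would be careful to articulate explicitly — is the reading of \cref{def:opt1}: the equivalence is postulated for every sequence of demonstration inputs $x_i$, so in particular it must hold simultaneously under the two permutations $\sigma_A$ and $\sigma_B$ of the same underlying set $S$. This justifies pairing the two instances of the definition on the same pretrained model $M_{\Theta_{0}}$ and the same test input $x^t$, which is what makes the subtraction meaningful. Any weaker reading (e.g.\ ``for some canonical ordering only'') would leave a gap, so I would state this universality explicitly at the outset of the proof.
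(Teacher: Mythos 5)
Your proposal is correct and matches the paper's proof exactly: both apply Definition~\ref{def:opt1} once for each ordering $\sigma_A$ and $\sigma_B$ and subtract the resulting identities. Your explicit remark that the definition must be read as holding for every ordering of the demonstrations (not just some canonical one) is a useful clarification that the paper leaves implicit.
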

\begin{proof}
The proof trivially follows from definition~\autoref{def:opt1}. We know that, $\forall \hspace{2pt} x^t \overset{f}{\sim}\mathcal{X}$ we have:
\begin{align*}
    &M_{\Theta_{0}}(\sigma_A \circ x^t) = M_{\Theta_{\sigma_A}}(x^t)\\
    &M_{\Theta_{0}}(\sigma_B \circ x^t) = M_{\Theta_{\sigma_B}}(x^t).
\end{align*}
Simply subtracting these two terms proves the theorem. 
\end{proof}



\subsection{ICL is likely not GD based on order inconsistency}
\label{subsubsec:iclgd}

Let's assume that GD is equivalent to ICL (arrow \circled{A} in \autoref{fig:equivalence}). 
We show that this assumption leads to a contradiction due to their inconsistent order sensitivity. 

\paragraph{GD is order-stable.} We know that GD is performed on a batch of samples from the training distribution, as seen in \autoref{eq:gd}. It does not matter which order the samples are presented. GD calculates the gradient using the average loss across all samples and is therefore agnostic of the order in which they are calculated. With respect to \cref{theorem:order}, if $\mathcal{A} =$ GD, $M_{\Theta_{\sigma_A}} = M_{\Theta_{\sigma_B}}$ or $M_{\Theta_{\sigma_A}}(x^t) - M_{\Theta_{\sigma_B}}(x^t) = 0$. 

\begin{figure}
    \centering
    \includegraphics[width=0.41\textwidth,trim=1cm 0cm 0cm 0cm]{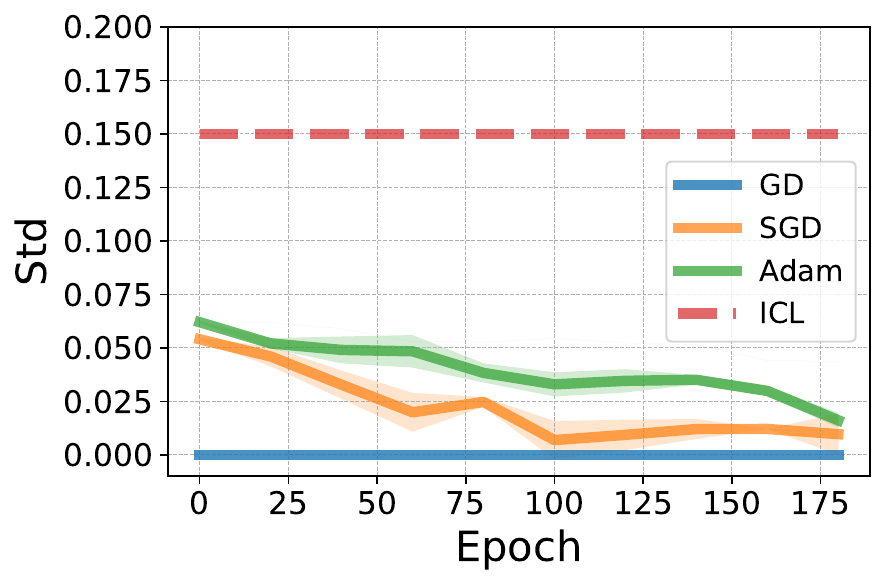}
\caption{Order Sensitivity (standard deviation in output probabilities over the vocabulary) of ICL and GD (and its variants SGD and Adam) as measured on the LLaMa-7B on AGNews. The std is taken across $10$ different orders of 8 ICL demos. More results are deferred to \autoref{extra_order}.}  
\label{fig:sensitivity2}
\end{figure}
\paragraph{ICL and GD show different order-sensitivity.}
For ICL to be equivalent to any order-stable algorithm like GD, it must also be order-stable. However, previous research \citep{Lu2022FantasticallyOP,hahn2023theory} has demonstrated that ICL is highly sensitive to the order of in-context samples. This is also easy to see because decoder-only Transformers exhibiting ICL only predict a token based on what they have seen before in the input. A different order of samples would change the behavior of the model. Therefore, ICL can not be equivalent to GD (arrow \circled{A} in \autoref{fig:equivalence}) as claimed by \cite{dai2022can}. 
These conclusions may change upon notable technological shifts (e.g., the architecture of LLMs).
We also empirically verify this phenomenon by comparing the output distributions produced by ICL and GD (\autoref{fig:sensitivity2}). Details 
are deferred to \autoref{setup:order}.


\subsection{ICL 
is likely not
$\widehat{\text{GD}}$ based on order inconsistency}
\label{subsubsec:iclgdhat}

\paragraph{Gradient Descent on \emph{implicit sub-model} ($\widehat{\text{GD}}$}).
\cite{akyurek2022learning, von2023Transformers} also hypothesize the existence of \textit{implicit sub-models} inside the weights of Transformer models. These sub-models (parameterized to perform linear regression) are constructed into the weights of the Transformer. When the Transformer is presented with in-context samples, it can simulate steps of gradient descent on the regression loss (using these samples) with respect to the sub-model parameters. Formally, for a sub-model with weights $W_0$, the Transformer model $M_{\Theta_0} = M_{\Theta_{0} \setminus W_0, W_0}$ with fixed parameters ($\Theta_{0} \setminus W_0$) would optimize the weights of the inbuilt implicit sub-model ($W_0$) when presented with in-context samples and make its final prediction using updated weights ($W_1$). We refer to this version of GD as $\widehat{\text{GD}}$.


Now we define the equivalence of ICL to an algorithm that updates the implicit model only. 

\begin{definition}\label{def:implicit}
Consider an optimization algorithm $\mathcal{A}$ that modifies the implicit sub-model weights $W_0$ of a pretrained model $M_{\Theta_0} \in \mathcal{M}$, using demonstrations $S = \{(x_i, f(x_i)\}_{i=1}^N$ of a well defined task $f \sim \mathcal{F}$, i.e., $W_{S} \leftarrow \mathcal{A}(S, W_0)$.  We call $\mathcal{A}$ ``equivalent'' to ICL if and only if the following holds, given $\forall \hspace{2pt} x_i, x^t \overset{f}{\sim}\mathcal{X}$:
\begin{equation}
    M_{\Theta_{0} \setminus W_0, W_0}(S_{1} \circ S_{2} \circ ... S_{N} \circ x^t) = M_{\Theta_{S} \setminus W_S, W_S}(x^t)
\end{equation}
and $\Theta_{0} \setminus W_0 = \Theta_{S} \setminus W_S$, i.e., the pretrained model only updates by the sub-models weights.
\end{definition}


When the model with implicit sub-model weights $W_0$ is provided with in-context examples, it arrives at updated weights $W_S$ using $\mathcal{A}$ without changing any other weights. This is equivalent to when the model starts with sub-model weights $W_S$ and is provided no in-context examples, so no update happens on the weights via $\mathcal{A}$. Now, based on Definition \autoref{def:implicit} and \autoref{theorem:order}, the following corollary about the equivalence of order sensitivity between ICL and an equivalent algorithm $\mathcal{A}$ also holds:

\begin{corollary}
\label{corr:gdi}
{
For a pretrained model $M_{\Theta_0} \in \mathcal{M}$, an algorithm $\mathcal{A}$ equivalent to ICL (according to \cref{def:implicit}) and two orders $\sigma_A, \sigma_B$ of elements in the demonstration set $S$, $\forall \hspace{2pt} x^t \overset{f}{\sim}\mathcal{X}$},
\begin{align}
    &M_{\Theta_{0} \setminus W_0, W_0}(\sigma_A \circ x^t) - M_{\Theta_{0} \setminus W_0, W_0}(\sigma_B \circ x^t) \nonumber \\&= M_{\Theta_{\sigma_A} \setminus W_{\sigma_A}, W_{\sigma_A}}(x^t) - M_{\Theta_{\sigma_B} \setminus W_{\sigma_B}, W_{\sigma_B}}(x^t)
\end{align}
\end{corollary}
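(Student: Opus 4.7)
The plan is to mirror the proof of \cref{theorem:order} almost verbatim, since \cref{corr:gdi} is stated as a direct analog for the ``implicit sub-model'' flavor of equivalence given in \cref{def:implicit}. The essential observation is that \cref{def:implicit} is itself an equation between two model outputs (one side conditioned on demonstrations, the other a forward pass on $x^t$ with updated sub-model weights), so the corollary should fall out by applying this equation to two different orderings $\sigma_A, \sigma_B$ of the demonstration set and subtracting.

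Concretely, I would first instantiate \cref{def:implicit} with the ordering $\sigma_A$: since $\sigma_A$ is just a permutation of the demonstrations in $S$, it satisfies the hypotheses of the definition, and therefore $M_{\Theta_{0}\setminus W_0,\,W_0}(\sigma_A \circ x^t) = M_{\Theta_{\sigma_A}\setminus W_{\sigma_A},\,W_{\sigma_A}}(x^t)$, where $W_{\sigma_A} \leftarrow \mathcal{A}(\sigma_A, W_0)$ and crucially $\Theta_{0}\setminus W_0 = \Theta_{\sigma_A}\setminus W_{\sigma_A}$ (the rest of the model is untouched). I would then repeat the same step with $\sigma_B$ to obtain the analogous identity. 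Subtracting the two identities yields exactly the equation in the corollary statement.

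I do not expect any real obstacle, since the argument is structurally identical to the proof of \cref{theorem:order} and uses only (i) the fact that permutations of $S$ still constitute valid demonstration sequences to which \cref{def:implicit} applies, and (ii) linearity of subtraction. The only minor point worth flagging in the write-up is that the ``non sub-model'' parameters on both sides of each equation agree (by the clause $\Theta_{0}\setminus W_0 = \Theta_{\sigma}\setminus W_{\sigma}$ in \cref{def:implicit}), so the two right-hand sides $M_{\Theta_{\sigma_A}\setminus W_{\sigma_A},\,W_{\sigma_A}}(x^t)$ and $M_{\Theta_{\sigma_B}\setminus W_{\sigma_B},\,W_{\sigma_B}}(x^t)$ differ only in the updated sub-model weights $W_{\sigma_A}$ versus $W_{\sigma_B}$, which is precisely the quantity interpreted as the order sensitivity of $\mathcal{A}$ acting on the implicit sub-model.
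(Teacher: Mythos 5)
Your proposal matches the paper's own (implicit) argument: the corollary is stated as following directly from \cref{def:implicit} in exactly the way \cref{theorem:order} follows from \cref{def:opt1}, namely by instantiating the defining identity at $\sigma_A$ and $\sigma_B$ and subtracting. No gap; your remark that the non-sub-model parameters coincide on both sides is the only detail worth recording, and you have it.
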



\paragraph{ICL and $\widehat{\text{GD}}$ show different order-sensitivity.}
Let's assume that $\widehat{\text{GD}}$ is equivalent to ICL (arrow \circled{B} in \autoref{fig:equivalence}) according to \cref{def:implicit}. According to the same argument as in \cref{subsubsec:iclgd}, $W_{\sigma_A} = W_{\sigma_B}$ or $\Theta_{\sigma_A} \setminus W_{\sigma_A}, W_{\sigma_A} = \Theta_{\sigma_B} \setminus W_{\sigma_B}, W_{\sigma_B}$ or $M_{\Theta_{\sigma_A} \setminus W_{\sigma_A}, W_{\sigma_A}}(x^t) - M_{\Theta_{\sigma_B} \setminus W_{\sigma_B}, W_{\sigma_B}}(x^t) = 0$. This again implies that for ICL to be equivalent to $\widehat{\text{GD}}$, it must be order-stable. Again, empirical evidence in today's LLMs shows that ICL is not order-stable and hence not equivalent to $\widehat{\text{GD}}$ (arrow \circled{B} in \autoref{fig:equivalence}). These conclusions may change in future.

\paragraph{What about variants of GD?} We note that the construction of \citet{akyurek2022learning} allows for order sensitivity in GD as the update is performed on samples one by one instead of the batch update performed by \citet{von2023Transformers}. Although it is unclear which order is used to perform this update, we compared the order-sensitivity of ICL with SGD and Adam (\autoref{fig:sensitivity2}) and found that ICL is still significantly more sensitive to order than SGD/Adam. Therefore, it is unlikely that ICL is equivalent to even variants of GD. We provide more order-sensitivity results in \autoref{setup:order}.

\section{Empirical evalutation of ICL vs. GD/$\widehat{\text{GD}}$ in large pre-trained language models}\label{experiments}

This section provides an empirical evaluation of ICL$\approx$GD equivalence in realistic settings. 
Specifically, we take a language model pretrained on natural data and use it with ICL demos to get ICL outputs. Then, we use the same demos to fine-tune the model using GD and $\widehat{\text{GD}}$, and get their respective output (without ICL demos). Next, we compare these outputs on various metrics to see how well ICL and GD/$\widehat{\text{GD}}$ align in practice. 

\label{sec:empirical}
\subsection{Experimental settings}\label{eval}
\paragraph{Model and benchmarks.}
We choose LLaMa (7B) \citep{touvron2023llama} as our primary model for evaluation. Our model-size comparative studies use the GPT family of models (as discussed later \cref{subsec:52}). For benchmarking, we select the following datasets: AGNews \citep{zhang2015character}, CB \citep{de2019commitmentbank}, SST-2 \citep{socher2013recursive}, and RTE \citep{dagan2005pascal}. 

\paragraph{Experimental setup.}
We evaluate ICL with varying demonstration sizes $N \in \{1, 2, 4, 8\}$ and for GD, we fine-tune the models with the same corresponding ICL demonstrations, experimenting with a variety of learning rates $\{\text{1e-4},\text{5e-4}, \text{1e-5}, \text{5e-5}\}$ over 200 epochs, which ensures the convergence of model. Specifically, the objective function of GD is $\mathcal{J} = \sum_{(x,y) \in S}\mathcal{L}_{\text{clm}}(y;x)$\label{objective_new}, where $\mathcal{L}_{\text{clm}}(y;x)$ is the CLM loss of $y$, given $x$ as the prefix. It is noteworthy that we only use gradients of the label and not the whole prefix to update the model. This is done to keep settings similar to the existing formalisms around ICL$\approx$GD equivalence, where only output loss is calculated.

For $\widehat{\text{GD}}$, it is not trivial to identify the \textit{implicit} sub-model as described in \cref{subsubsec:iclgdhat}. Moreover, it is computationally infeasible to experiment on all possible subsets of parameters to identify the sub-model. Therefore, we use the hypotheses in \citet{akyurek2022learning, von2023Transformers}, to experiment with intuitive subsets. According to \citet{von2023Transformers} the \textit{implicit} model lies in $W_V$ of the Transformer while the probing experiments in \citet{akyurek2022learning} suggest that this iterative optimization happens in top layers of the Transformers. This guides us to choose two intuitive subsets to simulate $\widehat{\text{GD}}$: 
\begin{enumerate}[leftmargin=5mm,topsep=0pt,itemsep=0pt]
    \item $W_V$ of a single deep layer.
    \item $W_V$ of a single middle layer (for comparison).
\end{enumerate}
Overall, we compare ICL to GD, $\widehat{\text{GD}}$ (mid), and $\widehat{\text{GD}}$ (deep). Exact details about this setup are deferred to \autoref{gd_sub}.

\begin{figure*}[ht]
\centering

    \subfigure[\textit{Accuracy} comparison of ICL and GD variants.  
    ]{
    \includegraphics[width=0.28\textwidth]{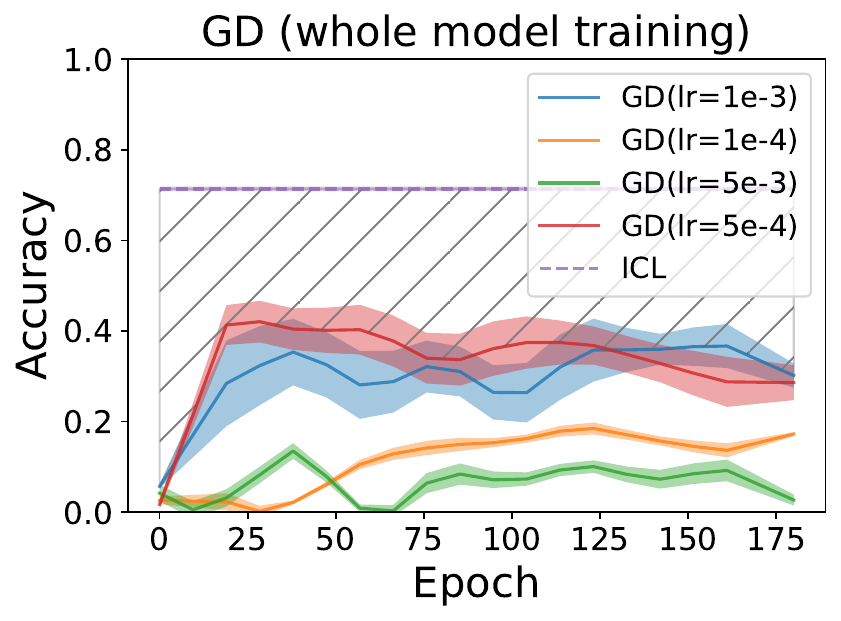}
    \includegraphics[width=0.243\textwidth,trim=1.1cm 0cm 0cm 0cm,clip=true]{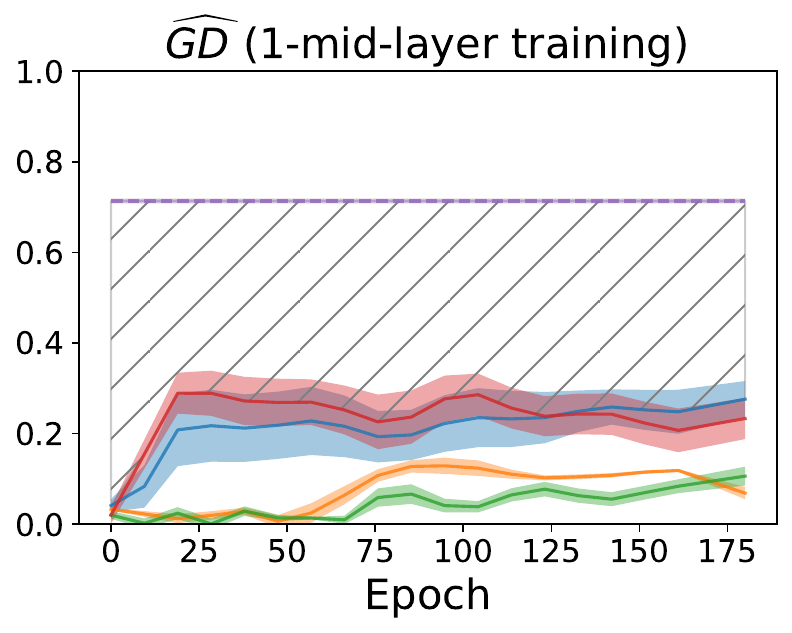}
    \includegraphics[width=0.243\textwidth,trim=1.1cm 0cm 0cm 0cm,clip=true]{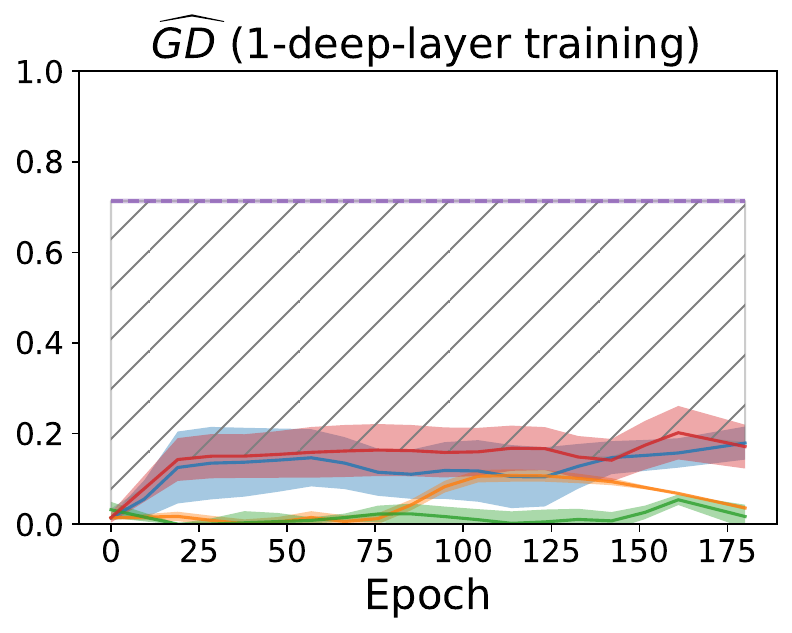}
    }

    \subfigure[\textit{Token Overlap} of ICL with GD variants.]{
    \includegraphics[width=0.28\textwidth]{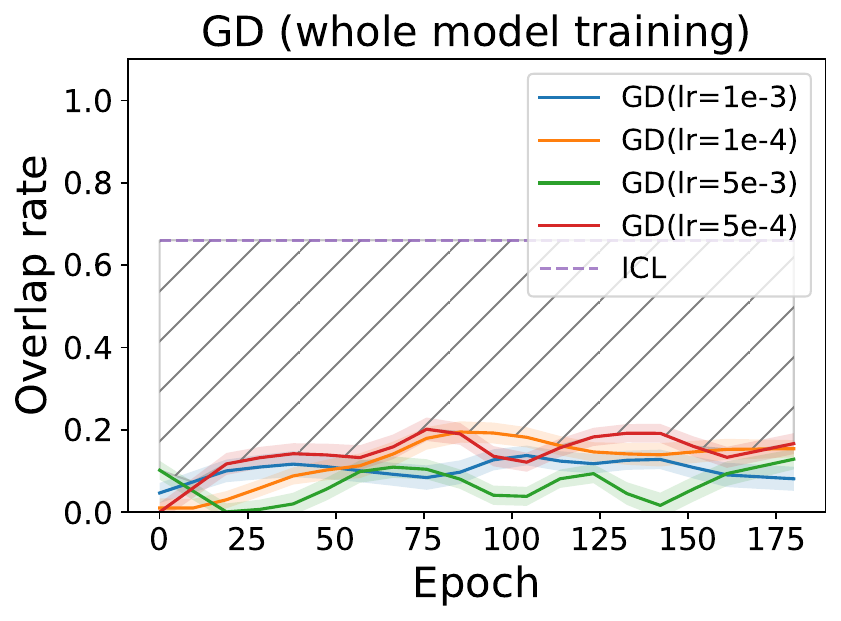}
    \includegraphics[width=0.243\textwidth,trim=1.1cm 0cm 0cm 0cm,clip=true]{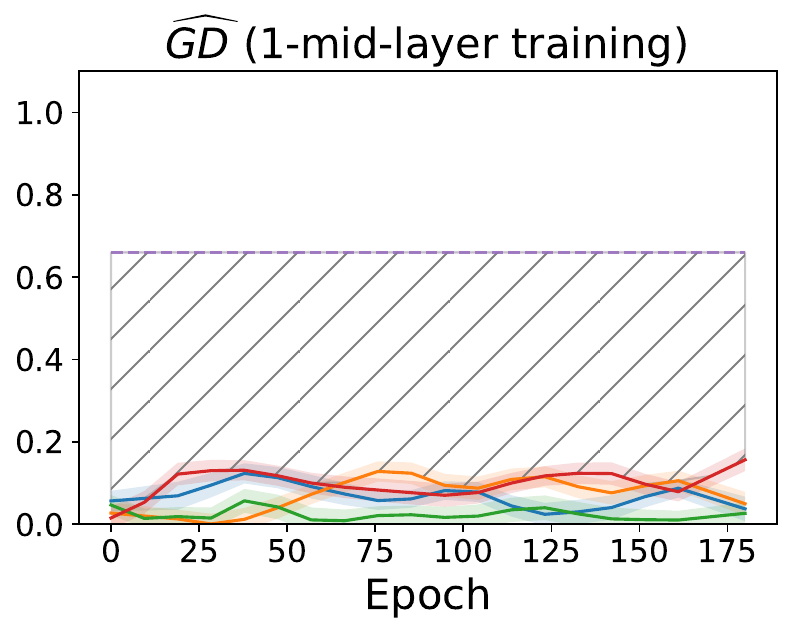}
    \includegraphics[width=0.243\textwidth,trim=1.1cm 0cm 0cm 0cm,clip=true]{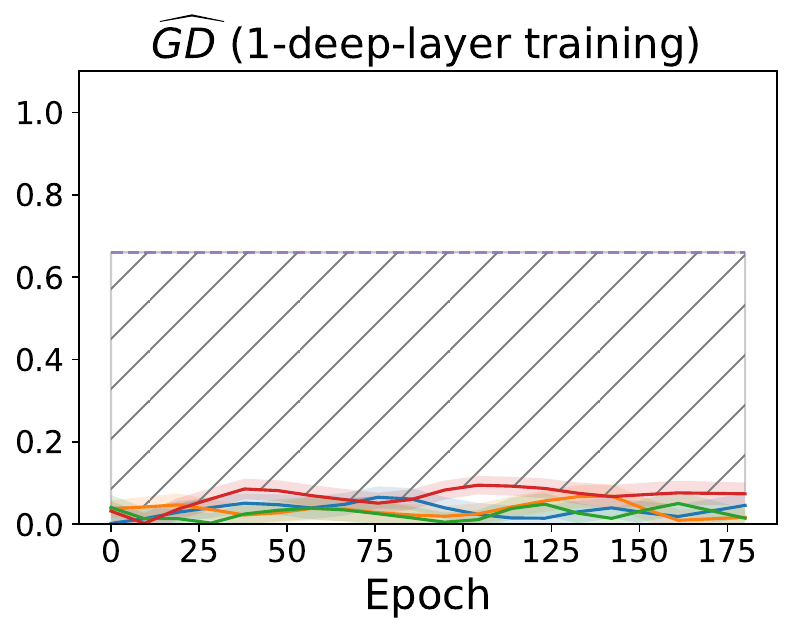}
    } 
    
    \subfigure[\textit{Overlap Cosine Similarity} of ICL with GD variants.]{
    \includegraphics[width=0.28\textwidth]{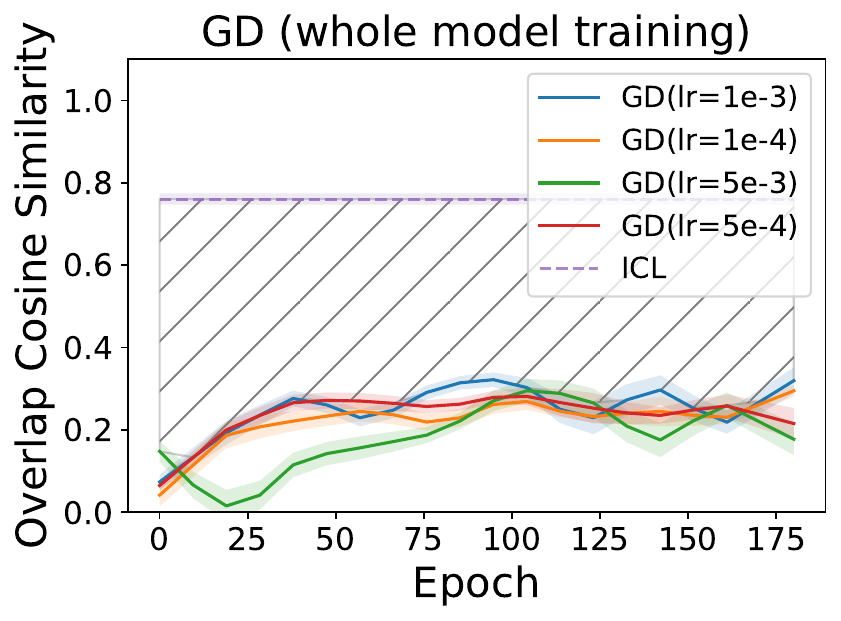}
    \includegraphics[width=0.243\textwidth,trim=1.1cm 0cm 0cm 0cm,clip=true]{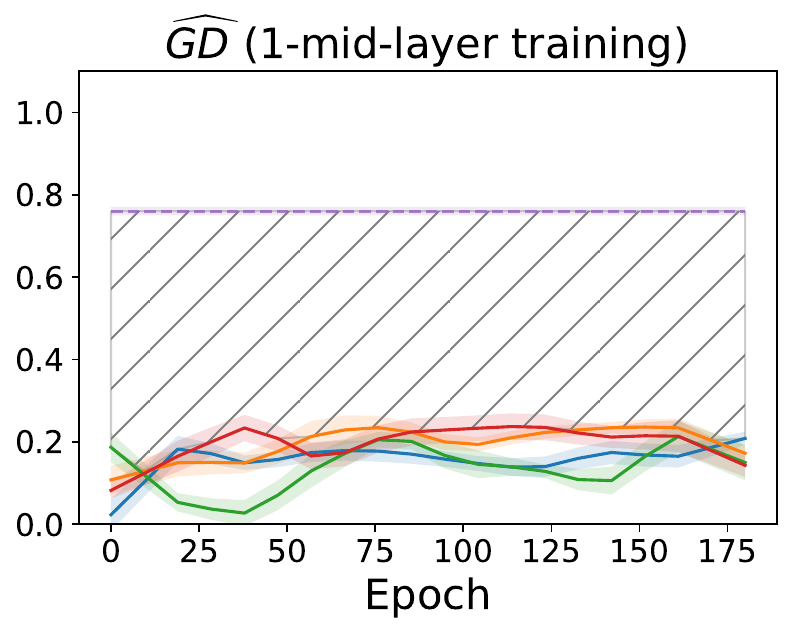}
    \includegraphics[width=0.243\textwidth,trim=1.1cm 0cm 0cm 0cm,clip=true]{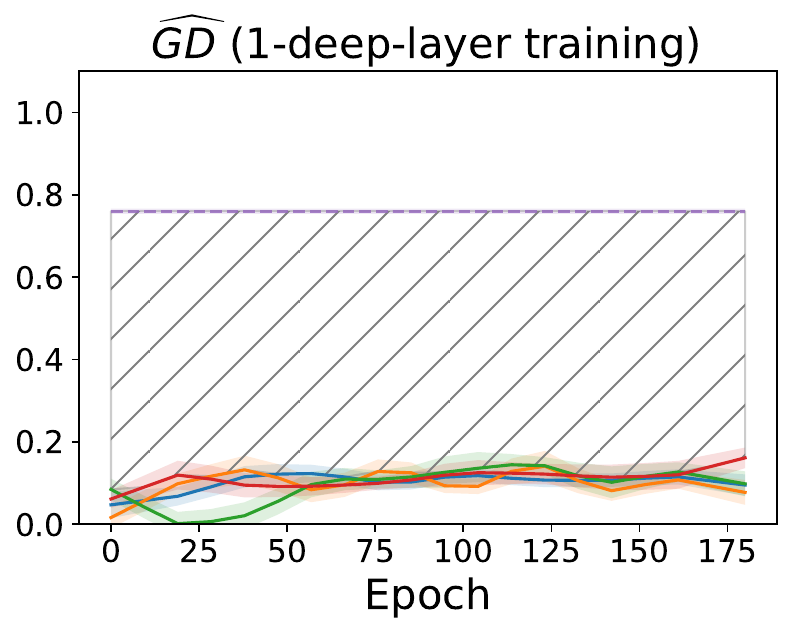}
    }
\caption{Comparison of ICL and GD/$\widehat{\text{GD}}$ on our three metrics for the AGNews dataset (with 4 ICL demos). ICL lines in \textit{Token Overlap} and \textit{Overlap Cosine Similarity} are calculated between two different ICL output distributions (with different order of demonstrations in the prompt). A substantial gap between ICL and GD is highlighted by the gray diagonal lines.}  
\label{unified_gd}
\end{figure*}

\paragraph{Evaluation metrics.}
Previous works often use standard performance metrics (accuracy and loss) based on the token with the maximum probability from \emph{label set} $\mathcal{Y}$ \citep{srivastava2023beyond,wei2021finetuned}. We argue that these metrics do not paint the whole picture. Even if two sorting algorithms reach the same result, their dynamics may differ. For this purpose, we propose to look at \textit{relative uplifting} of tokens in the output distribution. This nuanced analysis presents finer-grained information. A match/mismatch at the distributional level sheds more light on the dynamics of the algorithm. Therefore, we use the following metrics for analysis. 

\textit{Accuracy}: It is calculated using the target labels and predicted tokens with highest probability mass from the whole vocabulary $V$ (rather than just the label set $\mathcal{Y}$) as it better evaluates the model's understanding of the task. It is defined as $\frac{1}{|S_{\text{test}}|} \sum_{(x^t_i, y^t_i) \in S_{\text{test}}} \1 \{ y^t_i =\argmax M(C \circ x^t_i) \} $, where $M$ is the model, $C$ is the context and $S_{\text{test}}$ is the test set. 
    
\textit{Token Overlap}: This is a relative metric which compares two output distributions over the vocabulary $V$. These distributions could be either produced by the same model on different inputs (in case of ICL: different number of demos, order of demos, etc.) or different models on the same inputs (ICL (with context) vs GD (fine-tuned, without context)). We sort the tokens based on their probability mass for each token and select the top-$K$ tokens (denoted by $T^1_K$ and $T^2_K$). 
The token overlap is calculated as $\frac{1}{K} |T_K^1 \cap T_K^2|$.
We use $K = 10$ in our experiments (most of the probability mass typically lies in top-10 tokens).

\emph{Overlap Cosine Similarity (OCS)}: 
Token overlap evaluates each of the top-$K$ tokens with the same weight. With OCS, we measure how well the tokens agree individually. This metric is computed on the confidence distribution of top-$K$ tokens to avoid trivial values (most vocabulary tokens have low probabilities, making OCS $\approx1$). We denote the intersection of the two sets $T_K^1, T_K^2$ by $O = T_K^1 \cap T_K^2$ and use the following formula:
\begin{equation}
    \text{OCS} = \frac{\sum_{t \in O} p^1(t) \cdot p^2(t)}{\sqrt{(\sum_{t \in O}{p^1(t)^2})\cdot(\sum_{t \in O}{p^2(t)^2})\cdot(K - |O|)}}
\end{equation}
Intuitively, this quantifies the cosine distance between the overlapping tokens and assumes all the other tokens have zero overlap, therefore normalizing by $\sqrt{(K - |O|)}$ (when $K = |O|$, we divide by $\sqrt{1}$).


We evaluate every metric across three random seeds and compute the mean and std. Each random seed is used to sample demos for use in ICL experiments. The same demos are used to fine-tune models for GD/$\widehat{\text{GD}}$. Note that for \textit{Token Overlap} and \textit{Overlap Cosine Similarity}, the values for ICL are calculated between predictions made for the same set of demos but presented in a different order in the prompt.




\subsection{Results}
\label{subsec:52}

\paragraph{Gap between ICL and GD.} \autoref{unified_gd} shows our findings via plots of the three metrics, comparing ICL to various types of GD and /$\widehat{\text{GD}}$). We only show results for one dataset and one demonstration set size here. Other corresponding results are deferred to \autoref{additional} and \ref{gd_sub} due to space constraints. We see a clear gap between them ICL and all variants of GD and $\widehat{\text{GD}}$, across all three metrics, suggesting that these learning mechanisms likely work differently.

\paragraph{Comparing ICL vs. GD, ICL vs. ICL \& GD vs. GD.} In \autoref{unified_gd}, we see that the \textit{Token Overlap} as well as \textit{OCS} are consistently smaller between ICL and GD variants compared to ICL and ICL (with different demonstration order). For completeness, we conducted another experiment on AGNews where we calculated these relative metrics for different GD model checkpoints (say lr=1e-4 at epoch 20 and 1e-5 at epoch 200). Apart from the early epoch checkpoints (when most models have not changed much), most pairs had small \textit{Token Overlap} and \textit{OCS}. This shows how drastically GD based learning changes the model's behavior. With ICL--ICL comparisons, we see significantly higher values which point to a different functional behavior.

\paragraph{Why does GD perform poorly?} As a trend in most datasets and setup variations, ICL outperforms GD and improves faster with increasing size of demonstration set (please see accuracy plots in \autoref{additional} and \ref{gd_sub}). This underlines our understanding about GD which tends to overfit when trained with only few samples. For illustration, we fine-tuned the model with GD using 512 demos and saw a boost in the performance (\autoref{tab:gd_more_samples}). Note that we can not compare this setting (with many demonstrations) with ICL because of the limited context window of LLaMa. Similar to our previous arguments, this also highlights that when a model performs ICL, it does not simply utilize demos like GD, but possibly recognizes the task from the demos and uses its prior knowledge about it to make predictions \citep{pan2023incontext}.

Additional results on other datasets, with different numbers of ICL demos are deferred to \autoref{additional} (GD) and \autoref{gd_sub} ($\widehat{\text{GD}}$). We also present other results about the impact of model size in \autoref{app:model_size}.




\begin{table}[]
\caption{Performance of GD (accuracy) increases with more samples, as expected. 
GD with many more demos obtains comparable performance to ICL with fewer demos, highlighting yet another empirical discrepancy.  
}
\label{tab:gd_more_samples}
\small
\centering
\resizebox{0.25\textwidth}{!}{%
\begin{tabular}{
>{\columncolor[HTML]{EFEFEF}}c 
>{\columncolor[HTML]{FFFDFA}}c 
>{\columncolor[HTML]{FFFDFA}}c }
\toprule
\cellcolor[HTML]{EFEFEF}               & \multicolumn{2}{c}{\cellcolor[HTML]{EFEFEF}\textbf{Demos}}                                        \\ \cline{2-3} 
\multirow{-2}{*}{\cellcolor[HTML]{EFEFEF}\textbf{Dataset}} & \multicolumn{1}{c}{\cellcolor[HTML]{EFEFEF}\textbf{8}} & \cellcolor[HTML]{EFEFEF}\textbf{512} \\ 
\midrule
{\color[HTML]{333333} \textbf{AGNews}} & \multicolumn{1}{c}{\cellcolor[HTML]{FFFDFA}{\color[HTML]{333333} 0.42}} & {\color[HTML]{333333} 0.69} \\ 
{\color[HTML]{333333} \textbf{CB}}     & \multicolumn{1}{c}{\cellcolor[HTML]{FFFDFA}{\color[HTML]{333333} 0.39}} & {\color[HTML]{333333} 0.72} \\ 
{\color[HTML]{333333} \textbf{SST-2}}  & \multicolumn{1}{c}{\cellcolor[HTML]{FFFDFA}{\color[HTML]{333333} 0.49}} & {\color[HTML]{333333} 0.75} \\ 
{\color[HTML]{333333} \textbf{RTE}}    & \multicolumn{1}{c}{\cellcolor[HTML]{FFFDFA}{\color[HTML]{333333} 0.36}} & {\color[HTML]{333333} 0.65} \\ 
\bottomrule
\end{tabular}%
}
\end{table}
\section{Related Work}
\label{sec:related}

\paragraph{Functional explanations.} 
Many works offer functional explanations of ICL~\citep{liu2022Transformers,olsson2022context,schlag2021linear}. 
Among these, explanations via GD~\cite{garg2022can,zhang2023trained,ahn2023Transformers} are most pertinent to our work. Notably, 
\citet{akyurek2022learning} showed that Transformers can implement learning algorithms (gradient descent or closed-form OLS) for linear regression problems and empirically showed that the optimality of algorithms implemented experience a \textit{phase shift} with increasing model size. \citet{raventos2023pretraining} discovered similar results about algorithm discovery and phase shifts with increasing task diversity. \citet{dai2022can} similarly showed a dual between attention layers and linear layers optimized using gradient descent. \citet{li2023closeness} showed such an equivalence on softmax regression tasks. 
Finally, \citet{von2023Transformers} showed a similar construction with a simpler Linear Self-Attention Transformer, claiming that Transformers learn in-context using gradient descent on linear regression problems. Notably, \citet{akyurek2022learning} found this GD behavior applicable only in small models, with bigger models exhibiting Bayes optimal learning behavior (like Ordinary Least Squares for linear regression). In contrast, \citet{von2023Transformers} claimed that bigger Transformers also implement GD with added data transformations. 

Most of this line of work shows how Transformers have the ability to implement such algorithms resulting from training on ICL objectives (\textbf{\cref{hypothesis2}}) and not that real-world models pretrained on natural data develop this ability (\textbf{\cref{hypothesis1}}).

\paragraph{Distributional explanations.} This body of work explains ICL via distributional frameworks and the relevant properties of LLMs~\citep{xie2021explanation,wies2023learnability}. 
\citet{xie2021explanation} explained ICL as implicit Bayesian inference, which implicitly maps a given set of demonstrations to an appropriate latent concept (task) learned via pretraining on a massive unsupervised corpus.  
Similarly, \citet{hahn2023theory} theorized that natural language pretraining data consists of compositional structure, which leads to the emergent ability of in-context learning, while \citet{chan2022data} showed that this might be because of distributional properties of the training distribution (like burstiness). 
These are all reasonable explanations of how ICL works, although they are somewhat tangential to the focus of this study.

\paragraph{Empirical studies.}
Various empirical works study ICL under various settings~\citep{brown2020language,zhao2021calibrate,min2022rethinking,mishra2022reframing,han2023understanding,wang2023selfinstruct}. 
To note a few, \citet{srivastava2023beyond} famously benchmarked ICL for many tasks and models. 
\citet{perez2021true,Lu2022FantasticallyOP} showed the sensitivity of ICL to the choice of demonstrations and their orderings. 
\citet{shin2022effect,razeghi2022impact} showed the sensitivity of ICL performance to the frequency and size of the relevant pretraining corpus. 
\citet{shen2023flatnessaware} treat the ICL prompt selection as an optimization problem. \citet{pan2023incontext} disentangle task recognition and task learning in ICL, which is analyzed in theory recently by \citet{lin2024dual}. These works highlight numerous ways the ability of models to perform ICL changes under different conditions but do not attempt to explain how it functions.

\section{Discussion and Conclusion}


This work intends to clarify the distinction between naturally emergent ICL (commonly seen in LLMs pretrained on natural text data); \cref{hypothesis1}) vs. 
task-specific ICL as a result of training Transformers for ICL (\cref{hypothesis2}). 
While recent work has shown that Transformers have the \textit{\textbf{expressive capacity}} to simulate gradient-descent in their forward pass, 
this does \textit{\textbf{not}} immediately imply that real-world models \textit{\textbf{actually do}} simulate it. 
We hope this work motivates alternative approaches that reveal the true nature of in-context learning in pretrained LLMs.

We recognize that \cref{hypothesis1} establishing a universal equivalence between ICL and GD may be too strong. A more reasonable hypothesis might involve certain restrictions, such as the target task's distributional properties or the number of demonstrations. However, the specifics of such conditions are unclear, so we have opted for a general statement.

Besides using in-context demonstrations, recent work has also discovered other ways in which in-context prompts 
enhance the performance of LLMs. 
For example, appending prompts like ``\textit{Think step by step}'' \citep{kojima2022large} or ``\textit{Take a deep breath and think}'' \citep{yang2023large} before asking a task-specific question has been shown
to improve zero-shot performance of LLMs.  
Such evidence may suggest that an optimization algorithm like GD cannot fully describe the ability of ICL. 
Understanding ICL dynamics requires a more holistic theory which considers the various nuances of this remarkable learning paradigm.


\section{Limitations and Future Opportunities}

Because of its computationally infeasible nature, we were not able to do an exhaustive search over all sub-models and pinpoint which subset of parameters could correspond to sub-models that could get updated in $\widehat{\text{GD}}$. This could be an interesting avenue of research. Moreover, we do not provide alternate explanations of how ICL works functionally. As ICL is hard to study directly in LLMs, it is natural to turn to simpler settings. But it is imperative that we keep the setups analogous so that inferences from one can be extended to the other.

\section*{Impact Statement}

It is evident that LLMs and their remarkable ability to learn in context have far-reaching impacts in various applications. Understanding the nuances of ICL and its exact functional behavior will uncover the true strengths and limits of LLMs, which is essential to use them reliably. 
A growing line of research shows theoretical expressivity of transformers to simulate gradient descent by training them on ICL objectives. But it is important to differentiate this from the natural ICL that emerges in language models, so that progress towards understanding its true nature is made in the right direction.

\section*{Acknowledgements}

This work is supported in part by ONR grant   N00014-24-1-2089, and generous gifts from Amazon and the Allen Institute for AI. 
We are grateful to the anonymous reviewers for constructive feedback for improving this work. 
We also thank Anqi Liu, Jason Eisner, Holden Lee, Tianjian Li and the anonymous reviewers for their insightful discussions. 
GPU machines for conducting experiments were provided by ARCH Rockfish cluster at Johns Hopkins University (\url{https://www.arch.jhu.edu}).

\bibliography{ref}
\bibliographystyle{icml2024}

\newpage
\appendix
\onecolumn
\section*{\LARGE{Supplementary Material}}

\section{Order sensitivity of ICL and GD-based algorithms}\label{setup:order}
We present empirical evidence highlighting the distinct sensitivities of GD-based algorithms and ICL with respect to data order. Specifically, we assess the variation in confidence assigned to vocabulary $V$ by the model across different data orderings. 

\paragraph{Experimental setup} We evaluate the order sensitivity of GD-based algorithms using the GD, SGD, and Adam optimizers. The chosen learning rates are 1e-4, 1e-5, 5e-4, and 5e-5. Our experiments are conducted on the AGNews dataset using the LLaMa-7B model. We set the number of demonstrations to 8. GD training continues for 200 epochs to avoid issues of non-convergence,  but is evaluated at every 20 epochs. The number $N$ of random orders $\left\{\sigma_i\right\}_{i=1}^N$ is set as 10 (as the total number of orders are combinatorial).

\paragraph{Evaluation metric ($\text{Sen}$)}\label{sen}
As for the evaluation metric of sensitivity ($\text{Sen}$), it is defined as follows: Given a set of confidence vectors $\left\{p_i\right\}_{i=1}^N$ resulting from distinct data orders $\left\{\sigma_i\right\}_{i=1}^N$, we calculate the standard deviation for each dimensionality within $V$ using the samples $\left\{p_i\right\}_{i=1}^N$. Subsequently, the variances for individual tokens are aggregated.

\paragraph{Results}
In \autoref{fig:sensitivity2}, we presented a high level overview of our findings. In \autoref{fig:order1}, we present it in detail. First, ICL exhibits a much more pronounced data order sensitivity than the three GD-based algorithms. Second, as GD training progresses, its sensitivity diminishes. And third, this happens with both GD and $\widehat{\text{GD}}$. Overall, these findings underscore distinct behaviors of ICL and GD-based algorithms with respect to data order. This suggests a disparity between ICL and GD, as shown in \autoref{theorem:order}.

\textbf{Ablation results}\label{extra_order}

\textit{Batch size}: In \autoref{fig:order2}, we show that a similar trend is seen when we ablate the batch size. 

\textit{Model}: This difference in order sensitivity is not restricted to the LLaMa model. In \autoref{fig:order3}, we show an experiment with the AGNews dataset, where the order sensitivity of ICL is similarly higher than GD variants for other LLMs (like Qwen-7B and GPT-J).

\begin{figure}[H]
\centering
    \subfigure[Order sensitivity of ICL and \text{GD} when batchsize = 1]{
    \includegraphics[width=0.32\textwidth]{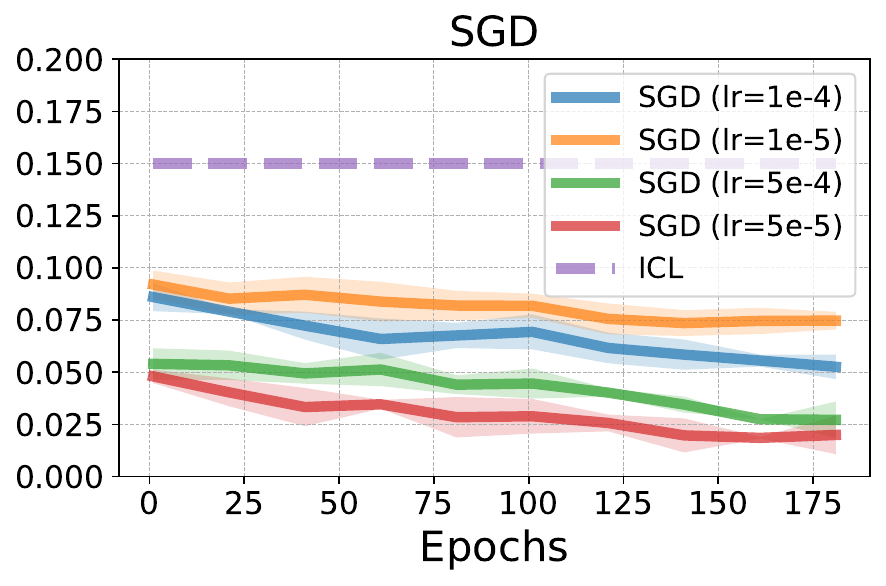}
    \includegraphics[width=0.32\textwidth]{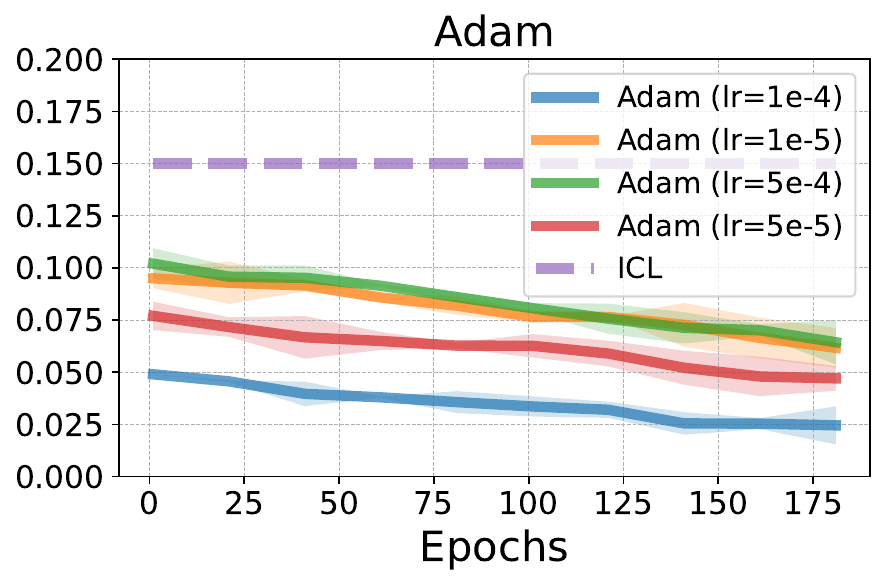}
    }
    \subfigure[Order sensitivity of ICL and $\widehat{\text{GD}}$ when batchsize = 4]{
    \includegraphics[width=0.32\textwidth]{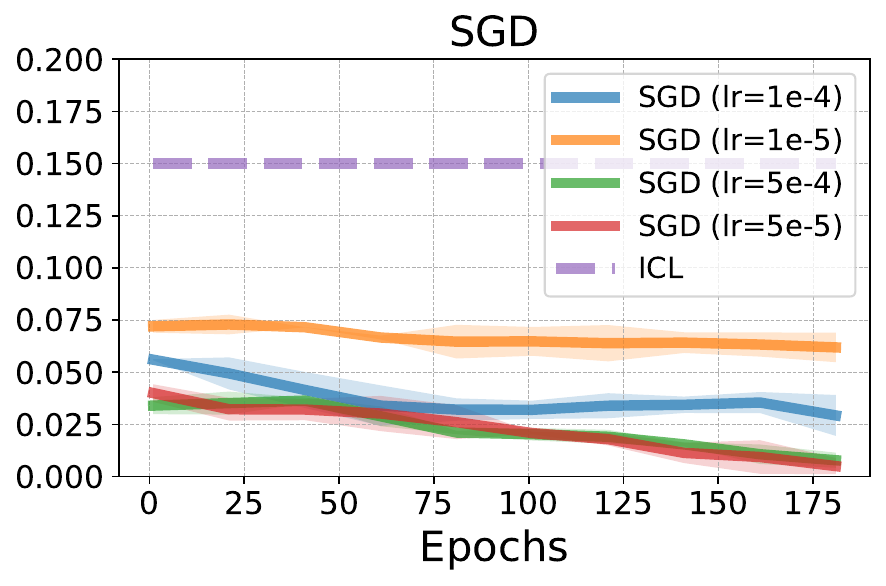}
    \includegraphics[width=0.32\textwidth]{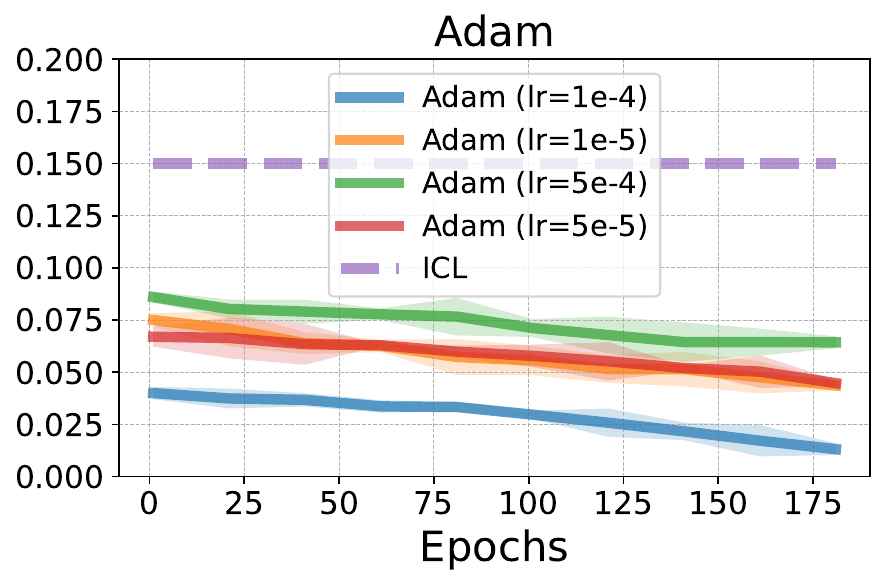}
    }
\vspace{-3.0mm}
\caption{The order sensitivity (y-axis represents $\text{Sen}$ (\cref{sen})) of ICL and GD (SGD and Adam) as the batchsize changes.}  
\label{fig:order1}
\end{figure}

\begin{figure}[H]
\centering
    \subfigure[Order sensitivity of ICL and $\widehat{\text{GD}}$ (SGD)]{
    \includegraphics[width=0.32\textwidth]{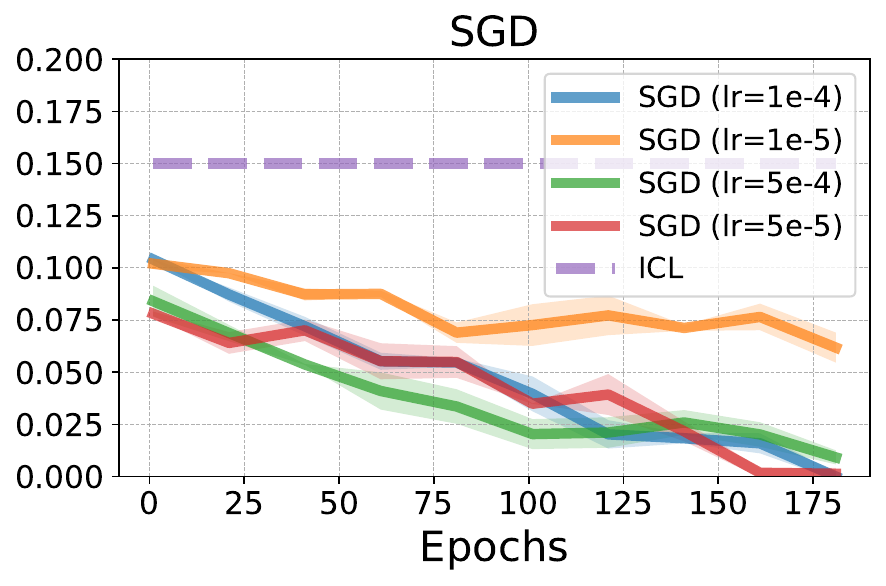}
    \includegraphics[width=0.32\textwidth]{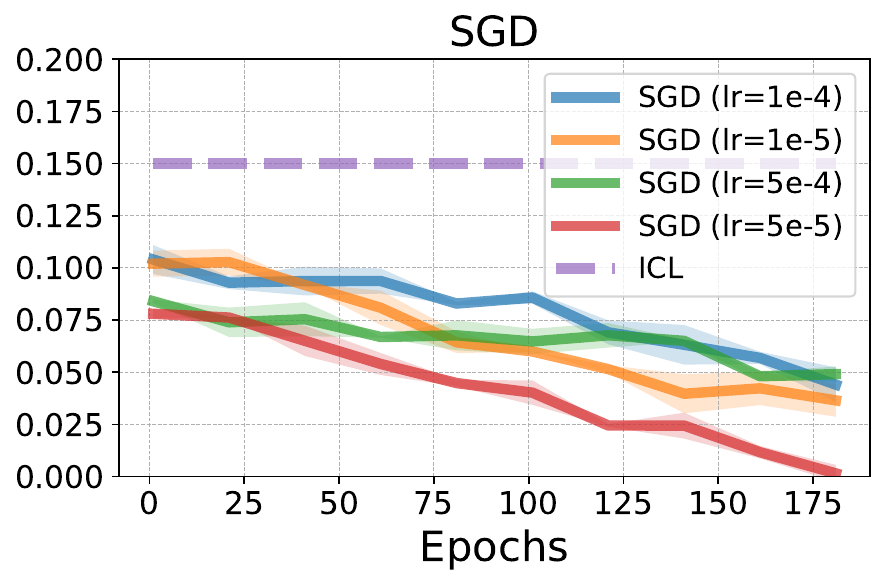}
    \includegraphics[width=0.32\textwidth]{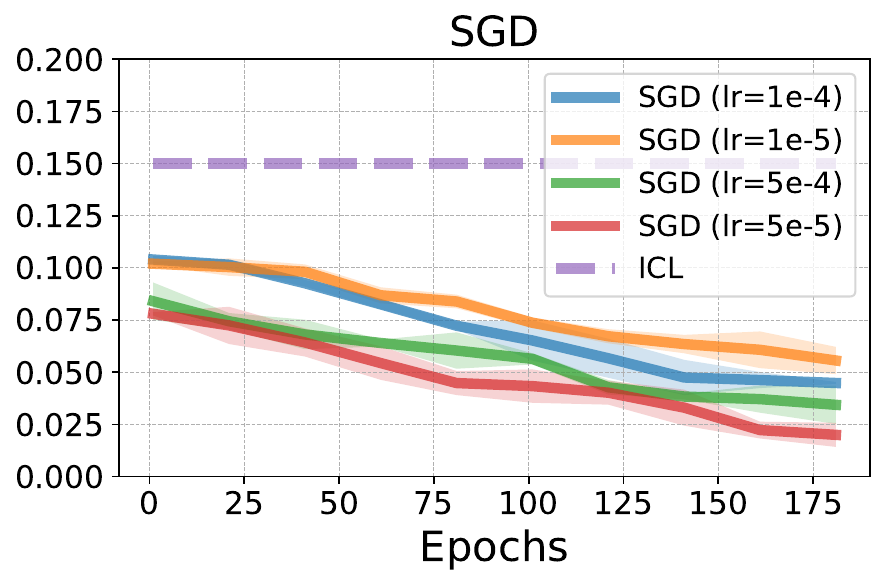}
    }
    \subfigure[Order sensitivity of ICL and $\widehat{\text{GD}}$ (Adam)]{
    \includegraphics[width=0.32\textwidth]{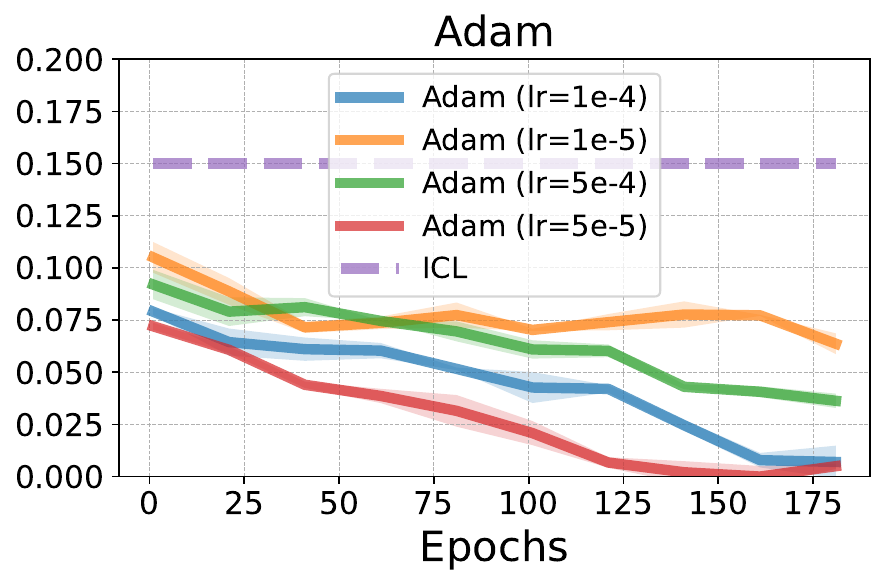}
    \includegraphics[width=0.32\textwidth]{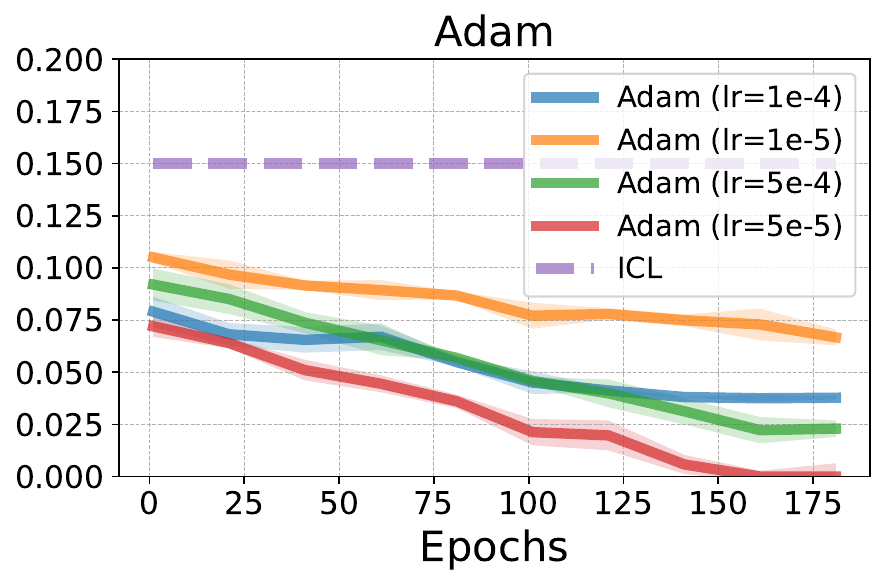}
    \includegraphics[width=0.32\textwidth]{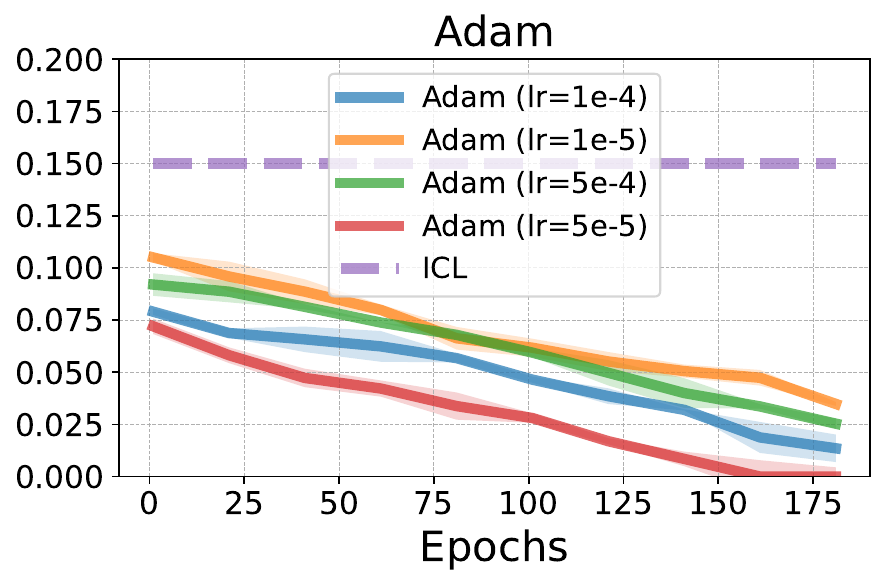}
    }
\vspace{-3.0mm}
\caption{The order sensitivity (y-axis represents $\text{Sen}$ (\cref{sen})) of ICL and $\widehat{\text{GD}}$ (SGD and Adam) as the batchsize changes. From left to right, three figures refer to cases bs=1, 2, 4.}  
\label{fig:order2}
\end{figure}

\begin{figure}[H]
\centering
    \subfigure[Order sensitivity of ICL and ${\text{GD}}$]{
    \includegraphics[width=0.32\textwidth]{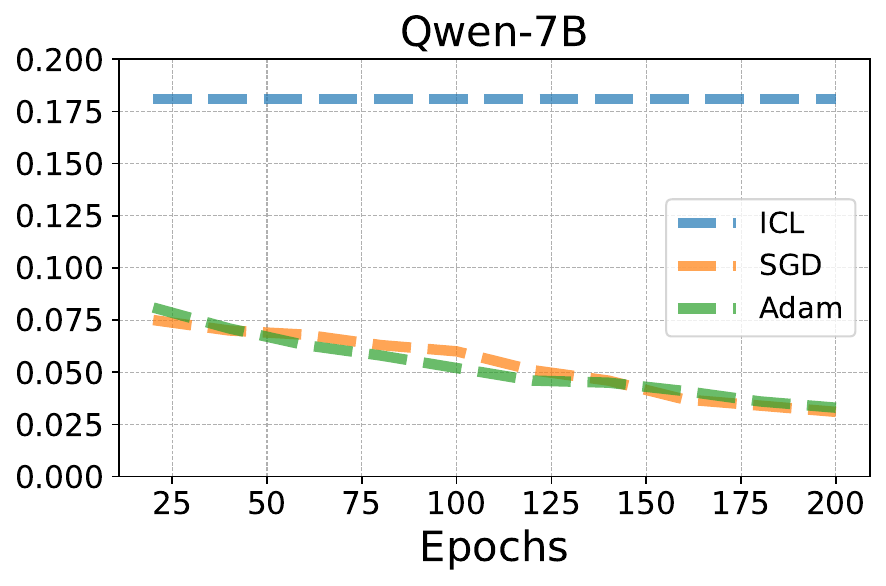}
    \includegraphics[width=0.32\textwidth]{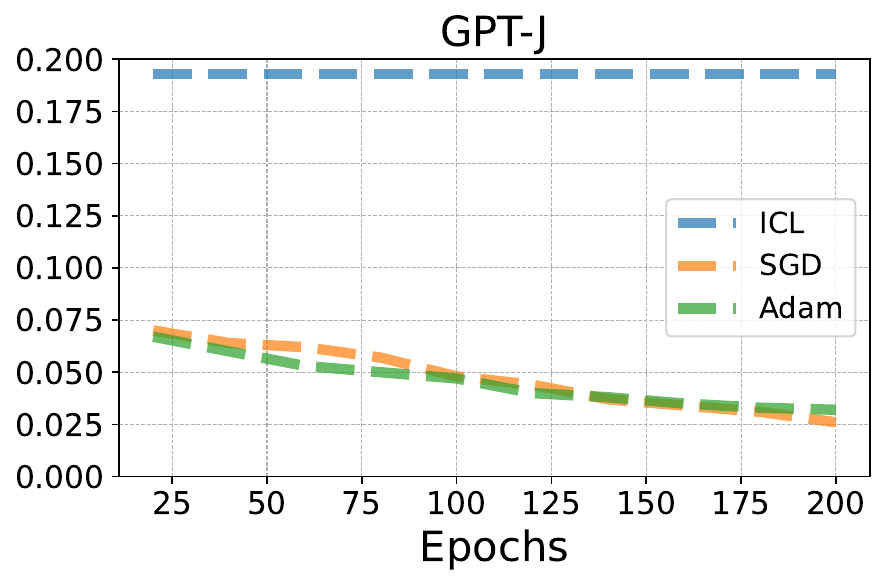}}
\vspace{-3.0mm}
\caption{The order sensitivity (y-axis represents $\text{Sen}$ (\cref{sen})) of ICL and ${\text{GD}}$ (SGD and Adam) on Qwen-7B and GPT-J. The dataset is AGNews, and the batchsize is set as 4.}  
\label{fig:order3}
\end{figure}

\newpage

\section{How does ICL evolve during training?}\label{setup:evolve}
\paragraph{Experimental setup.}
We chose intermediate checkpoints from GPT-J, ranging from 310k to 380k pretraining steps. Using these varied pretraining steps, our approach simulates the fine-tuning process. Specifically, we focus on two metrics to quantify the magnitude of fine-tuning: 
(1) Step Gap: This represents the difference in pretraining steps between selected checkpoints.
(2) Parameter Gap: In line with the assumptions made by Oswald et al. \citep{von2023Transformers}, we compute the average differences for each parameter within the $W_K$, $W_Q$, and $W_V$ matrices across different checkpoints.
To evaluate the ICL capacity of the models, we conducted tests on AGNews, SST-2, CB, and RTE using eight demonstrations.

\paragraph{Results.}
The results are shown in \autoref{fig:perturb2}, from where we can observe that there is no significant gap between ICL capacity of different checkpoints, indicating that continued fine-tuning (pretraining) will not substantially hurt the ICL performance.
\begin{figure}[H]
\centering
    \subfigure[AGNews]{\includegraphics[width=0.48\textwidth]{perturbation/steps_0.pdf}}
    \subfigure[SST-2]{\includegraphics[width=0.48\textwidth]{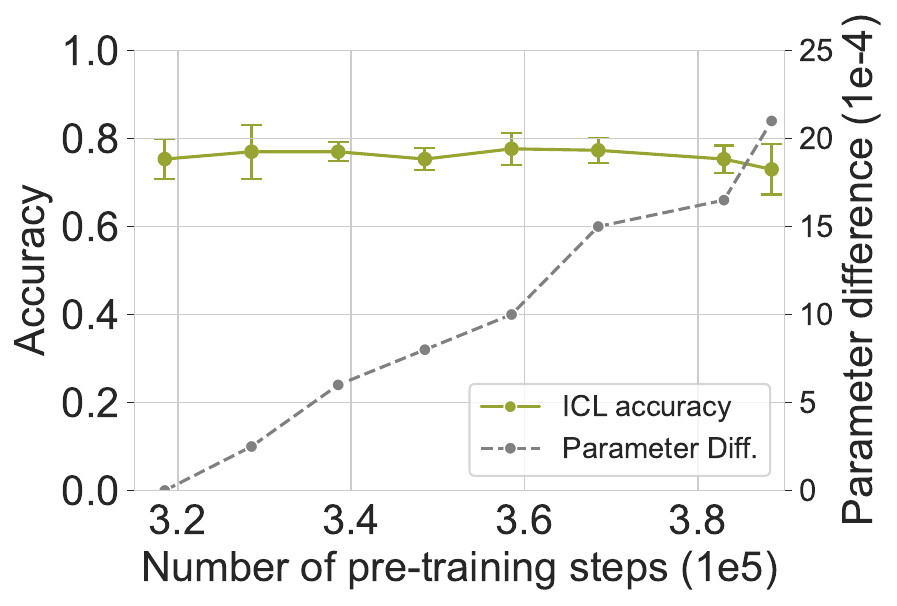}}
    \subfigure[CB]{\includegraphics[width=0.48\textwidth]{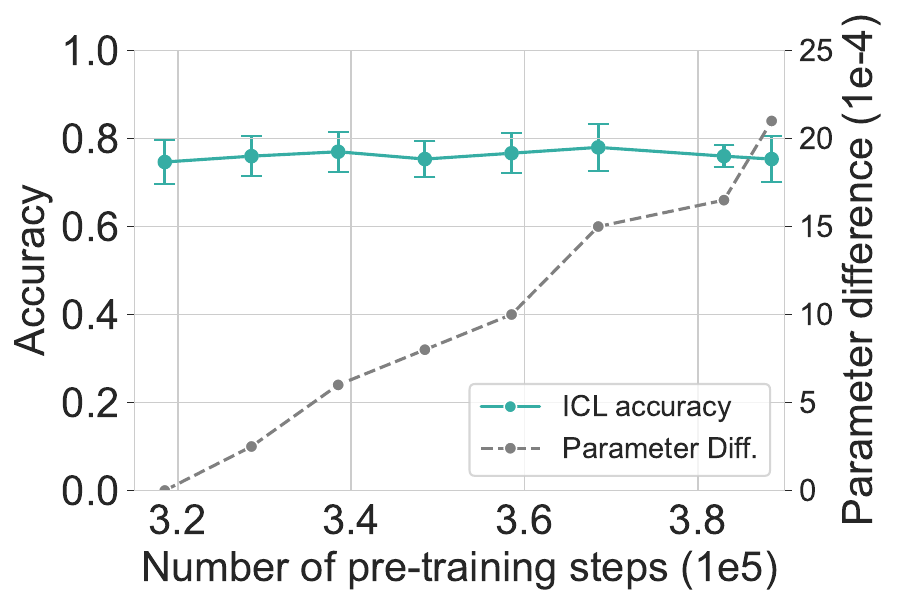}}
    \subfigure[RTE]{\includegraphics[width=0.48\textwidth]{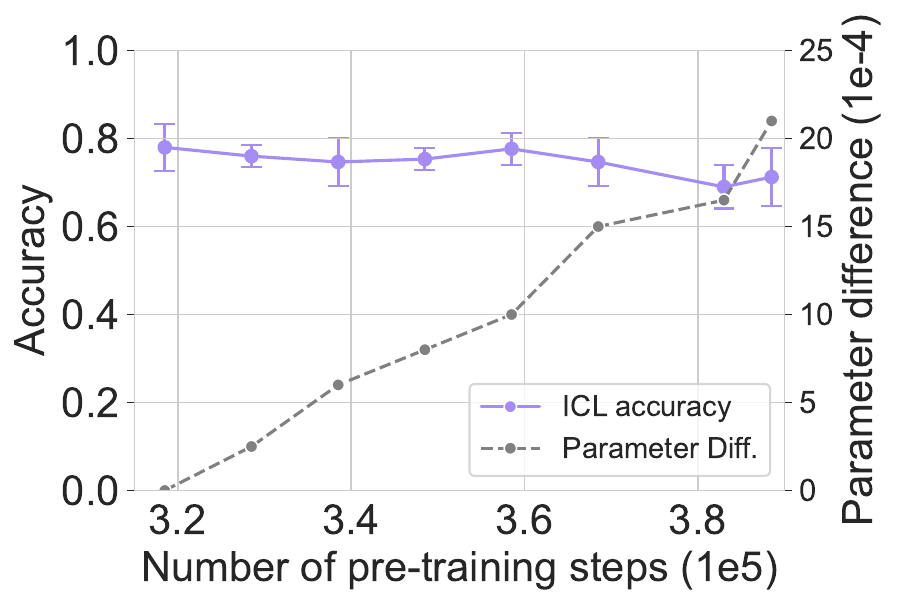}}
\vspace{-3mm}
\caption{The ability of GPT-J to perform ICL does not change much over a time cross-section of training while the parameters change steadily.}  
\label{fig:perturb2}
\end{figure}

\clearpage

\section{Layer-wise sparsity rate of LLMs}\label{sparse_rate}
We show the sparsity ratio of each layer of LLMs. Specifically, in our paper, we have used LLaMa-7B and GPT-J are main experiments, so we show their sparsity rate of $W_{K}$, $W_{Q}$, and $W_{V}$ in each layer. The results are shown in \autoref{fig:sparse}. It is interesting that although $W_{K}$ and $W_{Q}$ have almost constant sparsity in all layers, $W_{V}$ has slightly decaying sparsity.

\begin{figure}[H]
\centering
    \subfigure[Sparsity ratio of LLaMa-7B]{
    \includegraphics[width=0.32\textwidth]{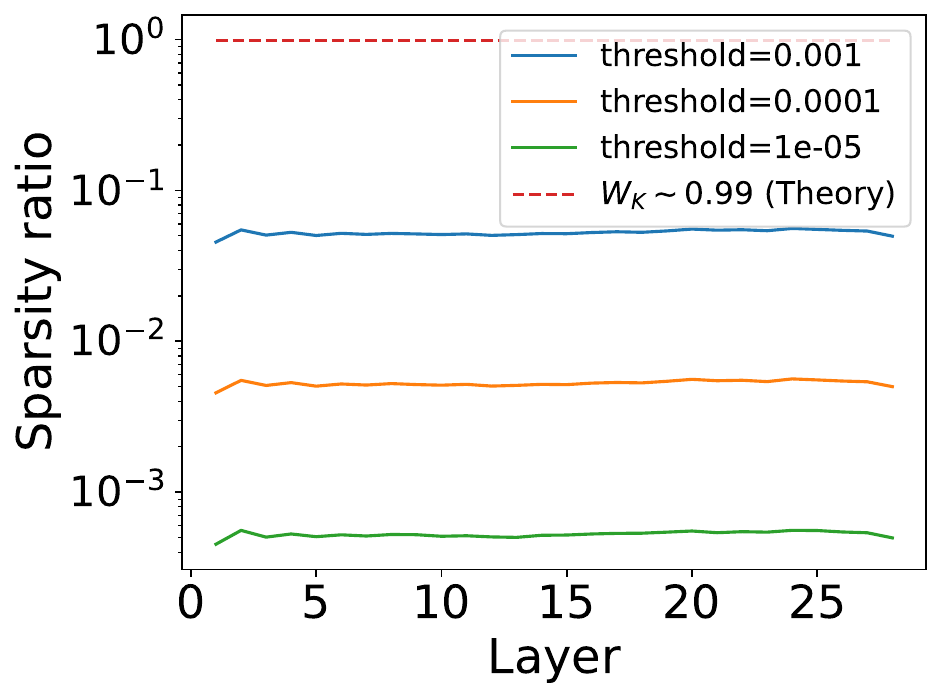}
    \includegraphics[width=0.32\textwidth]{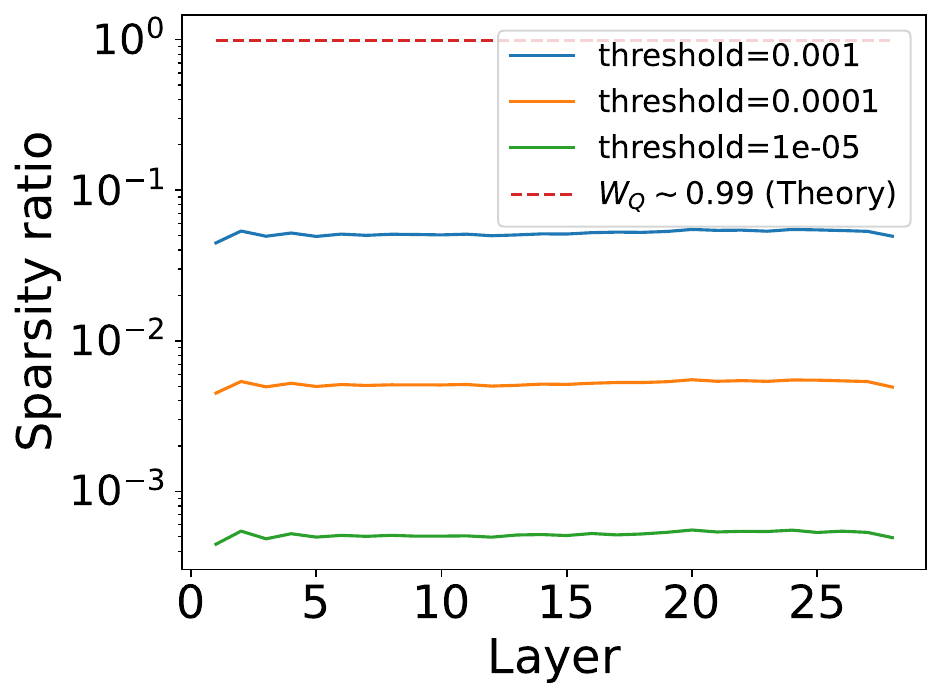}
    \includegraphics[width=0.32\textwidth]{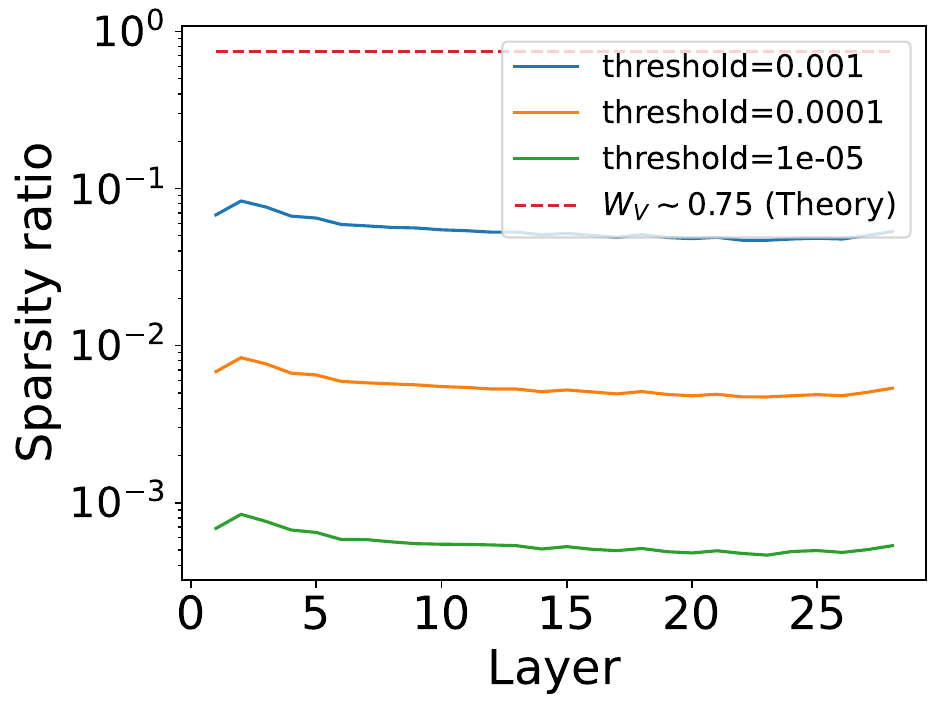}
    }
    \subfigure[Sparsity ratio of GPT-J]{
    \includegraphics[width=0.32\textwidth]{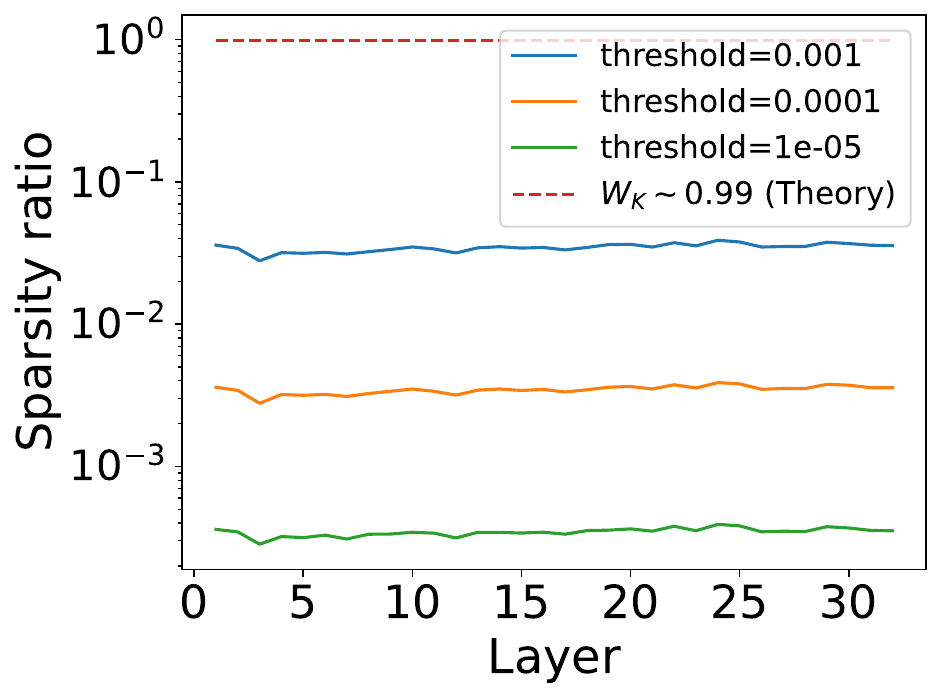}
    \includegraphics[width=0.32\textwidth]{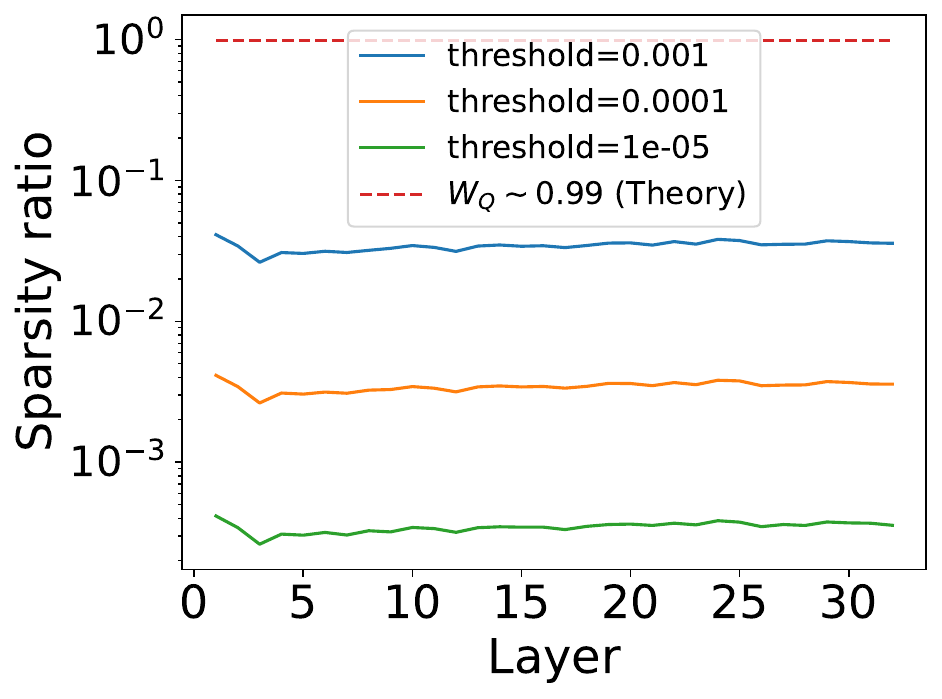}
    \includegraphics[width=0.32\textwidth]{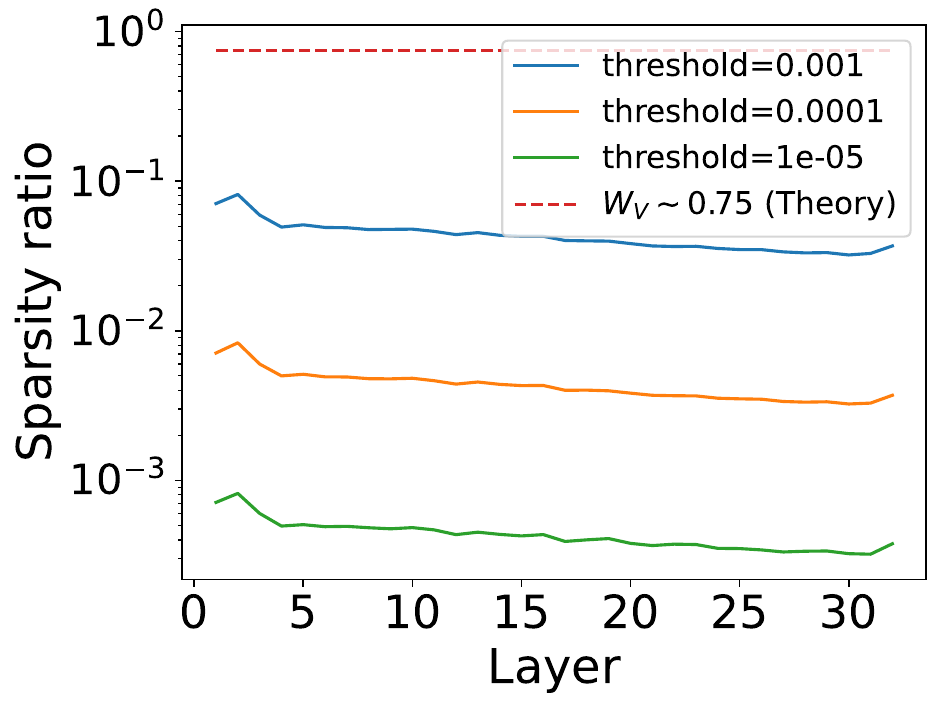}
    }
\vspace{-3.0mm}
\caption{The sparse ratio of LLaMa-7B and GPT-J in each layer. From left to right, three figures represent the cases of $W_{K}$, $W_{Q}$, and $W_{V}$.}  
\label{fig:sparse}
\end{figure}

\clearpage

\section{Additional results on ICL vs GD comparisons}\label{additional}
Here we present all comprehensive plots on ICL vs GD on AGNews and other datasets.

\paragraph{The case of $N=1$.} We see an almost similar accuracy between ICL and one GD variant in all datasets, which is an interesting finding. There are several reasons why this does not directly imply ICL$\approx$GD:
\begin{enumerate} [leftmargin=5mm,topsep=0pt,itemsep=0pt]
    \item There are different GD variants that correspond to the ICL performance in each dataset. This implies the absence of a standard GD-like algorithm that would work on all problems.
    \item Other nuanced metrics show that there is a stark difference in the output distributions of ICL and all GD variants.
    \item The jump in performance from $N=1$ to $N=2$ is typically much more pronounced for ICL than GD. This hints at differences in their functional behavior. 
\end{enumerate}

\begin{figure}[H]
\centering
    \subfigure[\textit{Accuracy} comparison]{
    \includegraphics[width=0.25\textwidth]{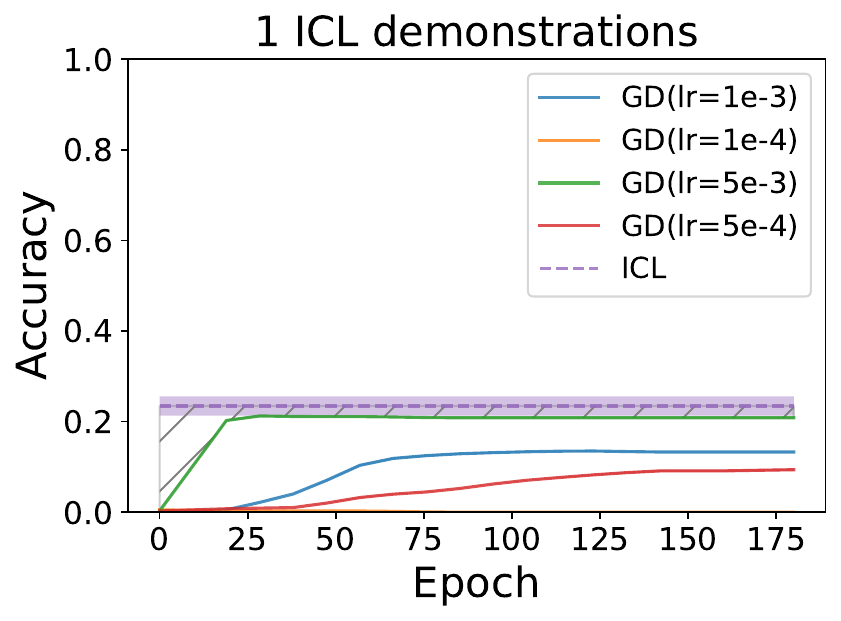}
    \includegraphics[width=0.24\textwidth]{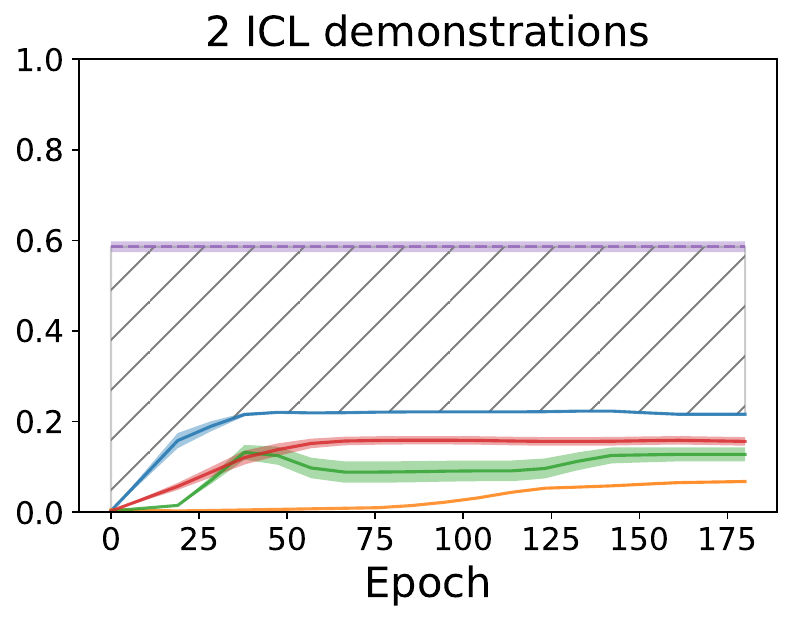}
    \includegraphics[width=0.24\textwidth]{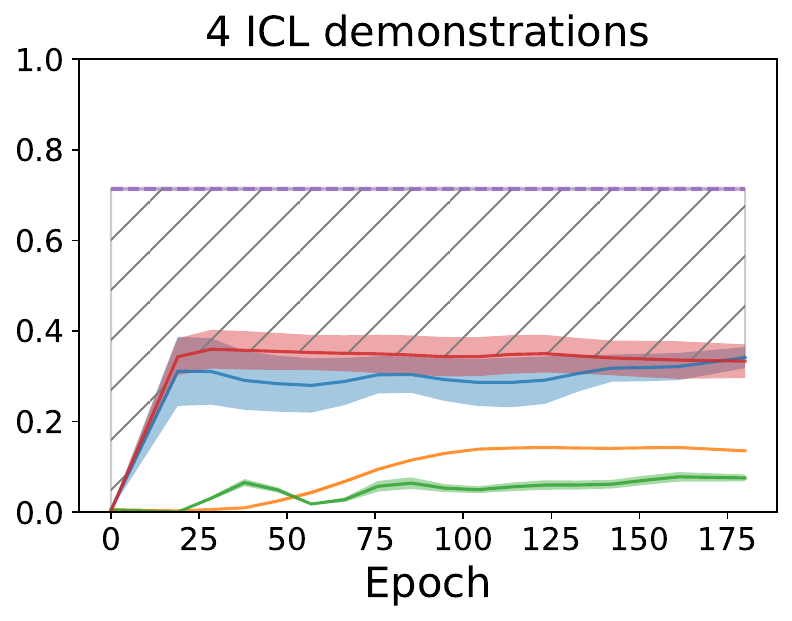}
    \includegraphics[width=0.24\textwidth]{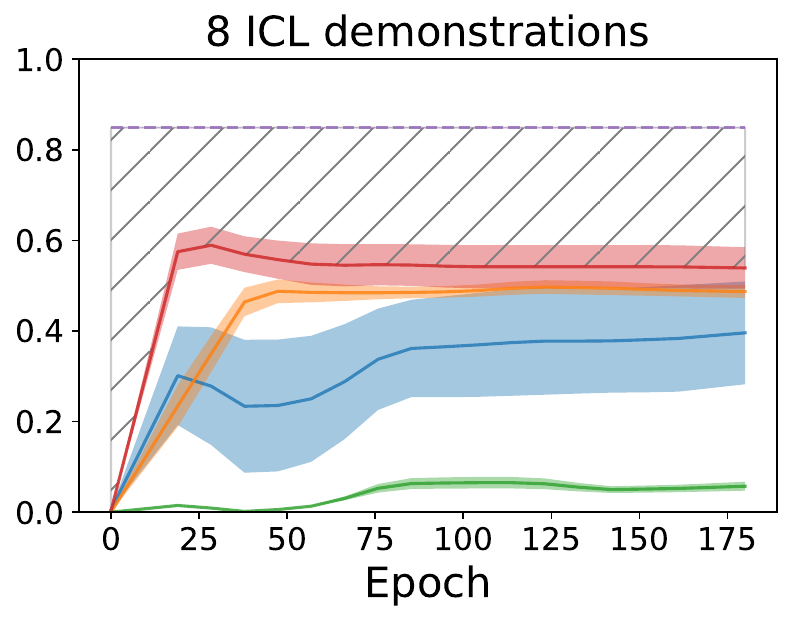}
    }
    \subfigure[\textit{Token overlap} comparison]{
    \includegraphics[width=0.25\textwidth]{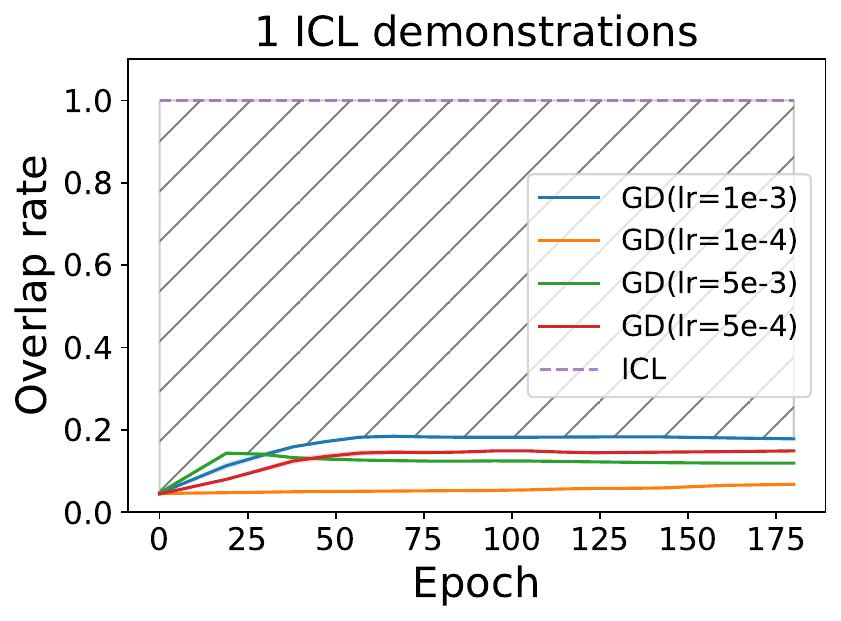}
    \includegraphics[width=0.24\textwidth]{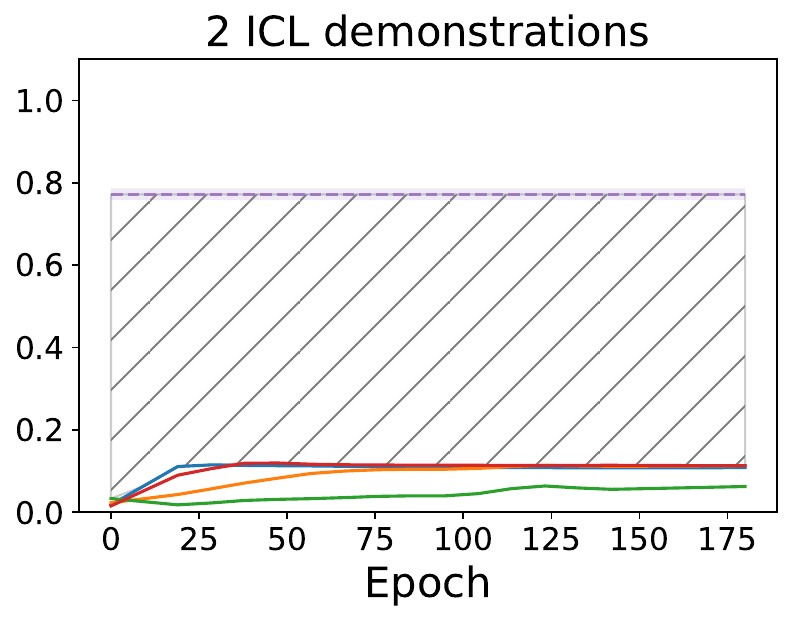}
    \includegraphics[width=0.24\textwidth]{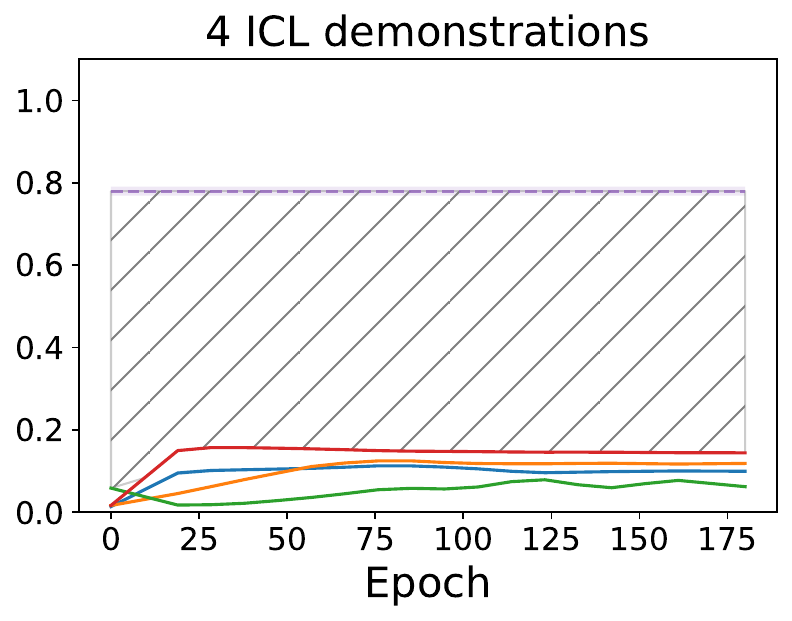}
    \includegraphics[width=0.24\textwidth]{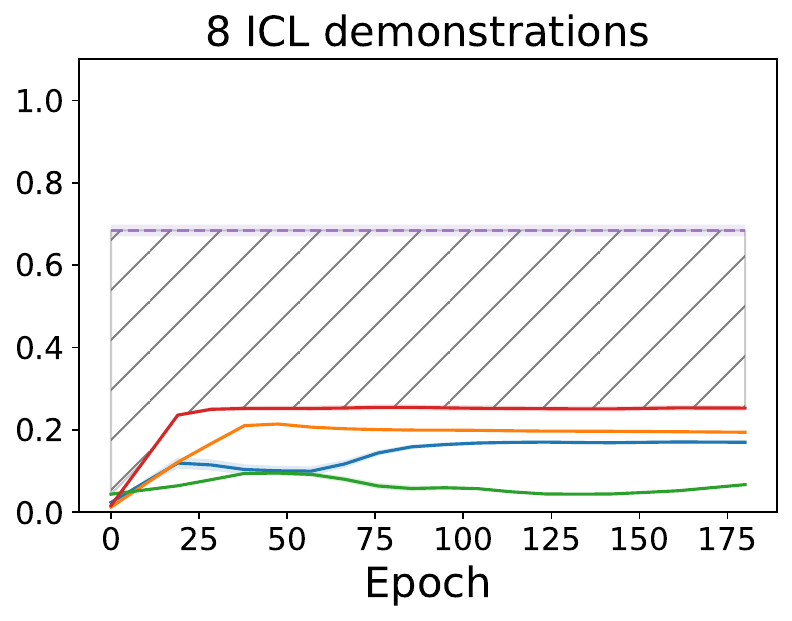}
    } 
    \subfigure[\textit{Overlap Cosine Similarity} comparison]{
    \includegraphics[width=0.25\textwidth]{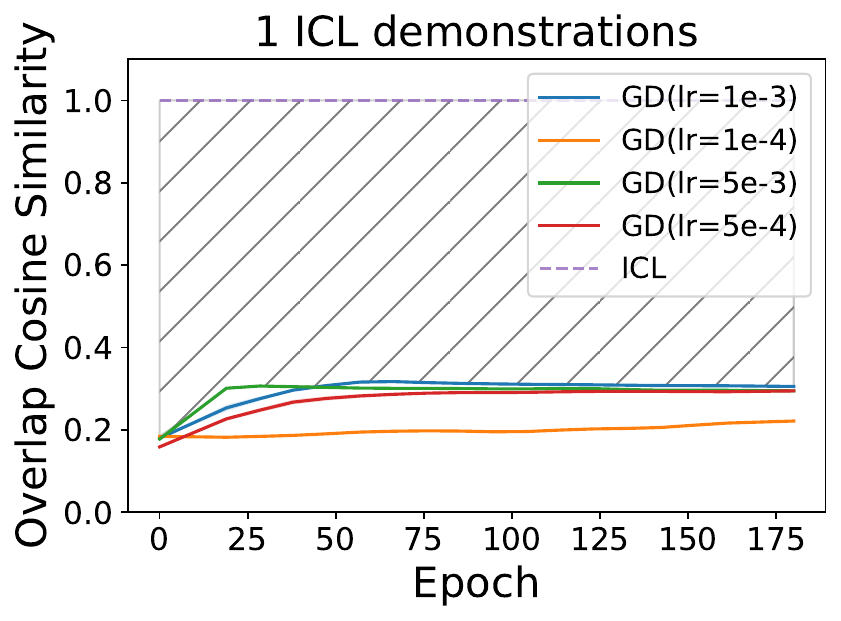}
    \includegraphics[width=0.24\textwidth]{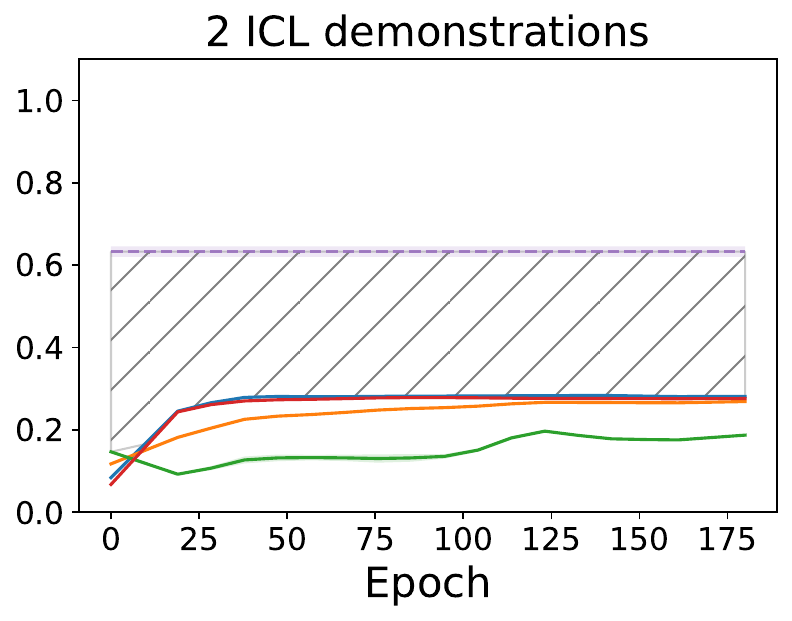}
    \includegraphics[width=0.24\textwidth]{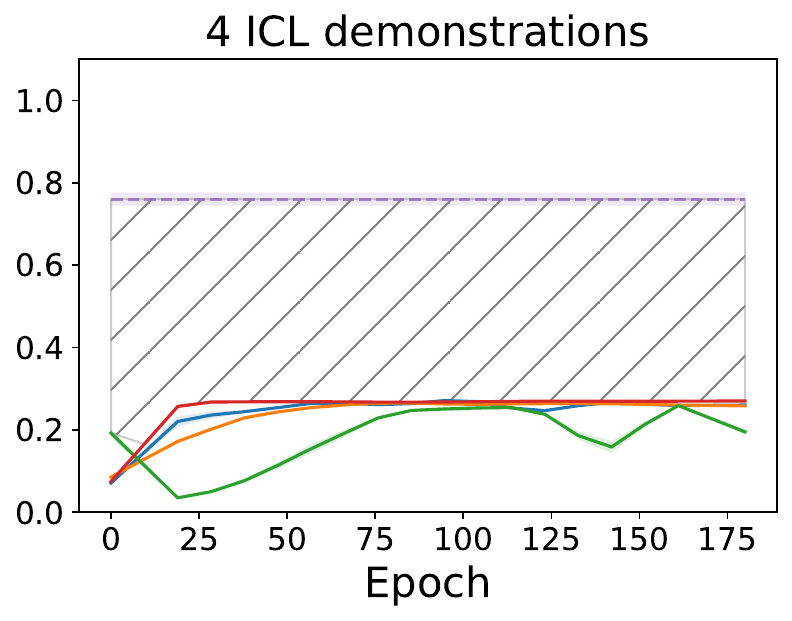}
    \includegraphics[width=0.24\textwidth]{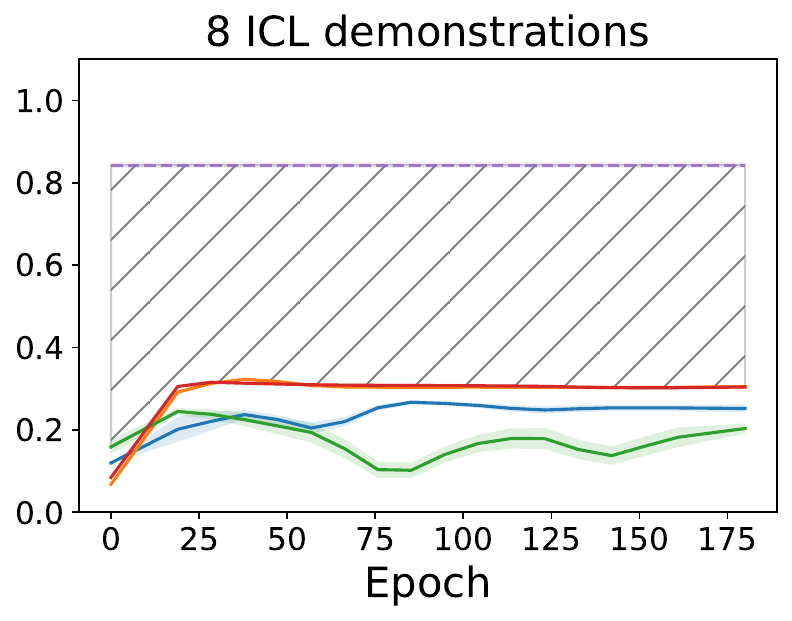}
    }\vspace{-3.5mm}
\caption{Comparison of ICL and GD for the AGNews dataset, with increasing number of demonstrations.}  
\label{agnews}
\end{figure}
\vspace{-2mm}

\begin{figure}[H]
\centering
    \subfigure[\textit{Accuracy} comparison]{
    \includegraphics[width=0.25\textwidth]{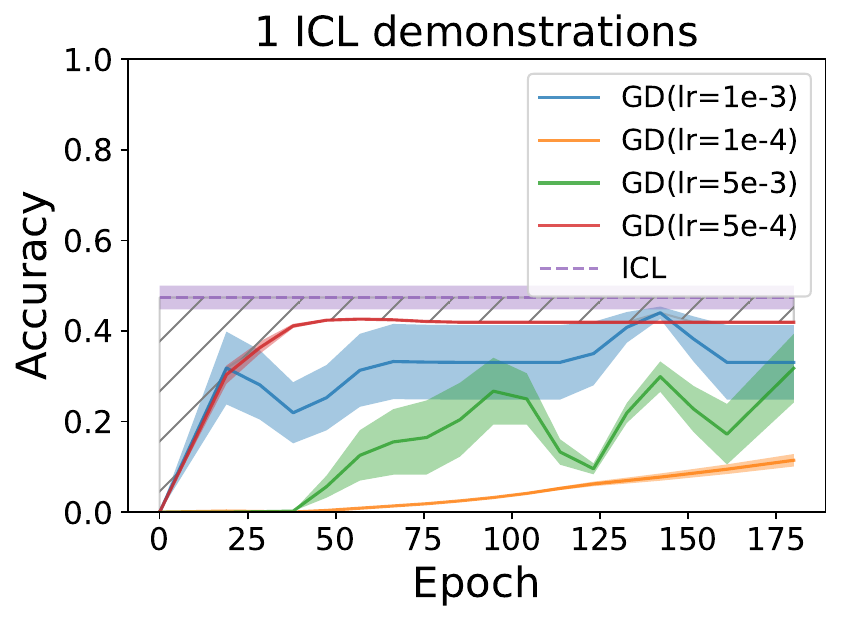}
    \includegraphics[width=0.24\textwidth]{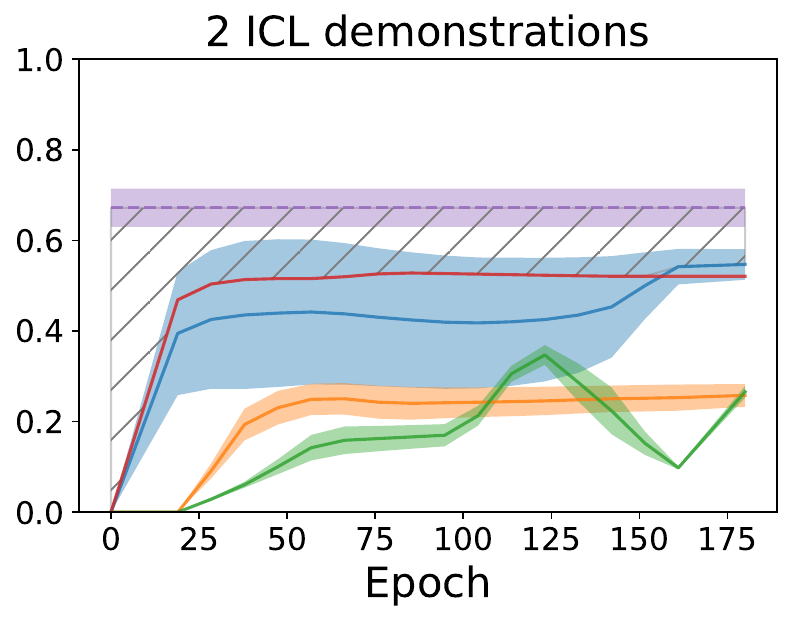}
    \includegraphics[width=0.24\textwidth]{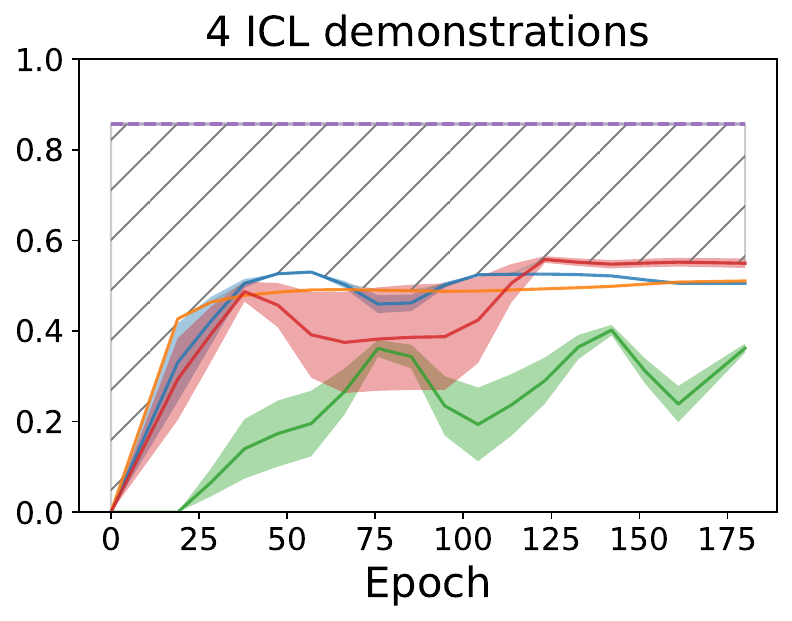}
    \includegraphics[width=0.24\textwidth]{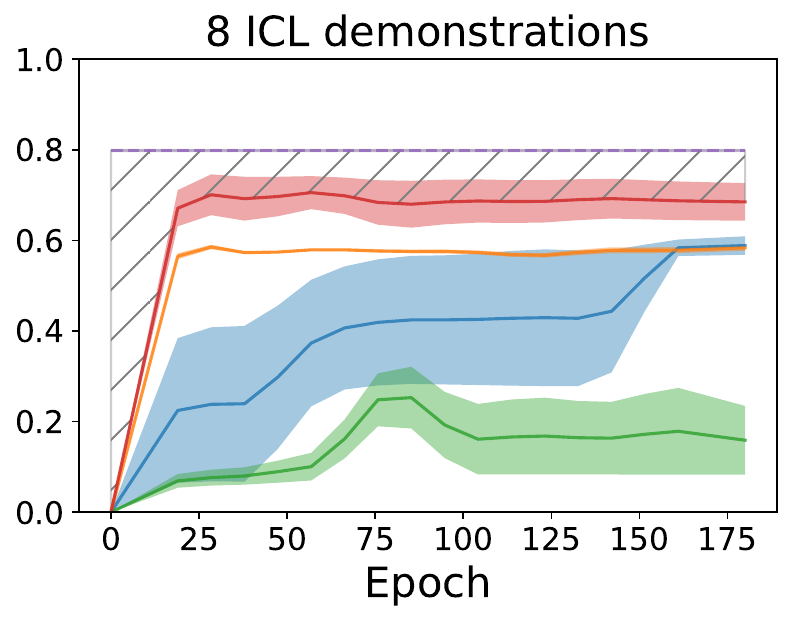}
    }
    \subfigure[\textit{Token overlap} comparison]{
    \includegraphics[width=0.25\textwidth]{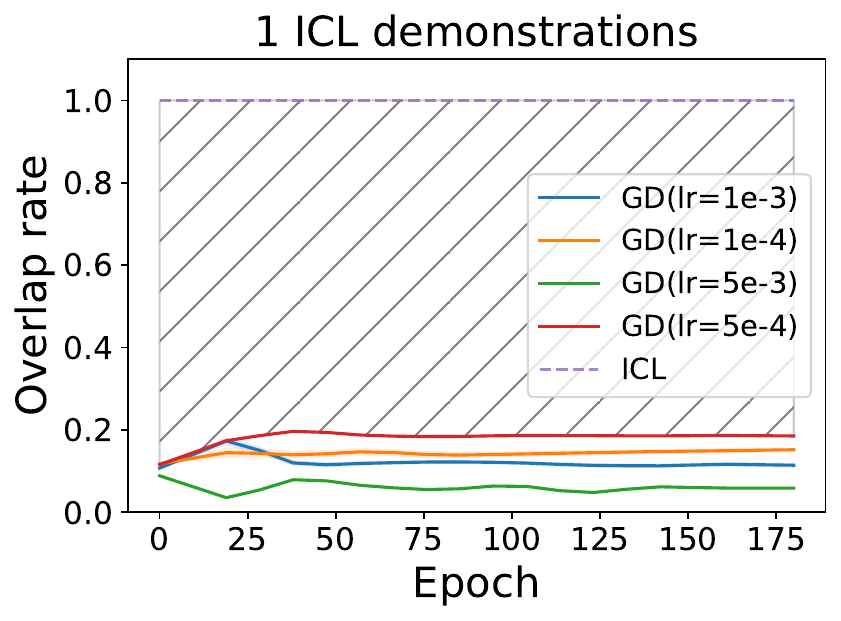}
    \includegraphics[width=0.24\textwidth]{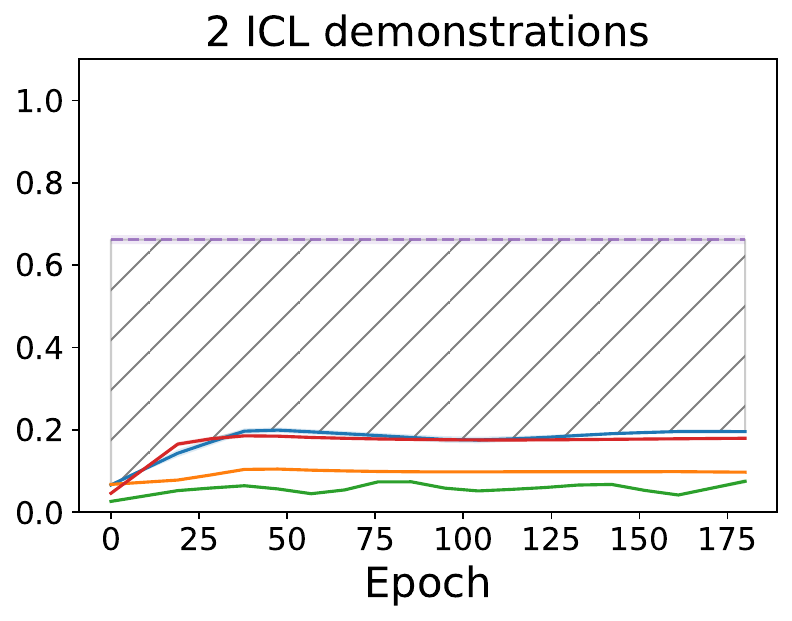}
    \includegraphics[width=0.24\textwidth]{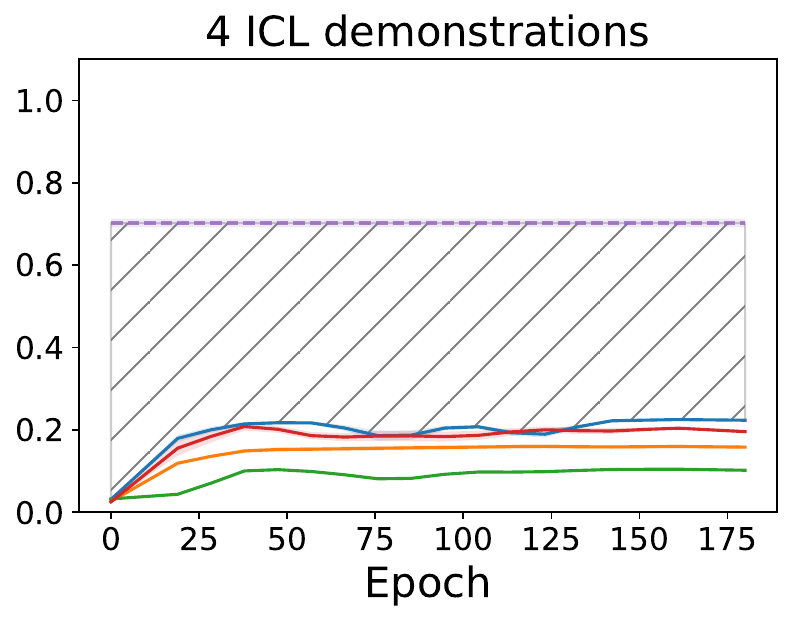}
    \includegraphics[width=0.24\textwidth]{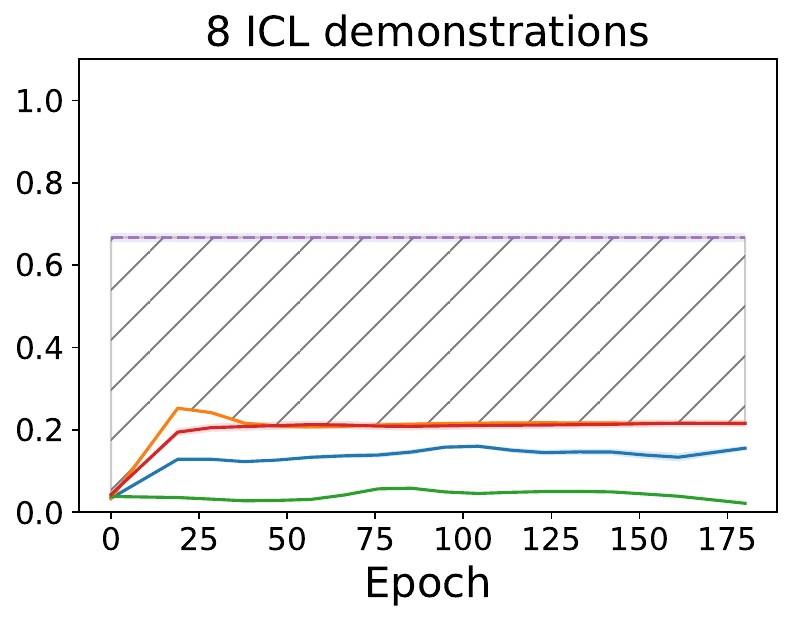}
    } 
    \subfigure[\textit{Overlap Cosine Similarity} comparison]{
    \includegraphics[width=0.25\textwidth]{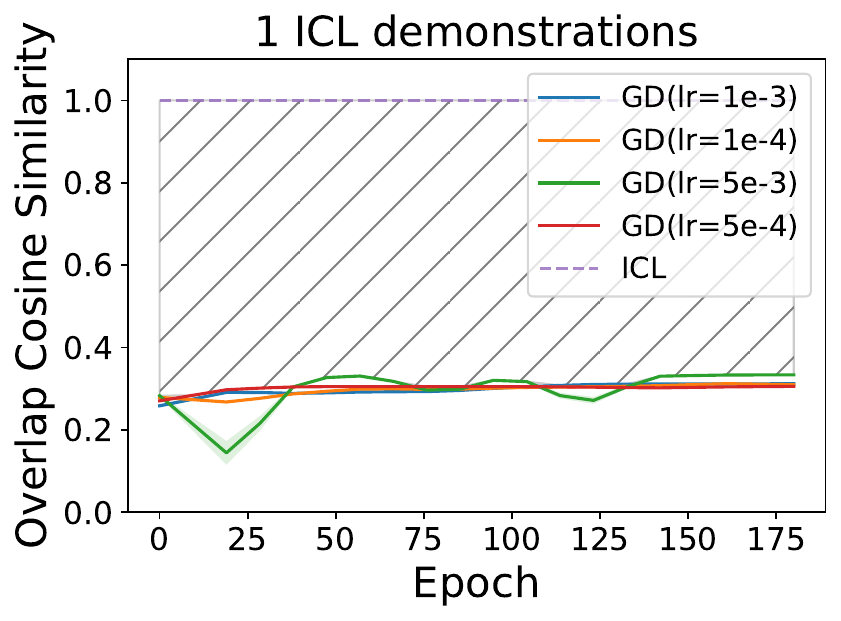}
    \includegraphics[width=0.24\textwidth]{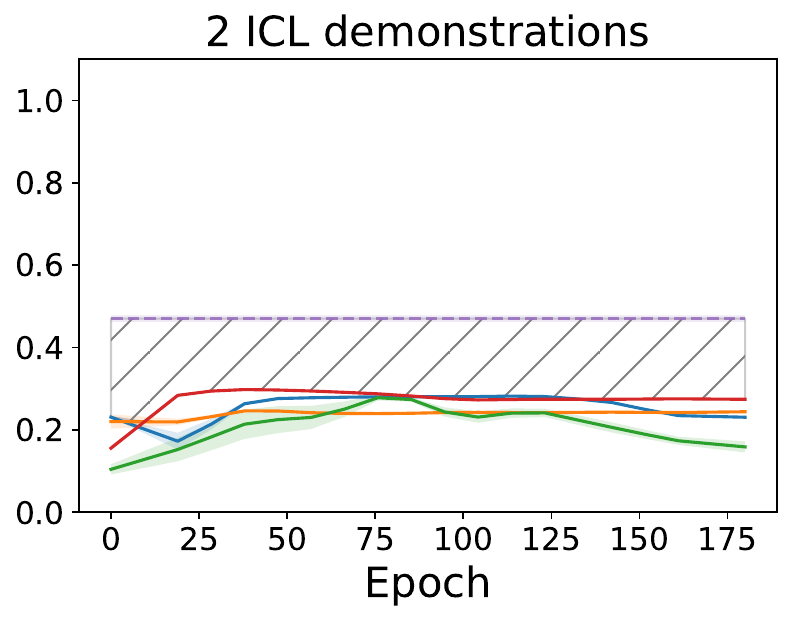}
    \includegraphics[width=0.24\textwidth]{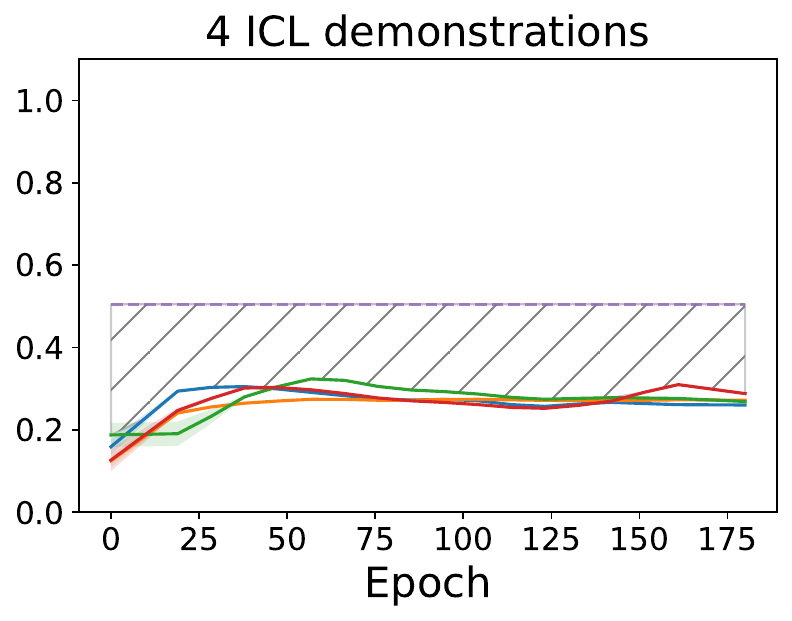}
    \includegraphics[width=0.24\textwidth]{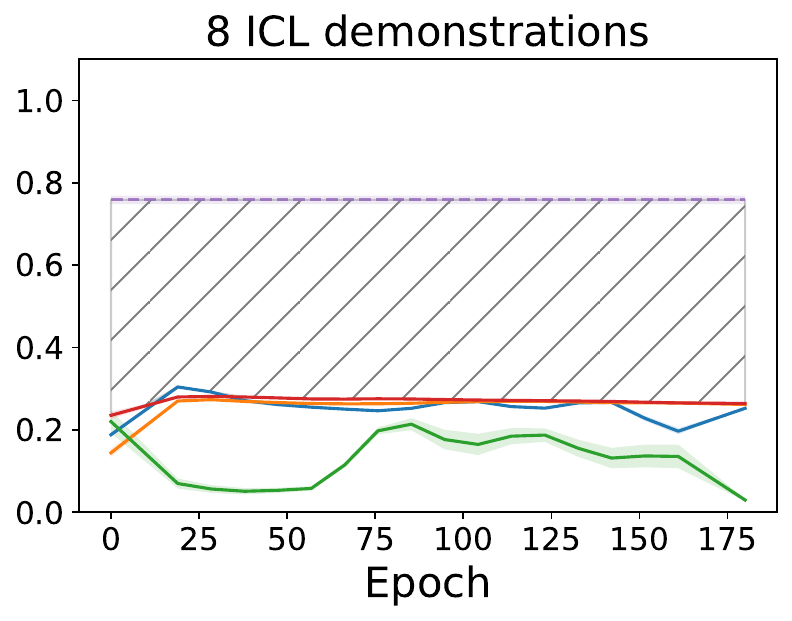}
    }\vspace{-3.5mm}
\caption{Comparison of ICL and GD for the SST dataset, with increasing number of demonstrations.}  
\label{sst2}
\end{figure}
\vspace{-2mm}

\begin{figure}[H]
\centering
    \subfigure[\textit{Accuracy} comparison]{
    \includegraphics[width=0.25\textwidth]{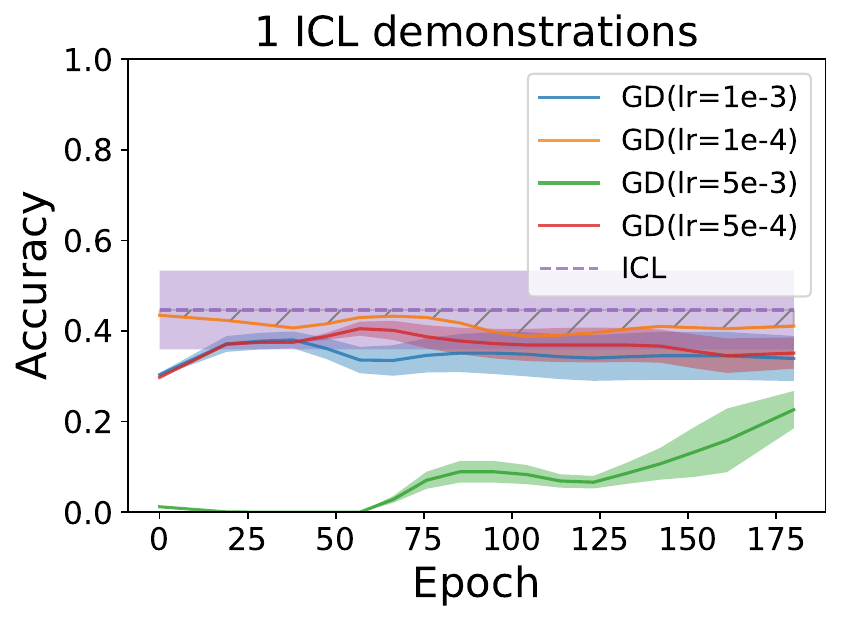}
    \includegraphics[width=0.24\textwidth]{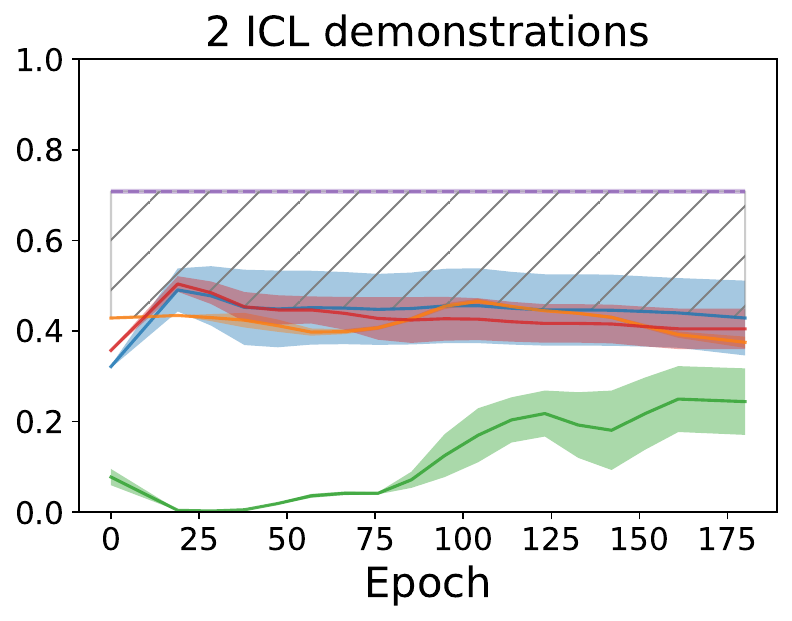}
    \includegraphics[width=0.24\textwidth]{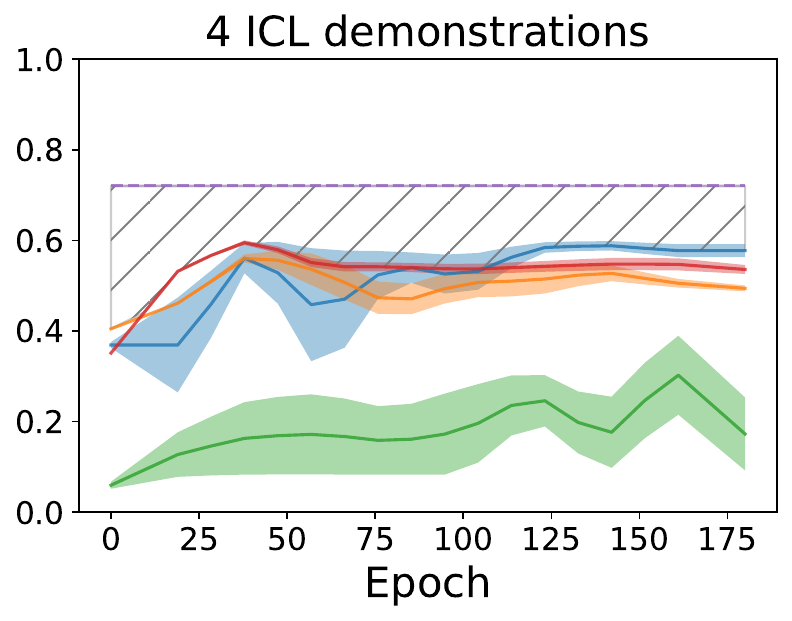}
    \includegraphics[width=0.24\textwidth]{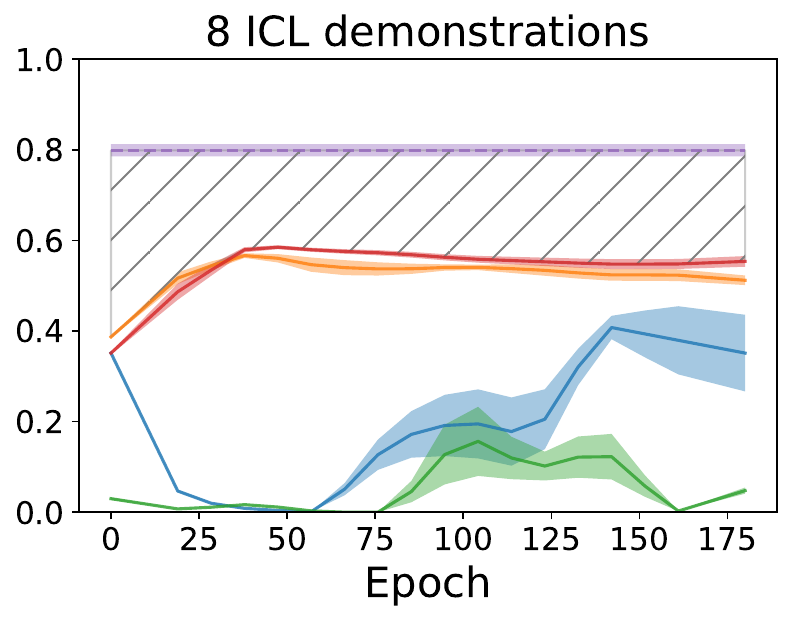}
    }
    \subfigure[\textit{Token overlap} comparison]{
    \includegraphics[width=0.25\textwidth]{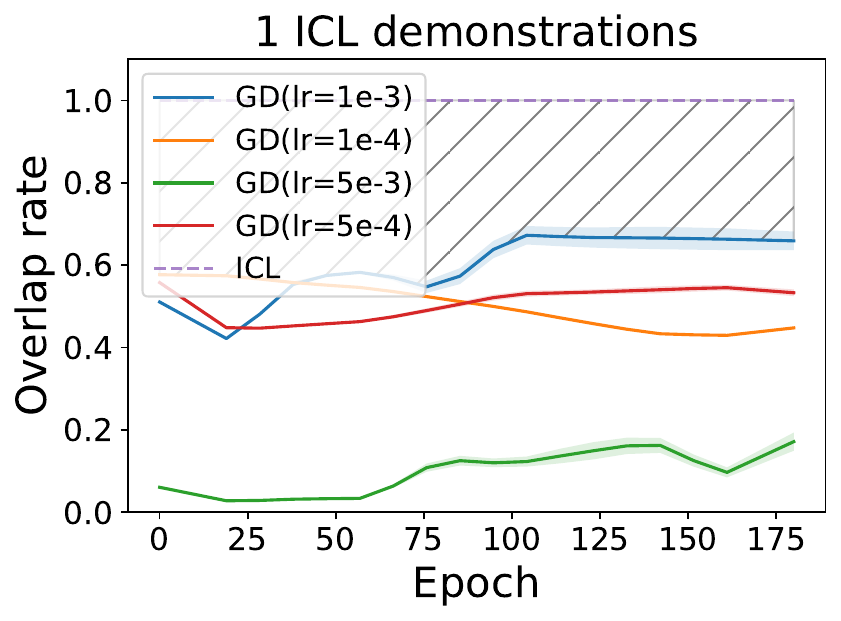}
    \includegraphics[width=0.24\textwidth]{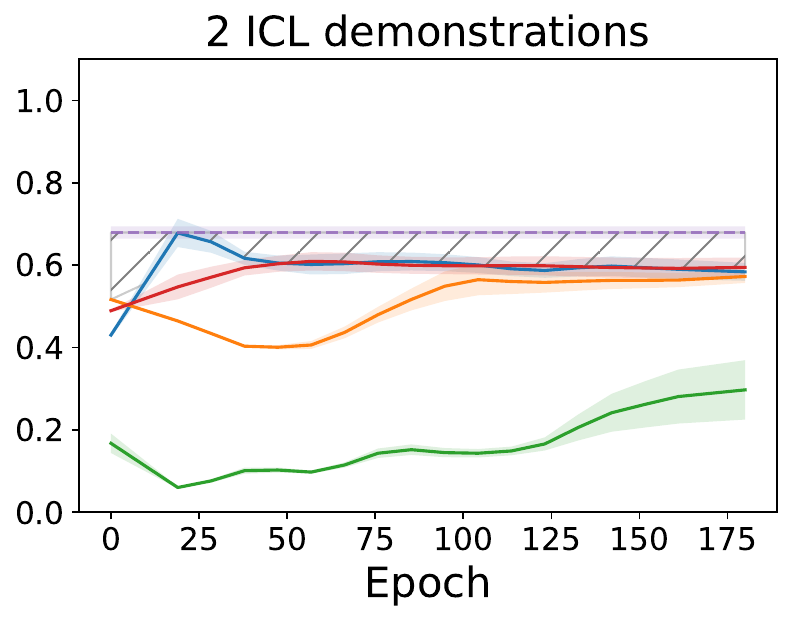}
    \includegraphics[width=0.24\textwidth]{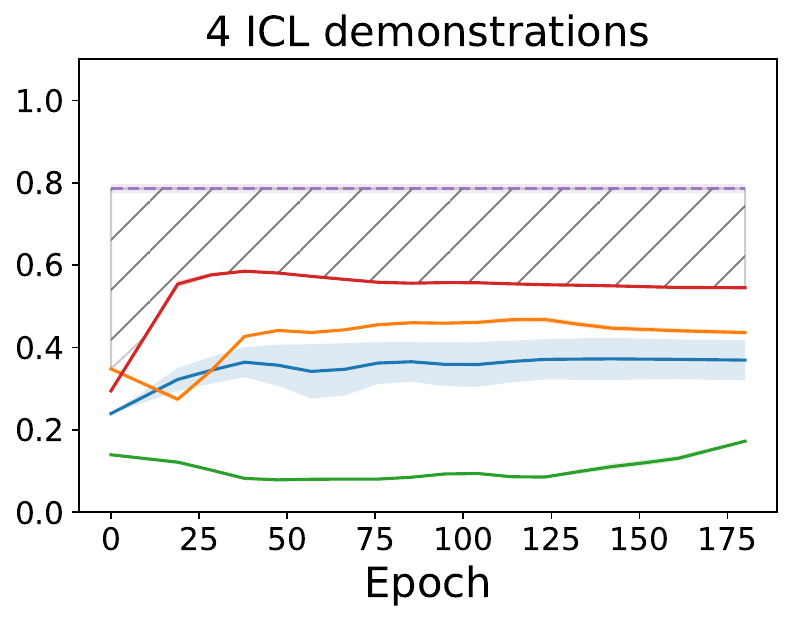}
    \includegraphics[width=0.24\textwidth]{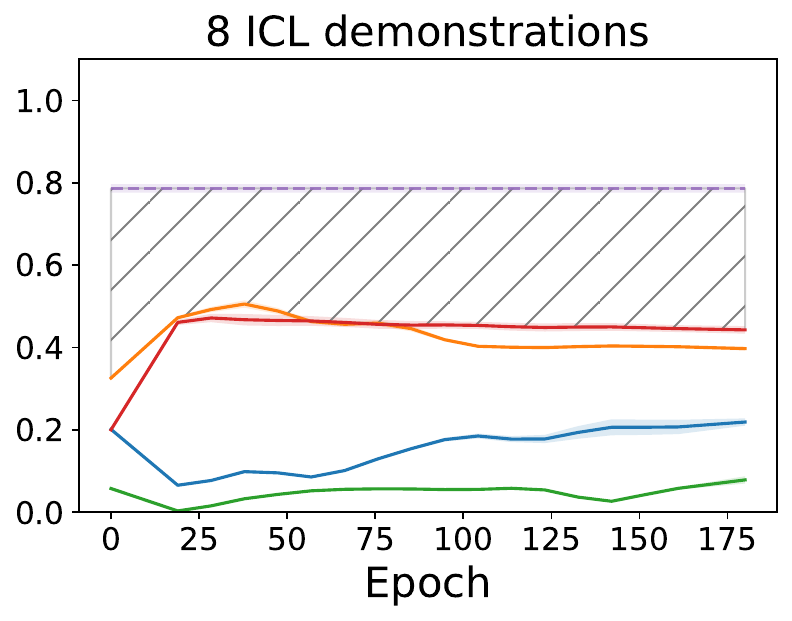}
    } 
    \subfigure[\textit{Overlap Cosine Similarity} comparison]{
    \includegraphics[width=0.25\textwidth]{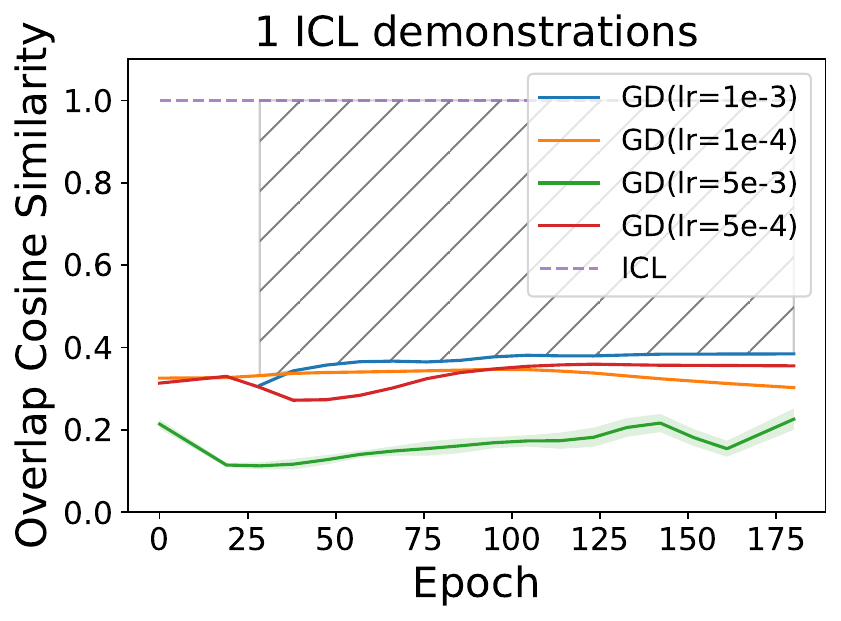}
    \includegraphics[width=0.24\textwidth]{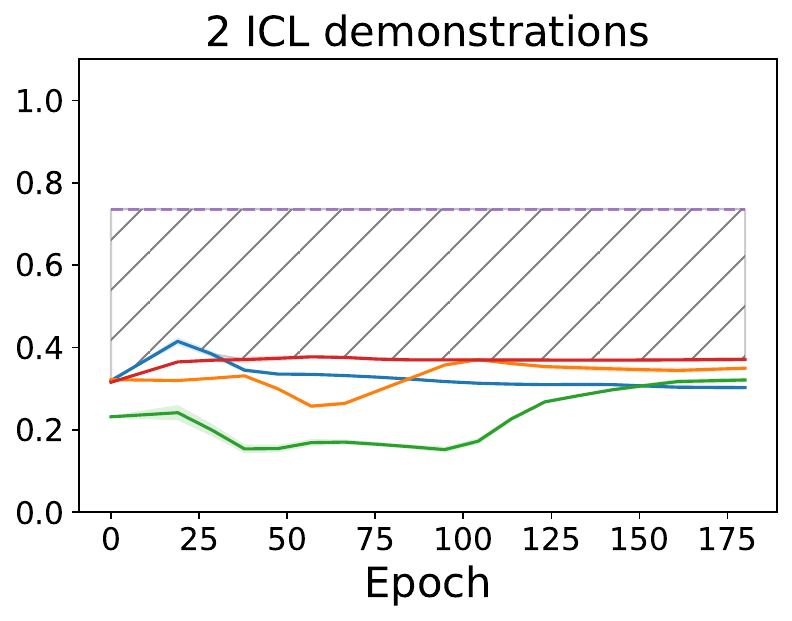}
    \includegraphics[width=0.24\textwidth]{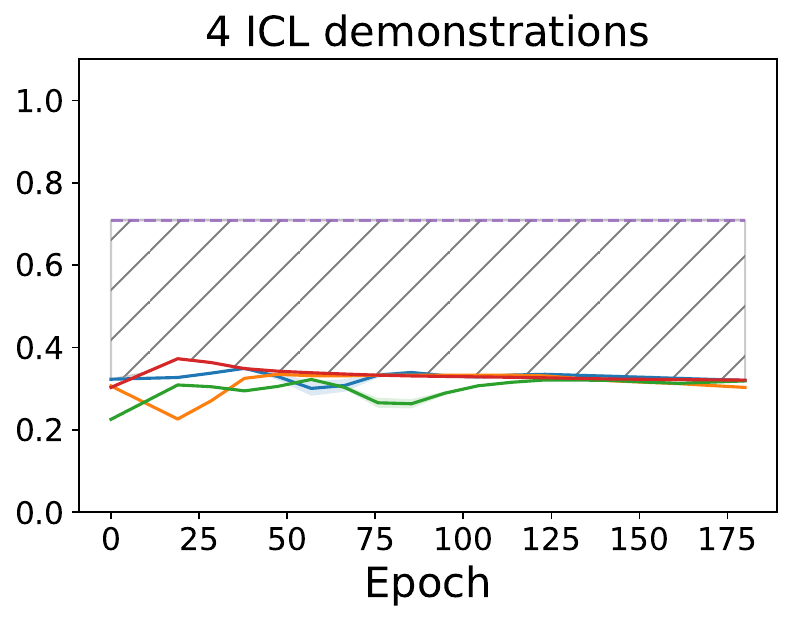}
    \includegraphics[width=0.24\textwidth]{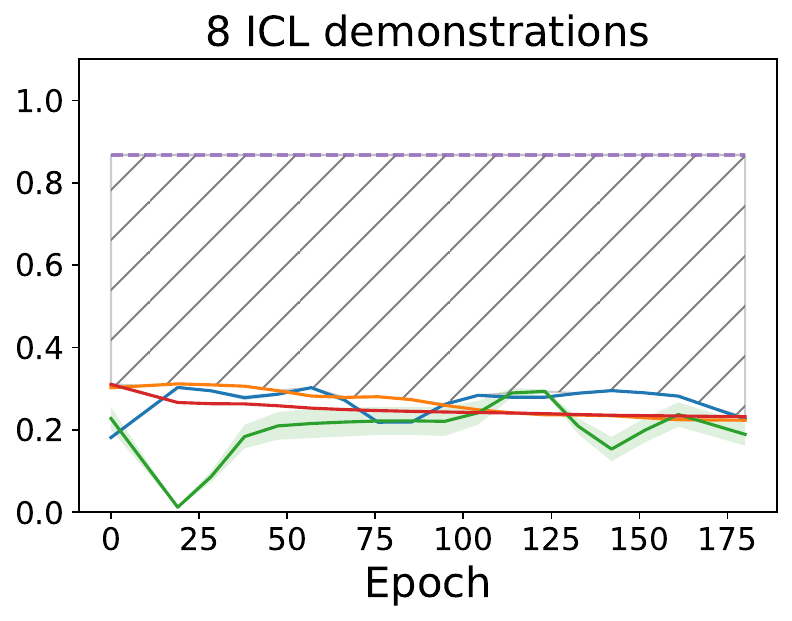}
    }\vspace{-3.5mm}
\caption{Comparison of ICL and GD for the CB dataset, with increasing number of demonstrations.}  
\label{cb}
\end{figure}
\vspace{-2mm}

\begin{figure}[H]
\centering
    \subfigure[\textit{Accuracy} comparison]{
    \includegraphics[width=0.25\textwidth]{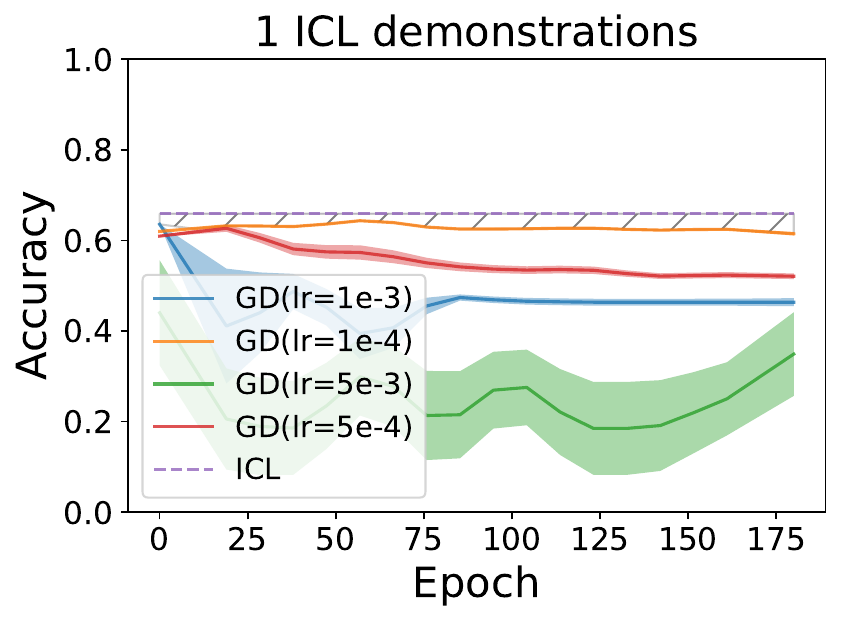}
    \includegraphics[width=0.24\textwidth]{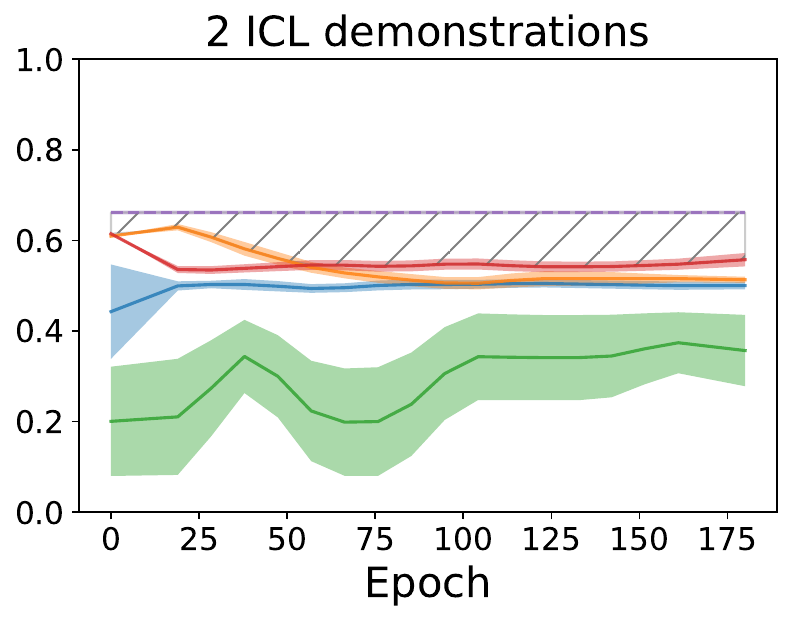}
    \includegraphics[width=0.24\textwidth]{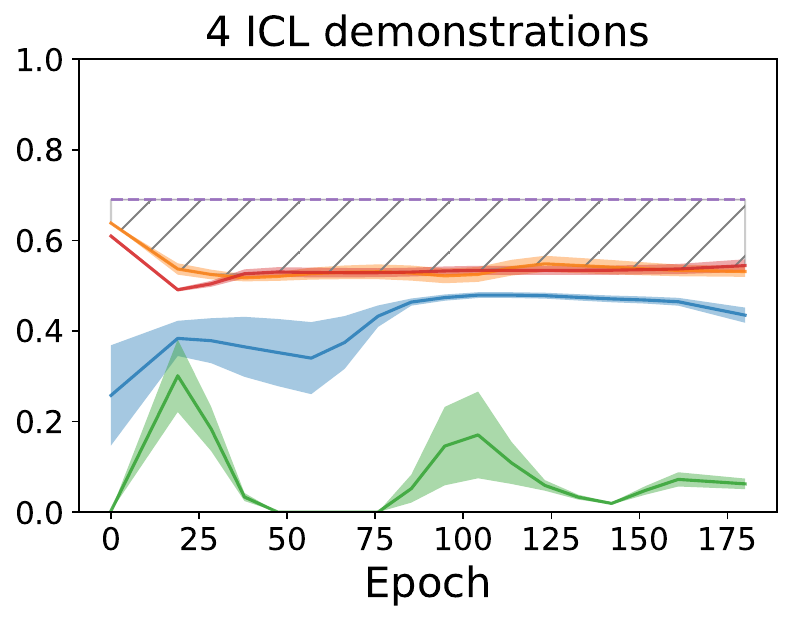}
    \includegraphics[width=0.24\textwidth]{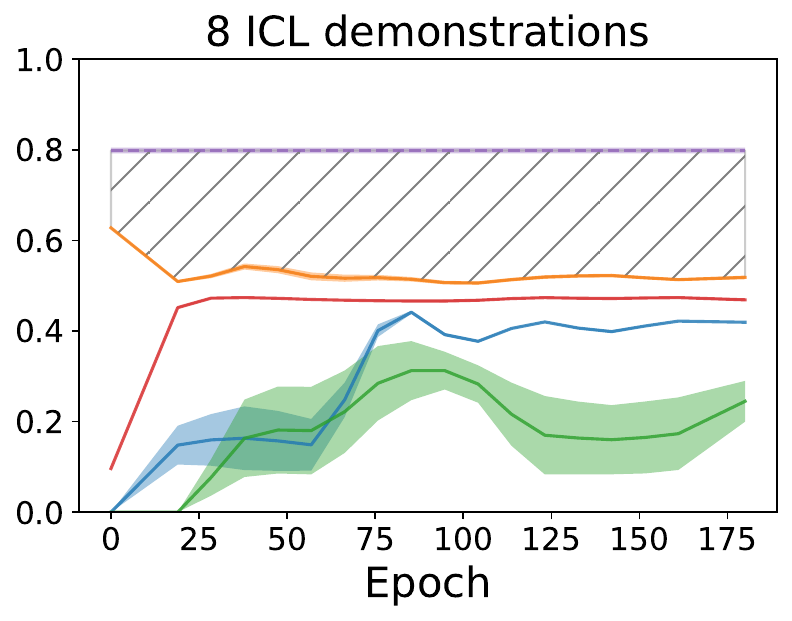}
    }
    \subfigure[\textit{Token overlap} comparison]{
    \includegraphics[width=0.25\textwidth]{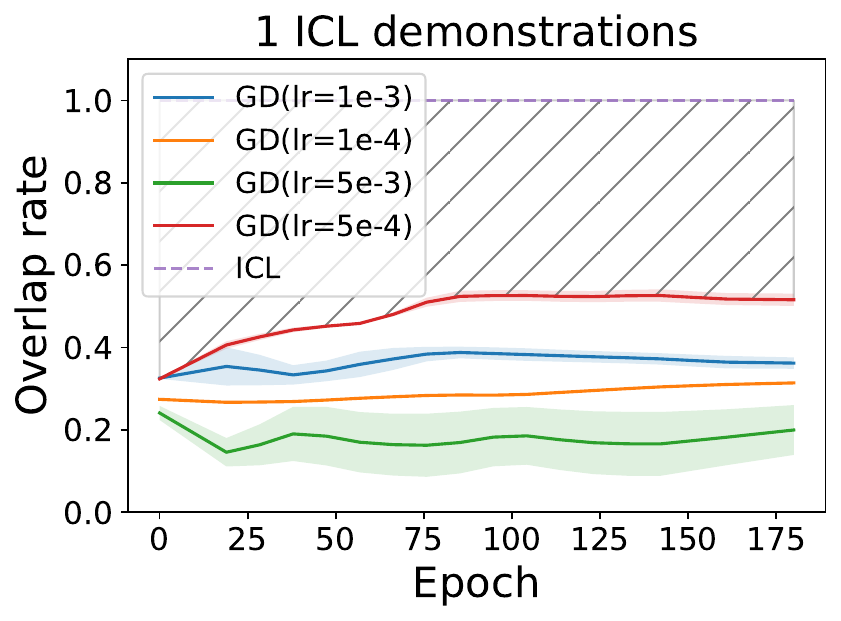}
    \includegraphics[width=0.24\textwidth]{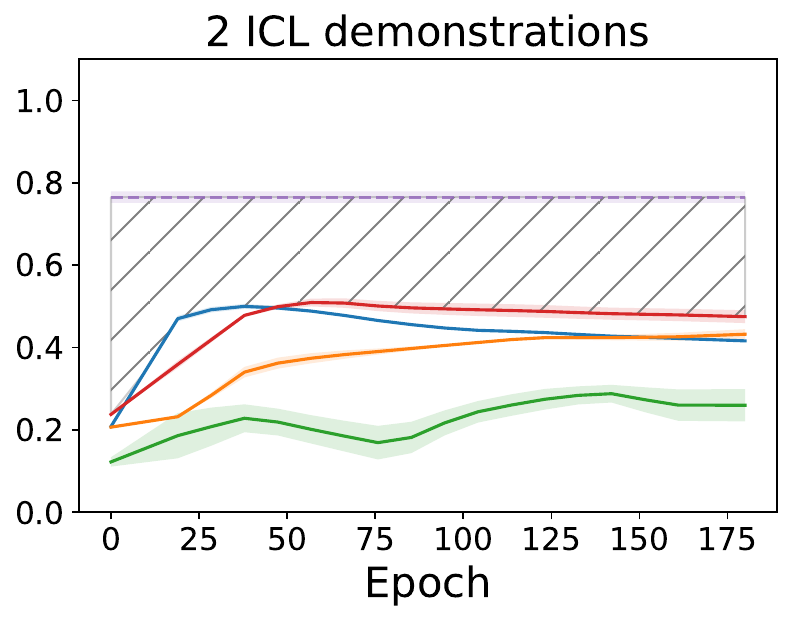}
    \includegraphics[width=0.24\textwidth]{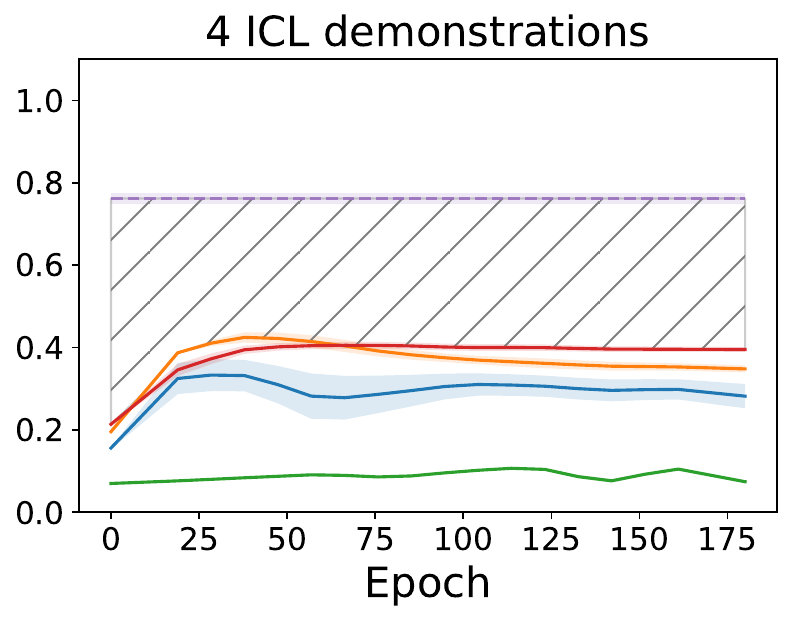}
    \includegraphics[width=0.24\textwidth]{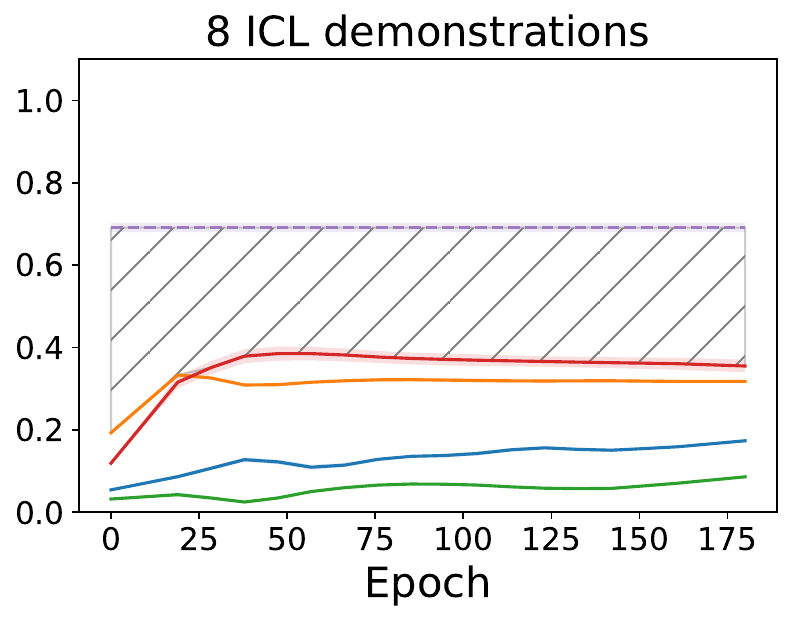}
    } 
    \subfigure[\textit{Overlap Cosine Similarity} comparison]{
    \includegraphics[width=0.25\textwidth]{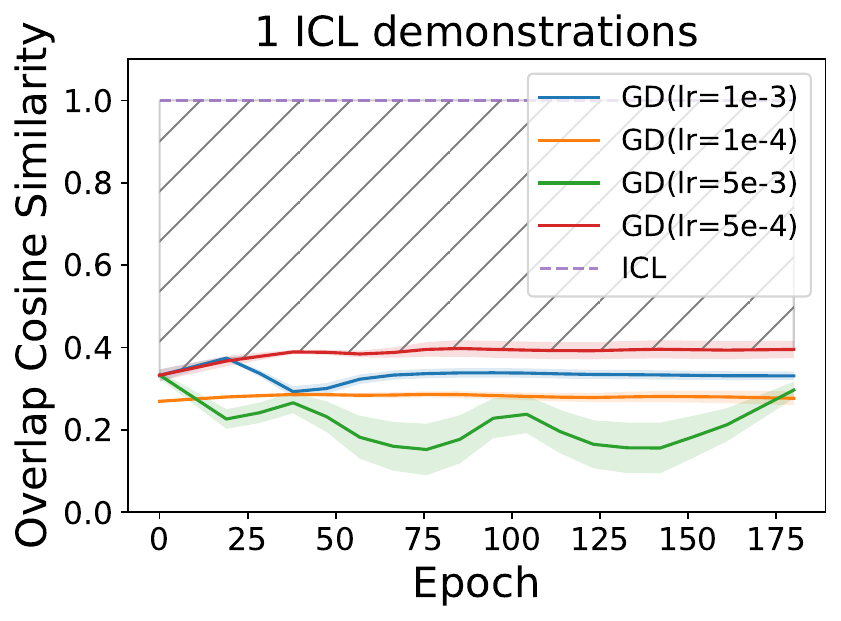}
    \includegraphics[width=0.24\textwidth]{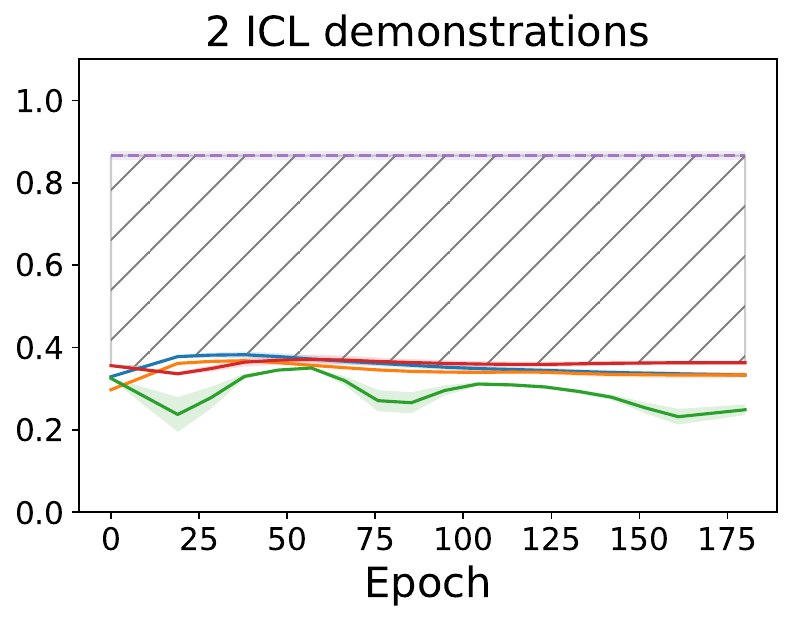}
    \includegraphics[width=0.24\textwidth]{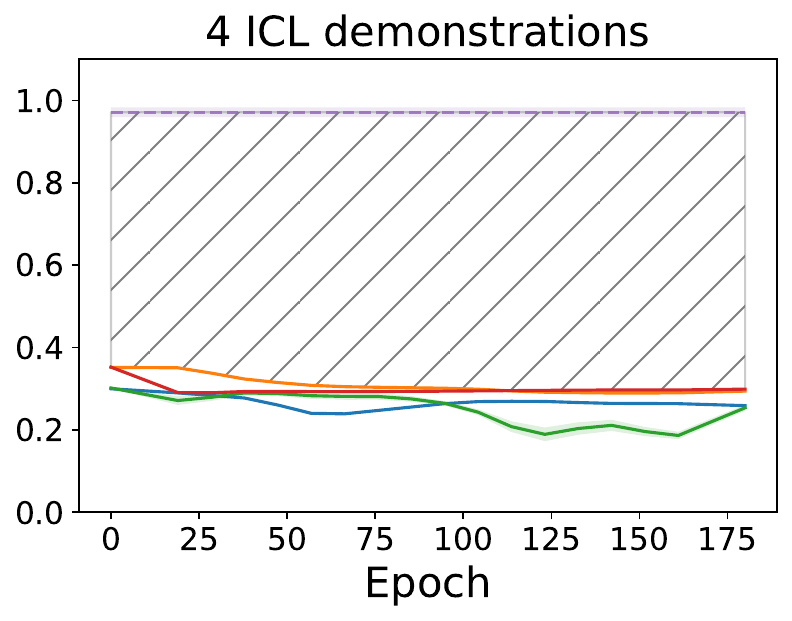}
    \includegraphics[width=0.24\textwidth]{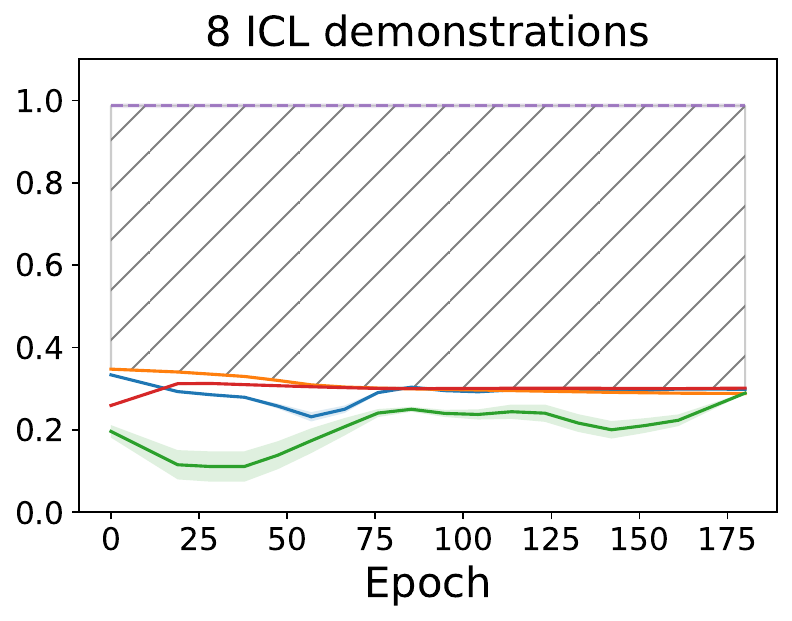}
    }\vspace{-3.5mm}
\caption{Comparison of ICL and GD for the RTE dataset, with increasing number of demonstrations.}  
\label{rte}
\end{figure}

\newpage
\section{Empirical results on ICL vs $\widehat{\text{GD}}$}\label{gd_sub}
Here, we present corresponding results on ICL vs $\widehat{\text{GD}}$.

\paragraph{How are sub-models selected for optimization?}
Since $\widehat{\text{GD}}$ conducts updates only on the subset of the model and enumerating all the possible subsets of model parameters is infeasible, we select intuitive subsets of parameters to simulate $\widehat{\text{GD}}$.

We use the hypotheses in \citep{akyurek2022learning, von2023Transformers}, to experiment with intuitive subsets of models. In particular, according to \citet{von2023Transformers} the \textit{implicit} model lies in $W_V$ of the Transformer while the probing experiments in \cite{akyurek2022learning} suggest that this iterative optimization happens in top layers of the Transformers. Therefore, we provide experiments with two intuitive subsets to simulate $\widehat{\text{GD}}$: finetuning (1) $W_V$ of a single deep layer, and (2) $W_V$ of a single middle layer.

\paragraph{Results of ICL vs. $\widehat{\text{GD}}$ (Deep layer)}
Following a similar experimental setup in \cref{experiments}, we compare the differences between ICL and $\widehat{\text{GD}}$. We randomly select one layer from the last four layers from LLaMa (29-32), repeat the experiments four times and plot the mean and std. The results are shown in \autoref{ag-deep} - \autoref{rte-deep}, and we can observe similar gaps between ICL and $\widehat{\text{GD}}$.

\begin{figure}[H]
\centering
    \subfigure[\textit{Accuracy} comparison]{
    \includegraphics[width=0.25\textwidth]{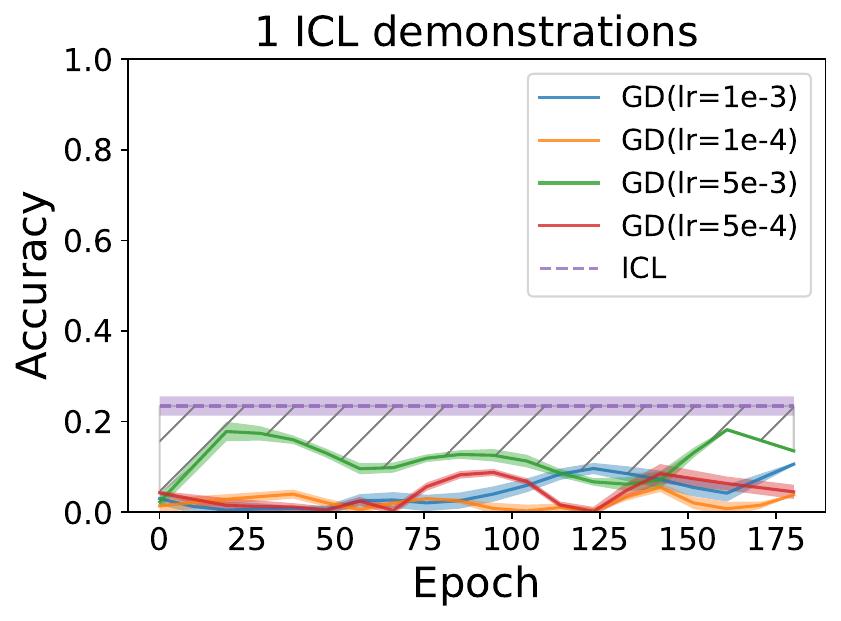}
    \includegraphics[width=0.24\textwidth]{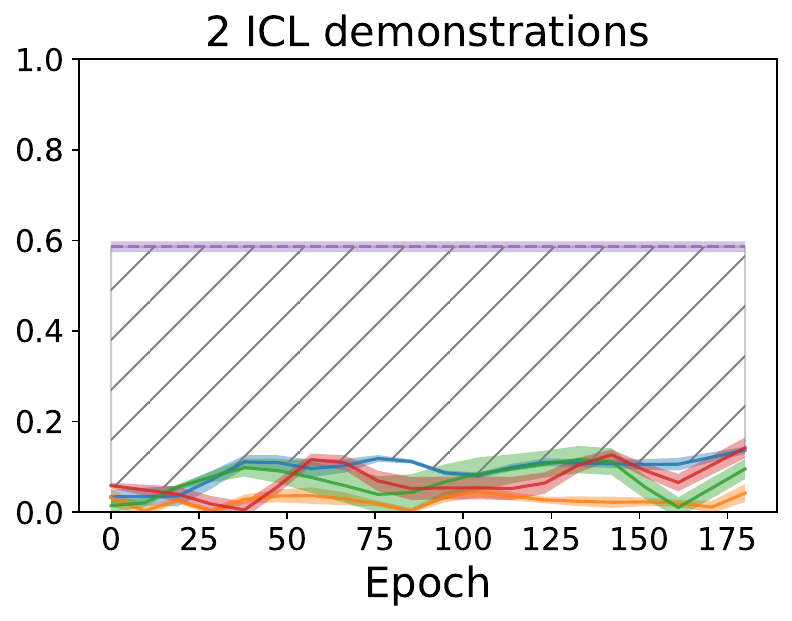}
    \includegraphics[width=0.24\textwidth]{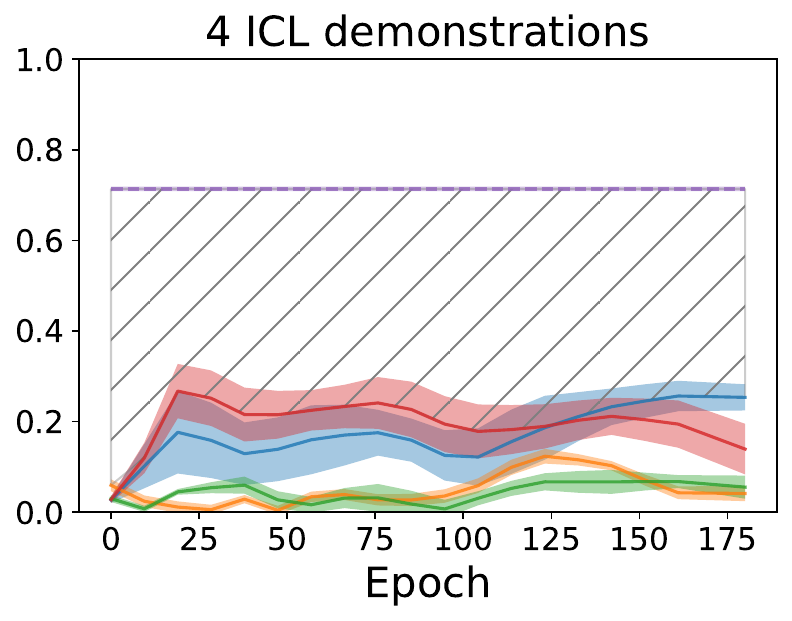}
    \includegraphics[width=0.24\textwidth]{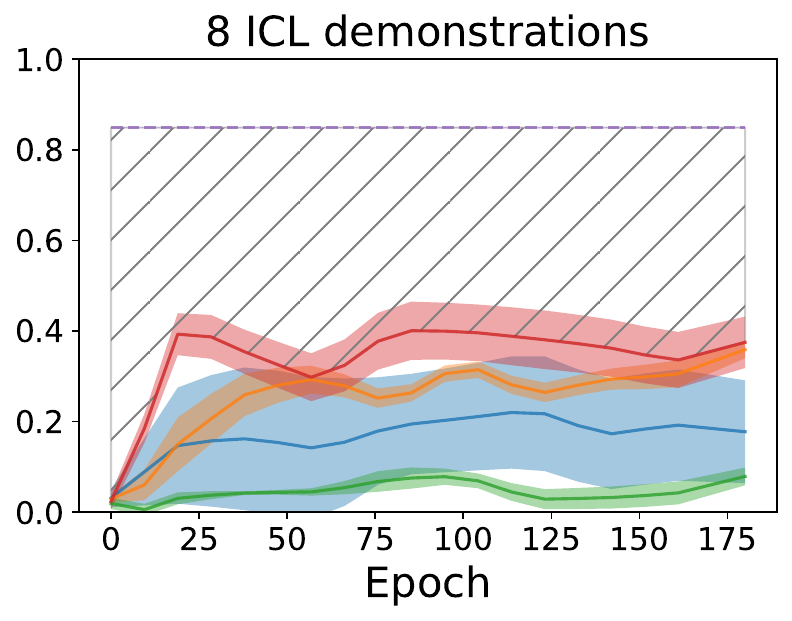}
    }
    \subfigure[\textit{Token overlap} comparison]{
    \includegraphics[width=0.25\textwidth]{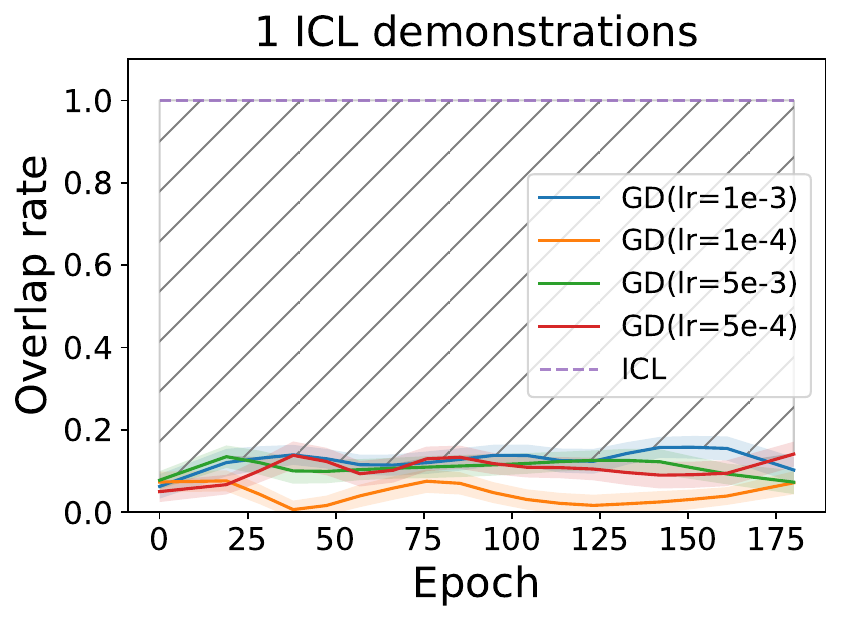}
    \includegraphics[width=0.24\textwidth]{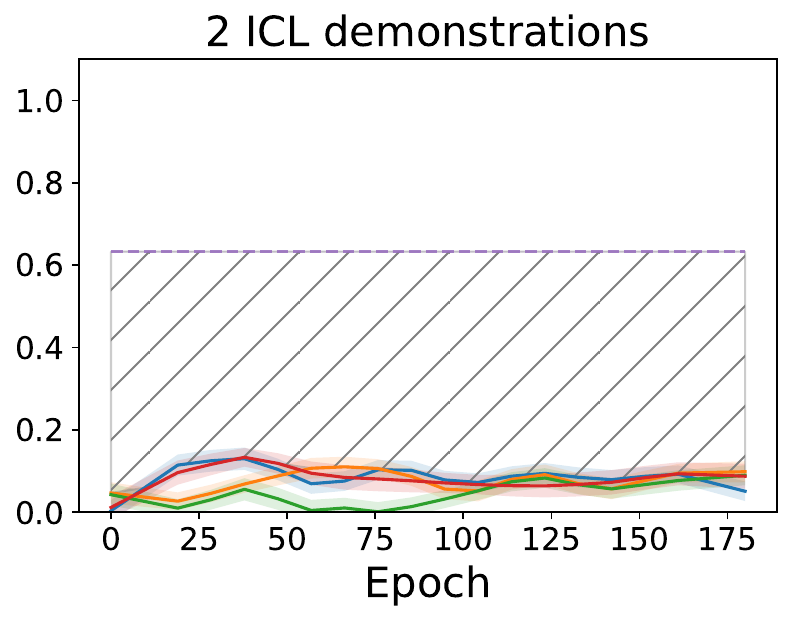}
    \includegraphics[width=0.24\textwidth]{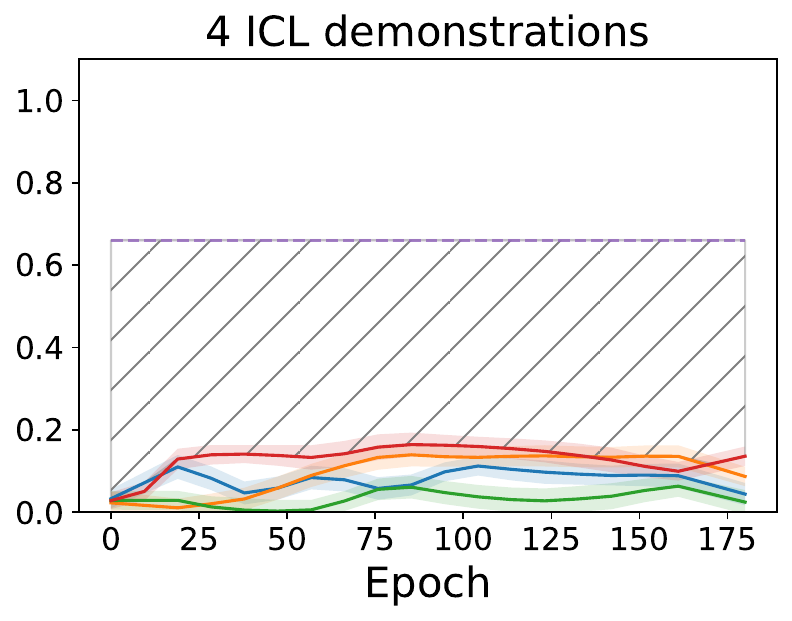}
    \includegraphics[width=0.24\textwidth]{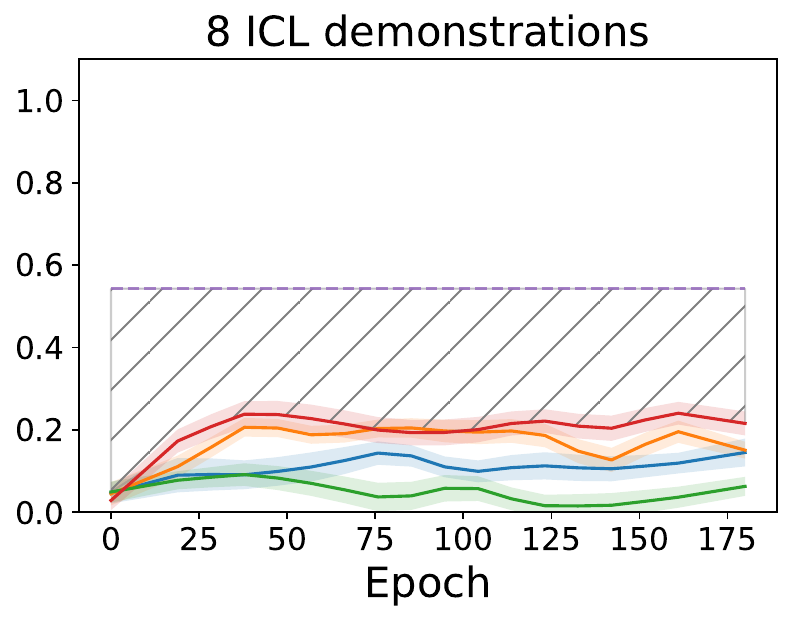}
    } 
    \subfigure[\textit{Overlap Cosine Similarity} comparison]{
    \includegraphics[width=0.25\textwidth]{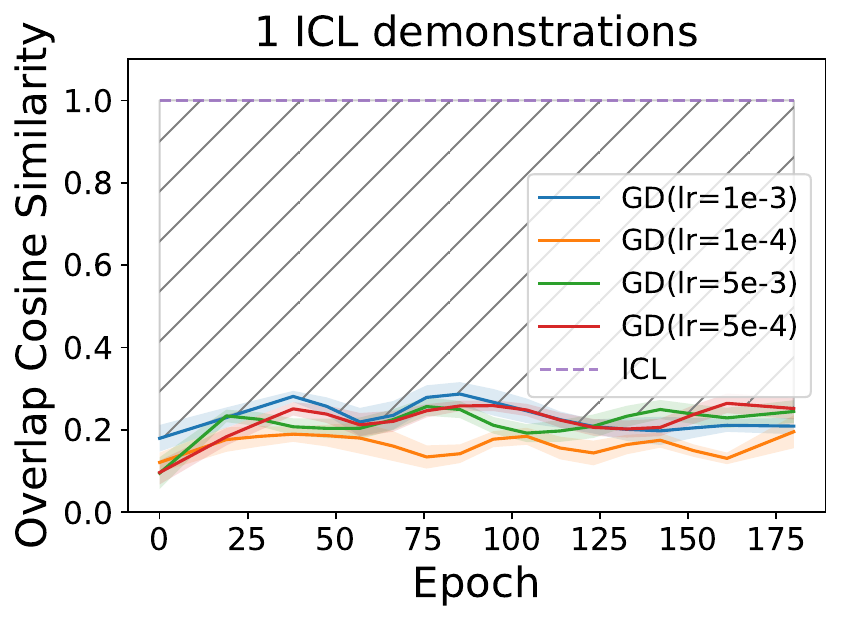}
    \includegraphics[width=0.24\textwidth]{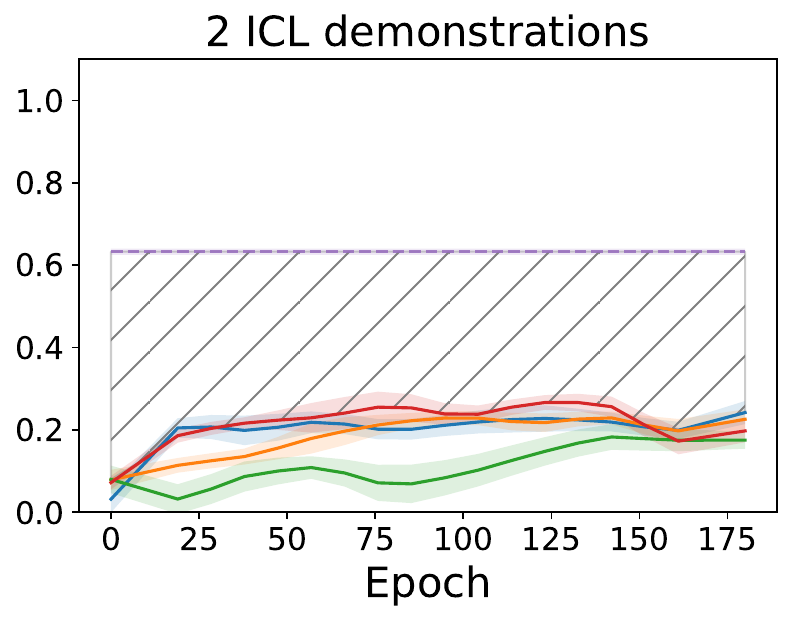}
    \includegraphics[width=0.24\textwidth]{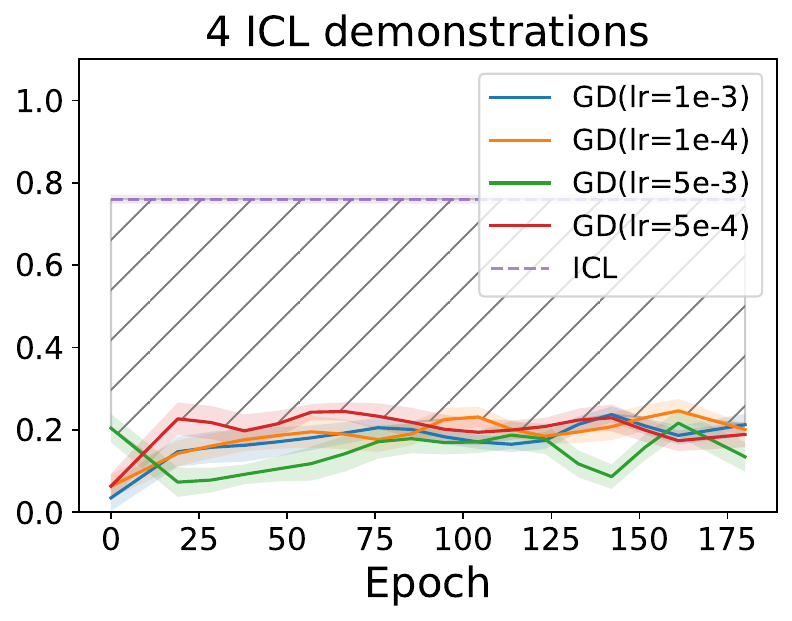}
    \includegraphics[width=0.24\textwidth]{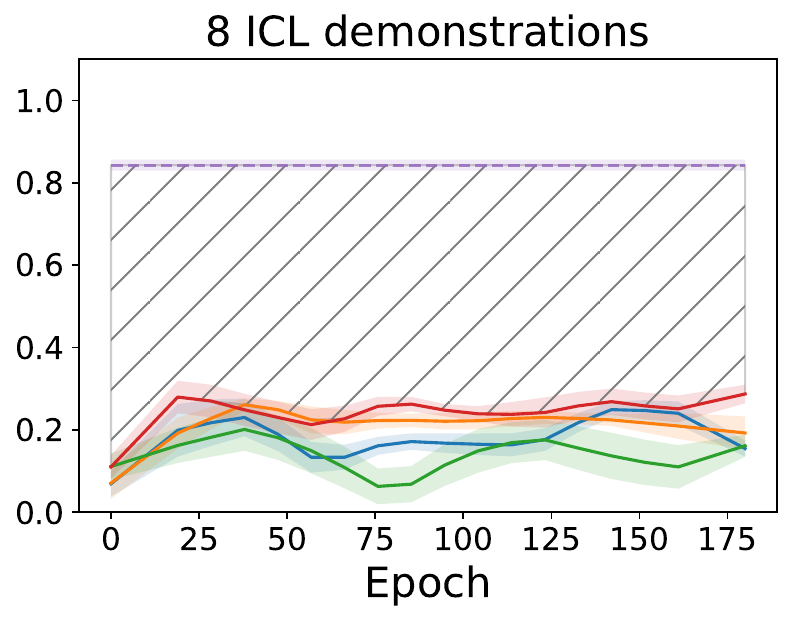}
    }\vspace{-3.5mm}
\caption{
Comparison of ICL and $\widehat{\text{GD}}$ for the AGNews dataset, with increasing number of demonstrations. $\widehat{\text{GD}}$ is simulated by optimizing on one random deep layer of LLaMa.}  
\label{ag-deep}
\end{figure}
\vspace{-2mm}

\begin{figure}[H]
\centering
    \subfigure[\textit{Accuracy} comparison]{
    \includegraphics[width=0.25\textwidth]{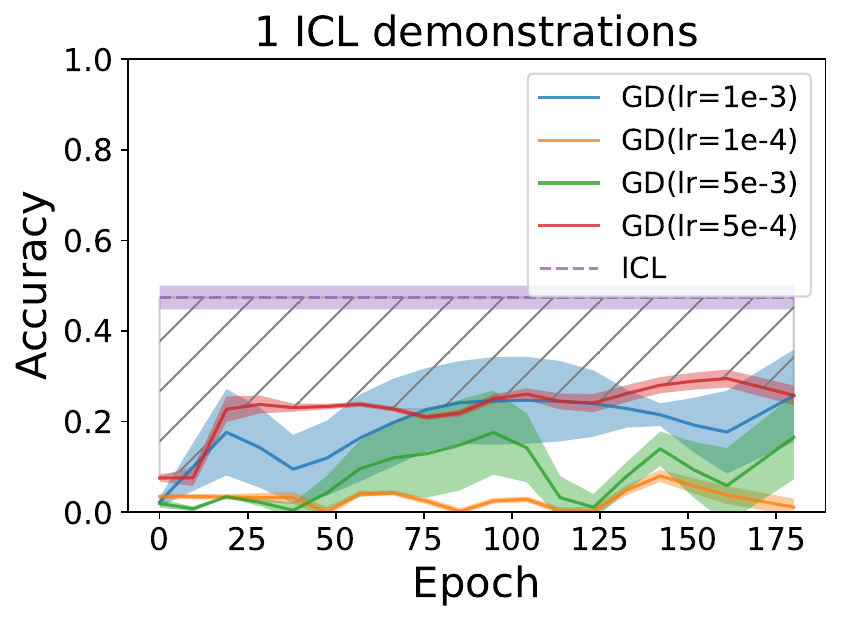}
    \includegraphics[width=0.24\textwidth]{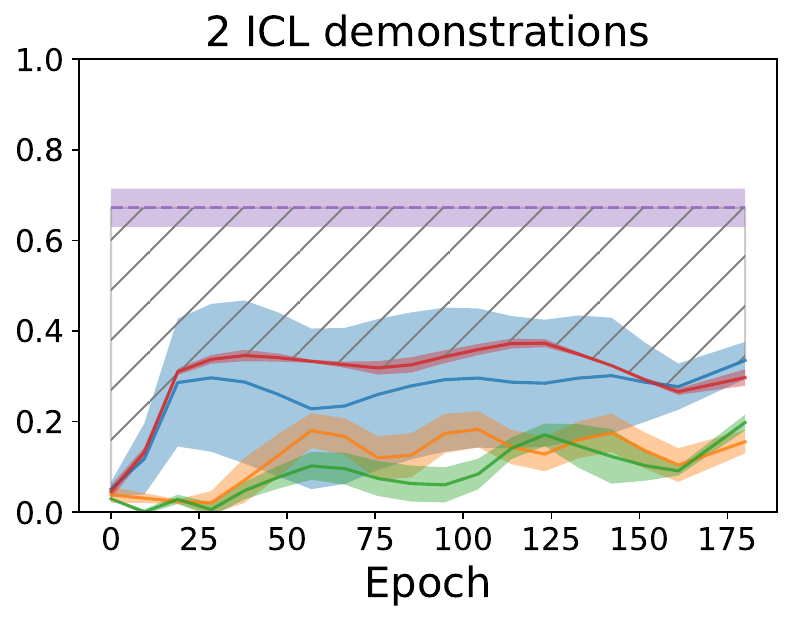}
    \includegraphics[width=0.24\textwidth]{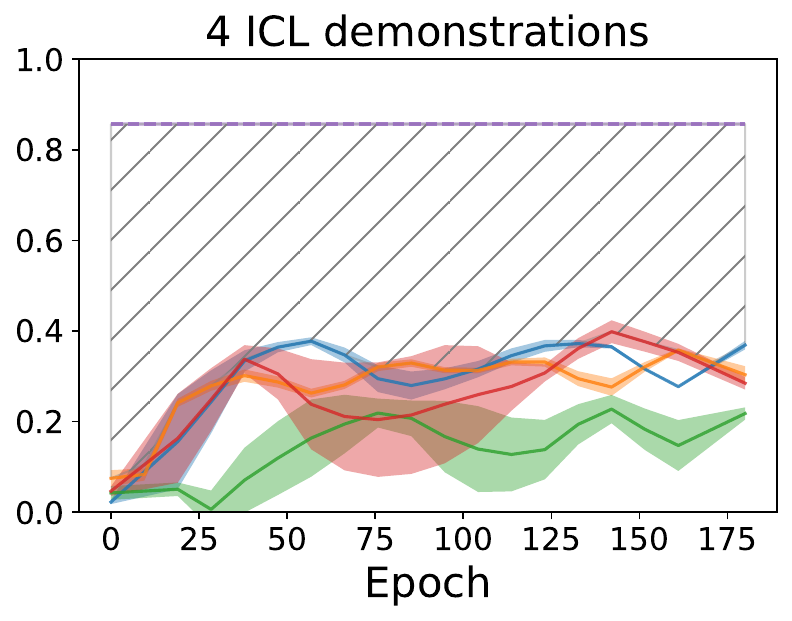}
    \includegraphics[width=0.24\textwidth]{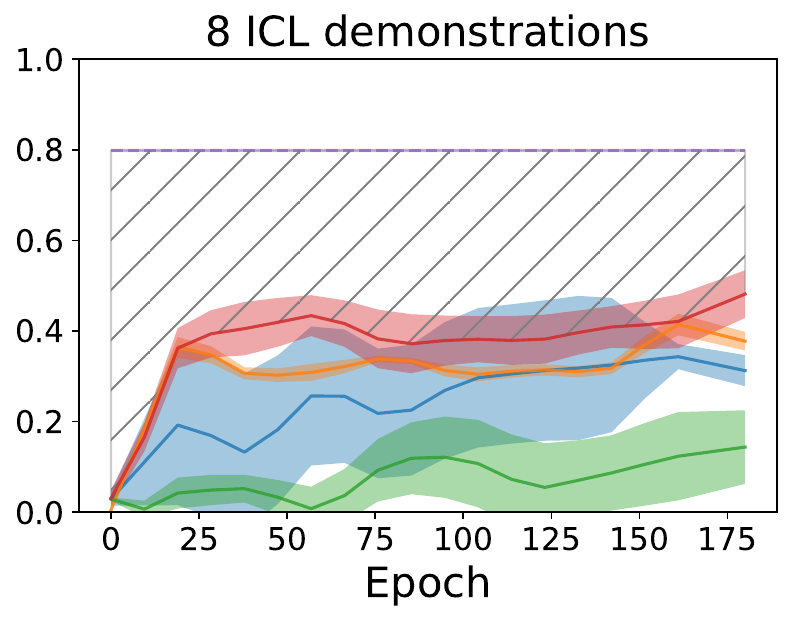}
    }
    \subfigure[\textit{Token overlap} comparison]{
    \includegraphics[width=0.25\textwidth]{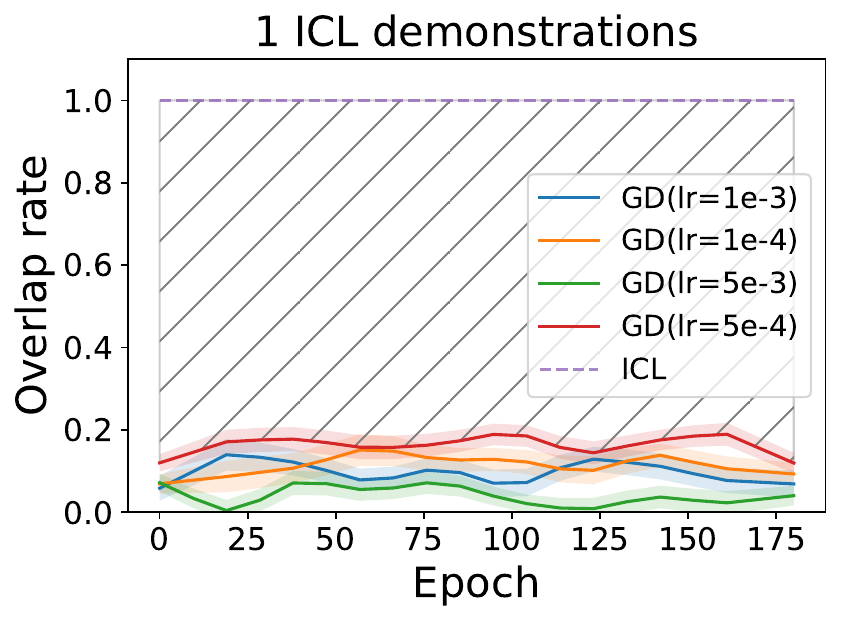}
    \includegraphics[width=0.24\textwidth]{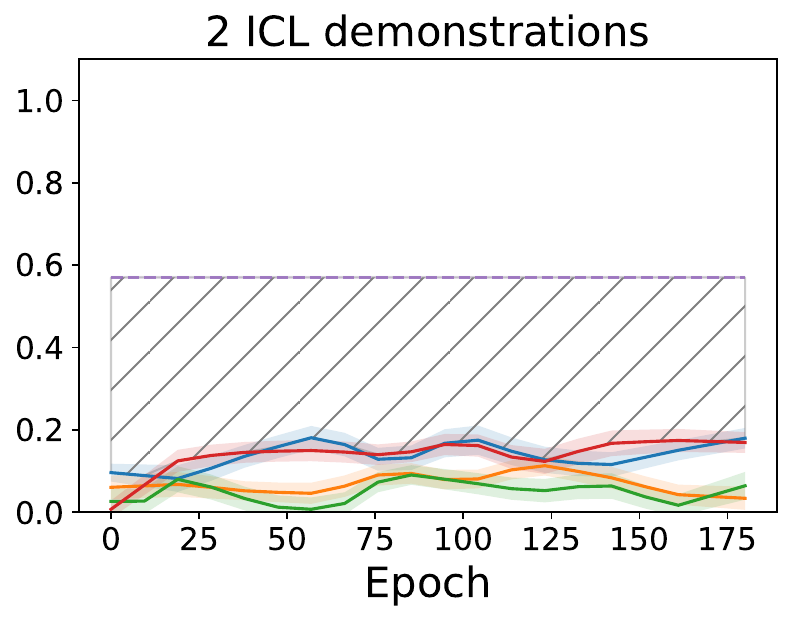}
    \includegraphics[width=0.24\textwidth]{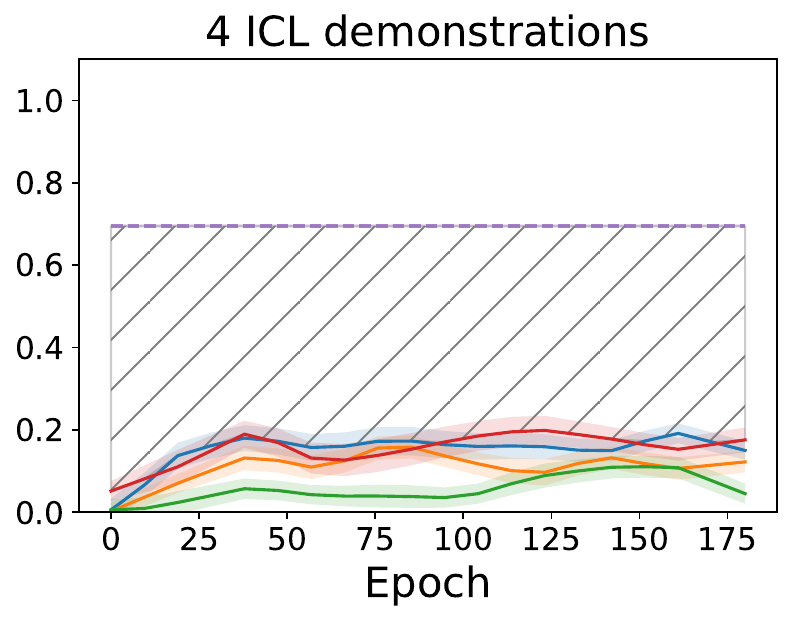}
    \includegraphics[width=0.24\textwidth]{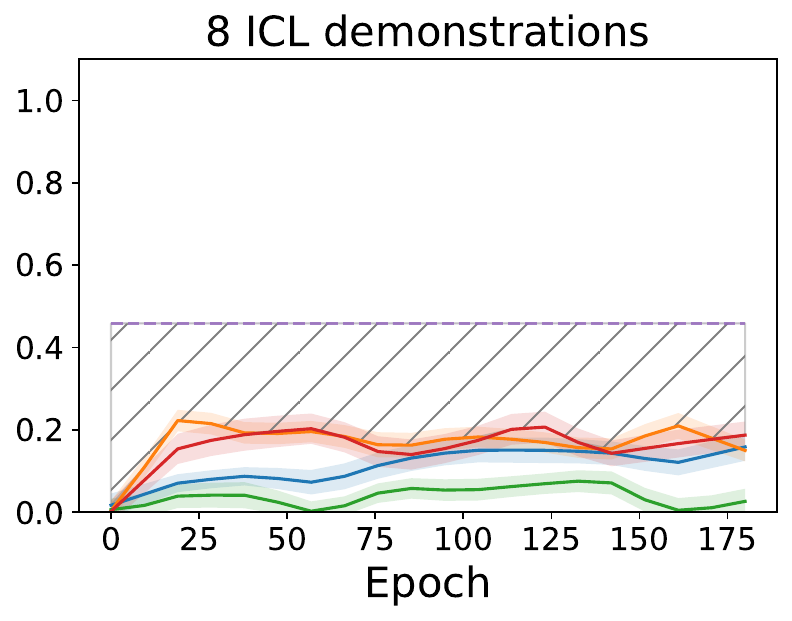}
    } 
    \subfigure[\textit{Overlap Cosine Similarity} comparison]{
    \includegraphics[width=0.25\textwidth]{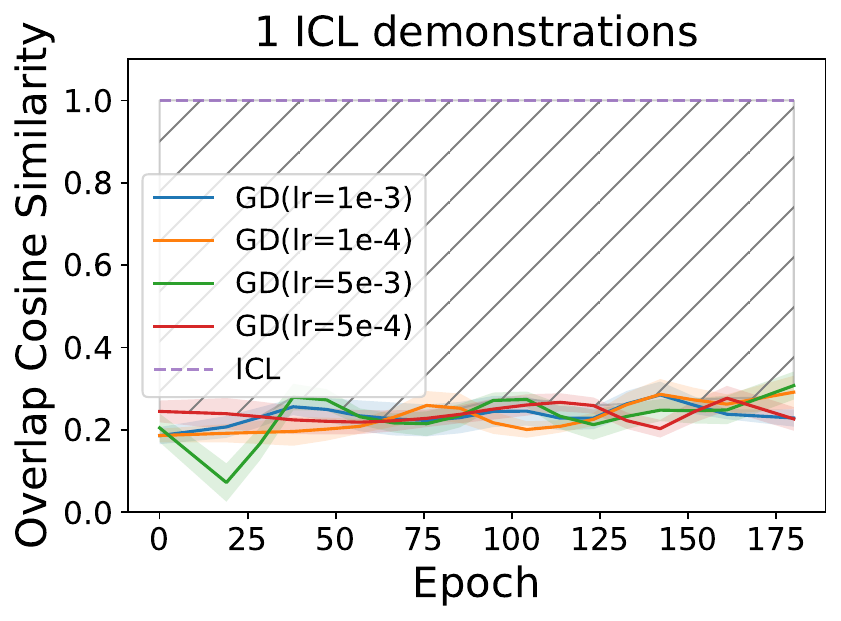}
    \includegraphics[width=0.24\textwidth]{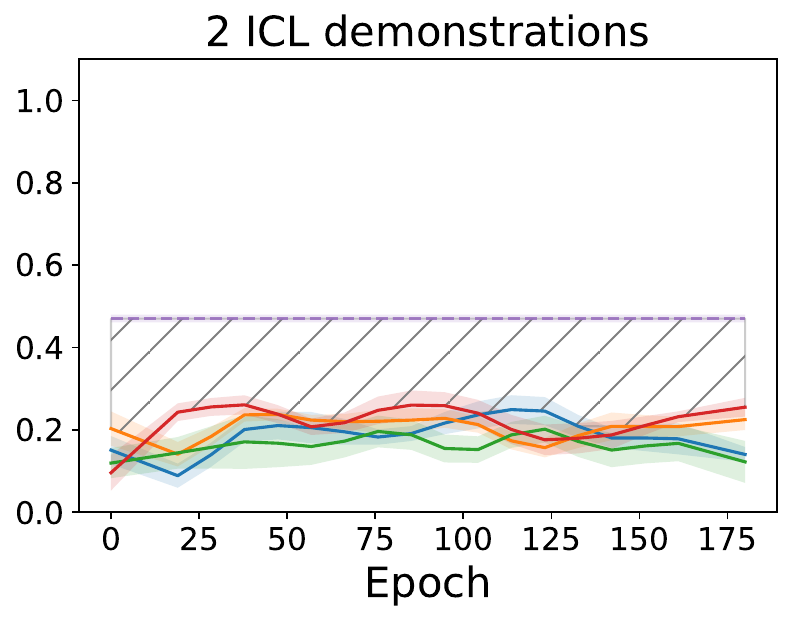}
    \includegraphics[width=0.24\textwidth]{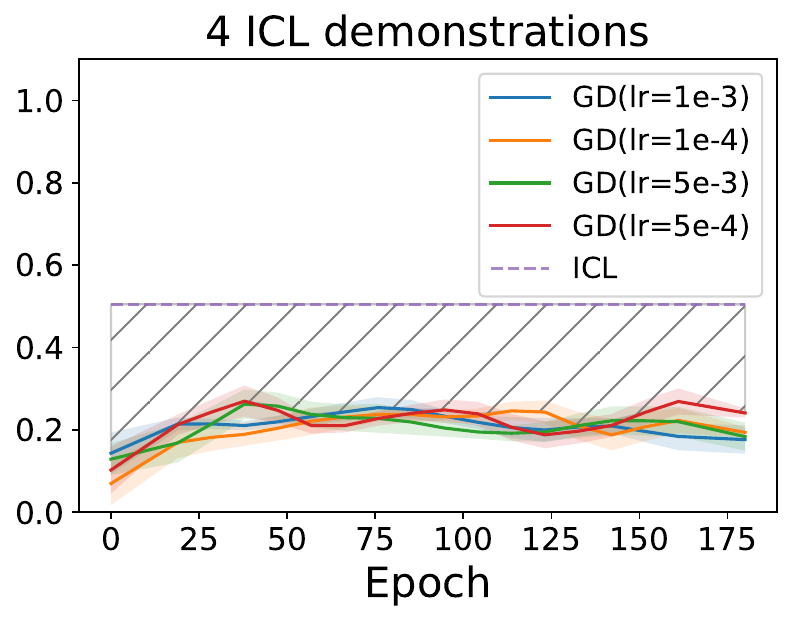}
    \includegraphics[width=0.24\textwidth]{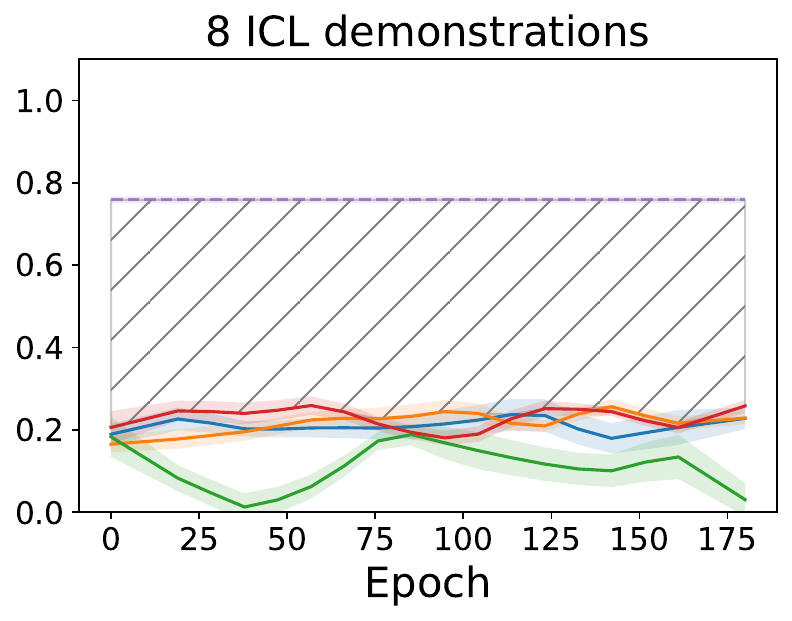}
    }\vspace{-3.5mm}
\caption{Comparison of ICL and $\widehat{\text{GD}}$ for the SST dataset, with increasing number of demonstrations. $\widehat{\text{GD}}$ is simulated by optimizing on one random deep layer of LLaMa.}  
\label{sst2-deep}
\end{figure}
\vspace{-2mm}

\begin{figure}[H]
\centering
    \subfigure[\textit{Accuracy} comparison]{
    \includegraphics[width=0.25\textwidth]{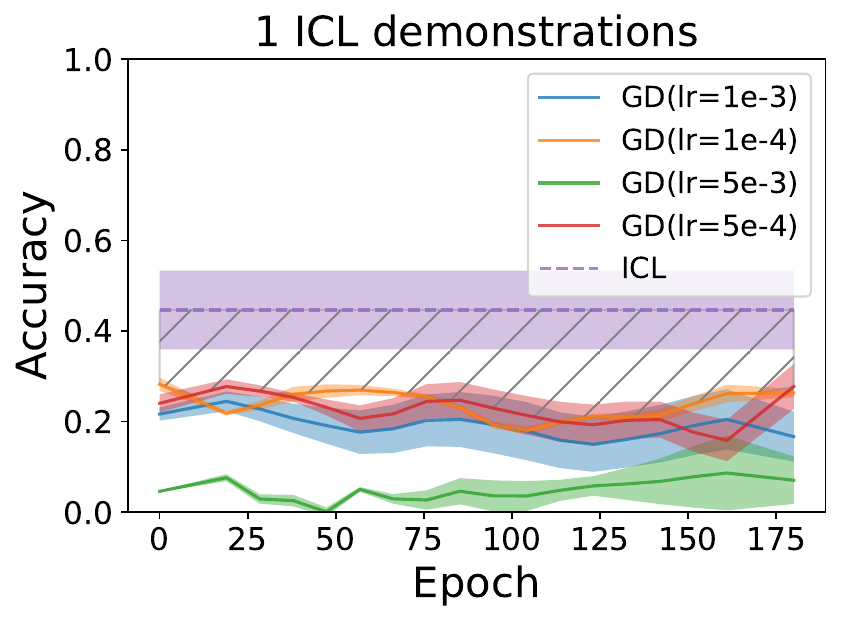}
    \includegraphics[width=0.24\textwidth]{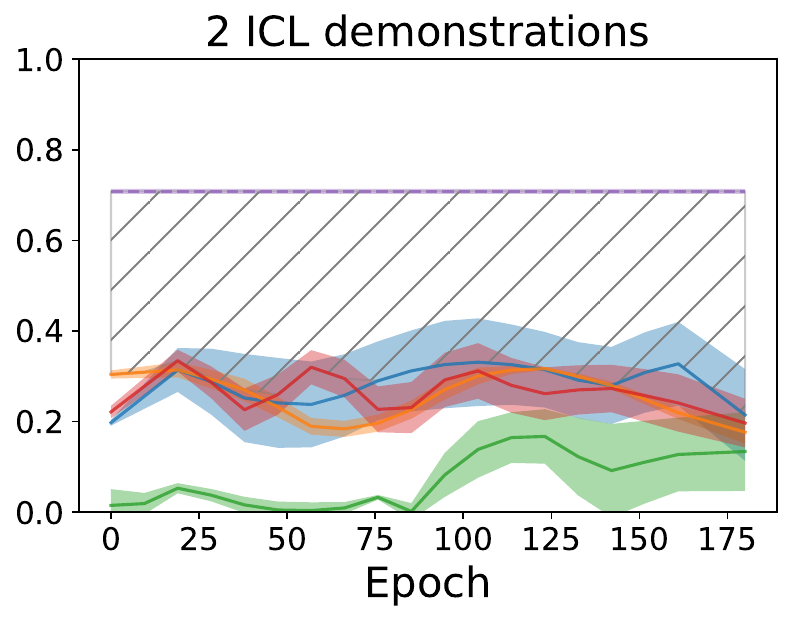}
    \includegraphics[width=0.24\textwidth]{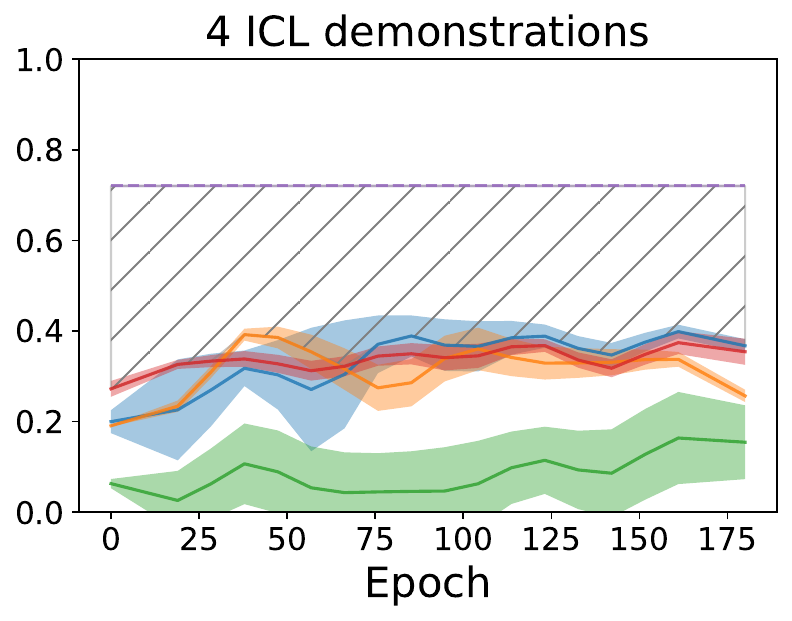}
    \includegraphics[width=0.24\textwidth]{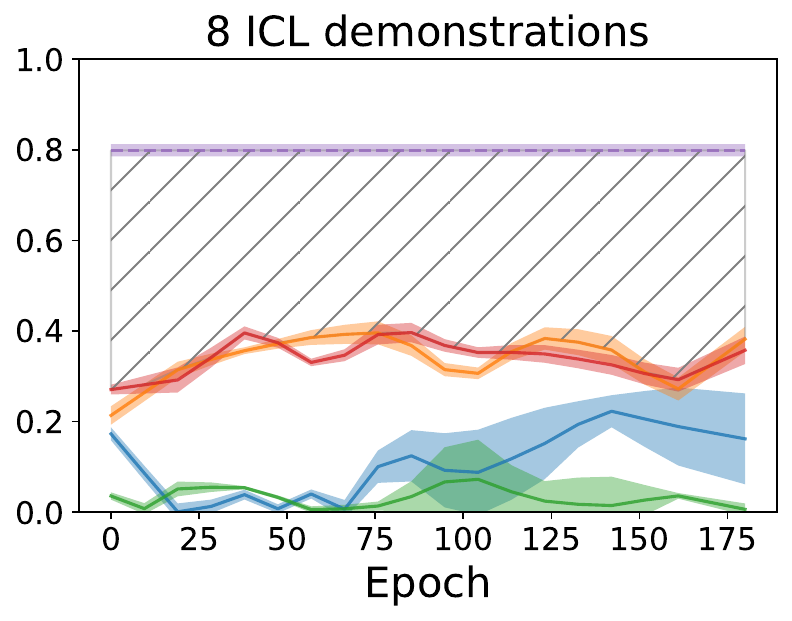}
    }
    \subfigure[\textit{Token overlap} comparison]{
    \includegraphics[width=0.25\textwidth]{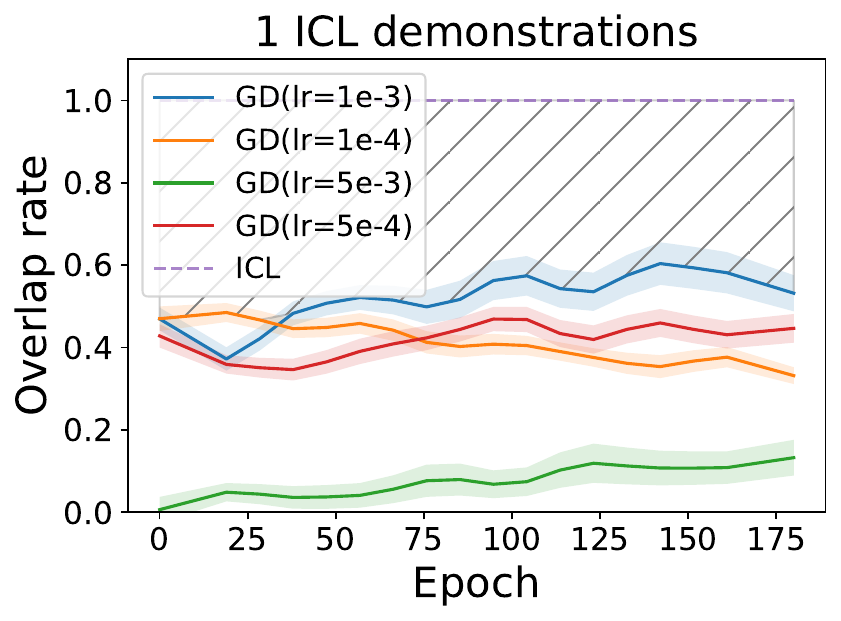}
    \includegraphics[width=0.24\textwidth]{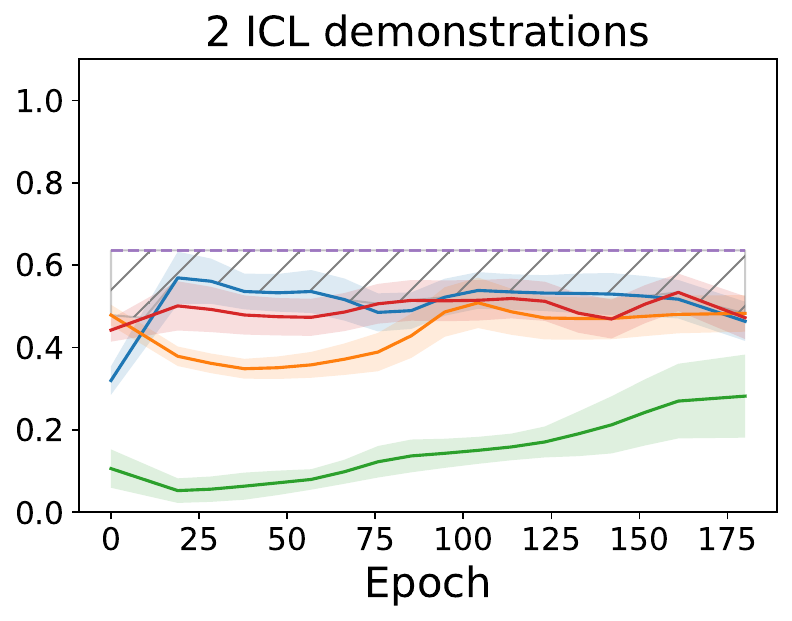}
    \includegraphics[width=0.24\textwidth]{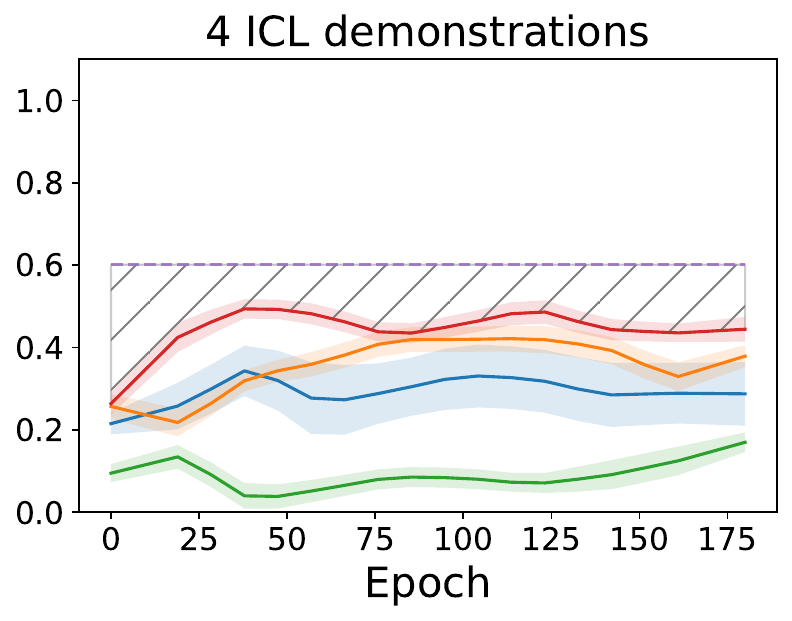}
    \includegraphics[width=0.24\textwidth]{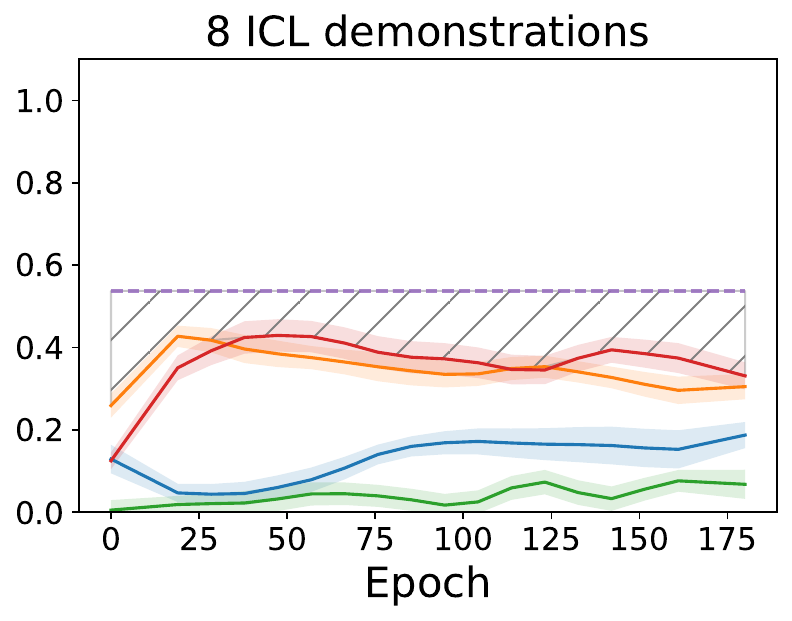}
    } 
    \subfigure[\textit{Overlap Cosine Similarity} comparison]{
    \includegraphics[width=0.25\textwidth]{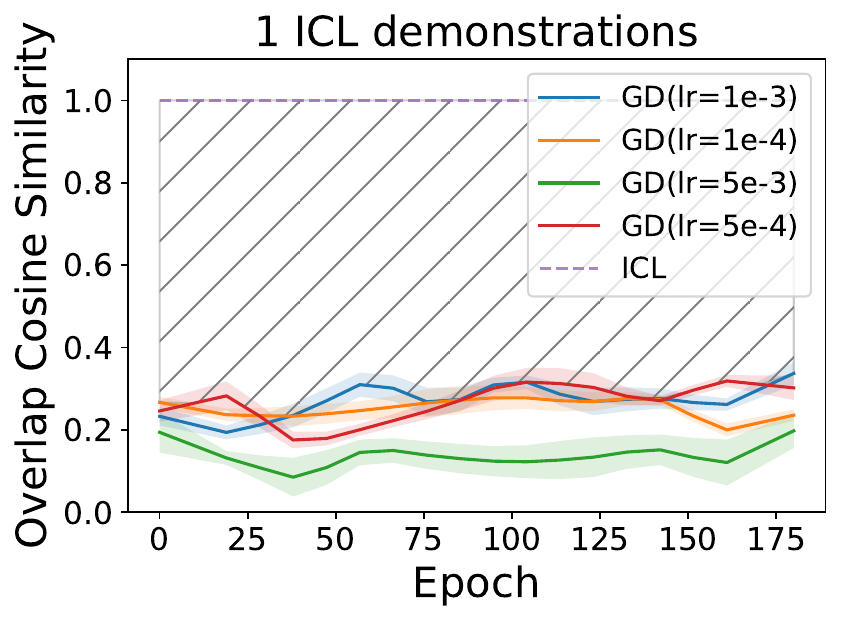}
    \includegraphics[width=0.24\textwidth]{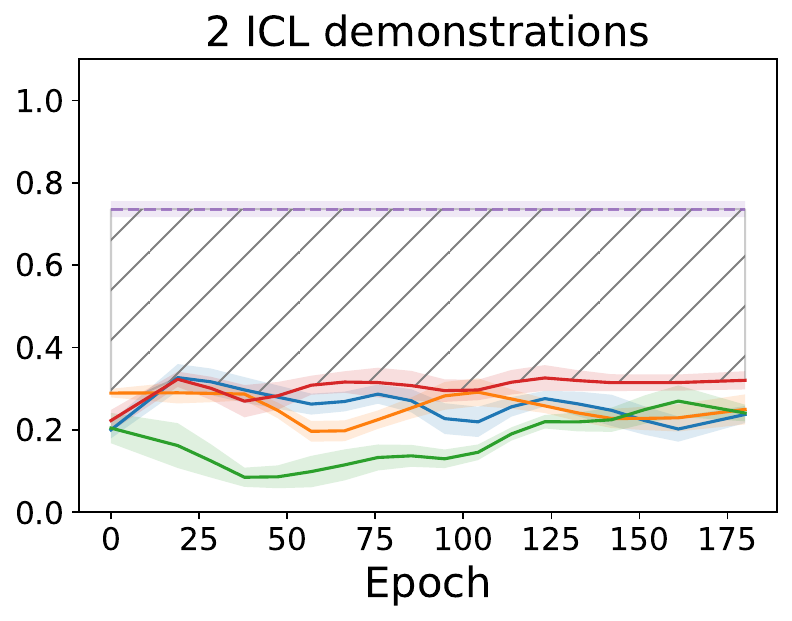}
    \includegraphics[width=0.24\textwidth]{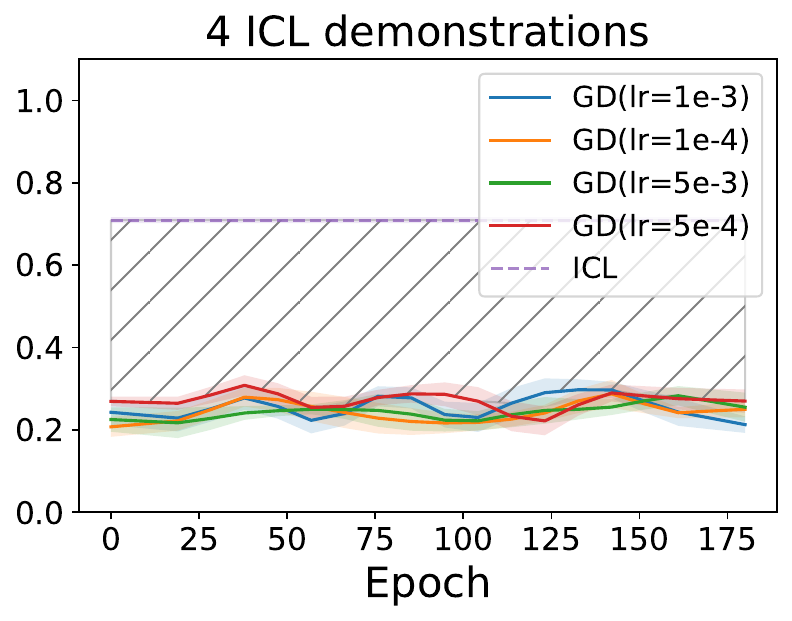}
    \includegraphics[width=0.24\textwidth]{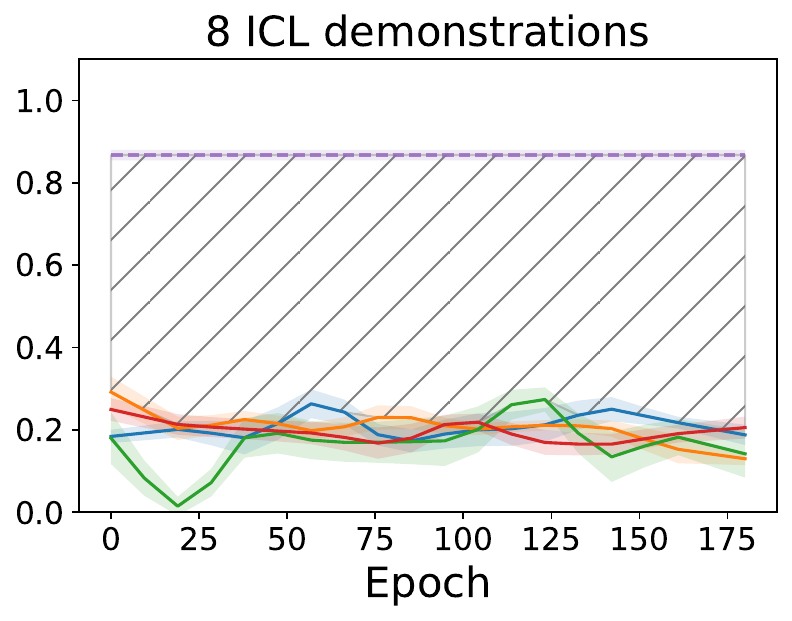}
    }\vspace{-3.5mm}
\caption{Comparison of ICL and $\widehat{\text{GD}}$ for the CB dataset, with increasing number of demonstrations. $\widehat{\text{GD}}$ is simulated by optimizing on one random deep layer of LLaMa.}  
\label{cb-deep}
\end{figure}
\vspace{-2mm}

\begin{figure}[H]
\centering
    \subfigure[\textit{Accuracy} comparison]{
    \includegraphics[width=0.25\textwidth]{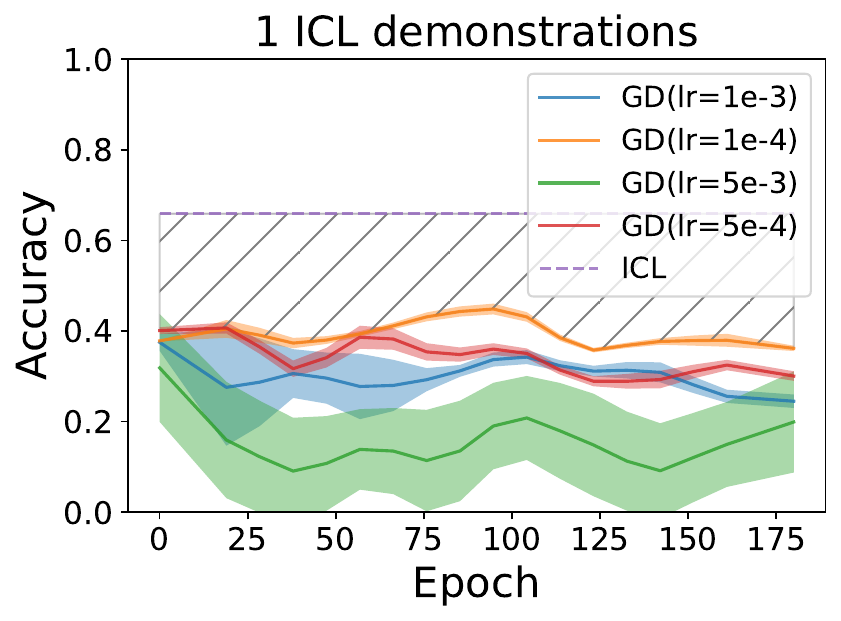}
    \includegraphics[width=0.24\textwidth]{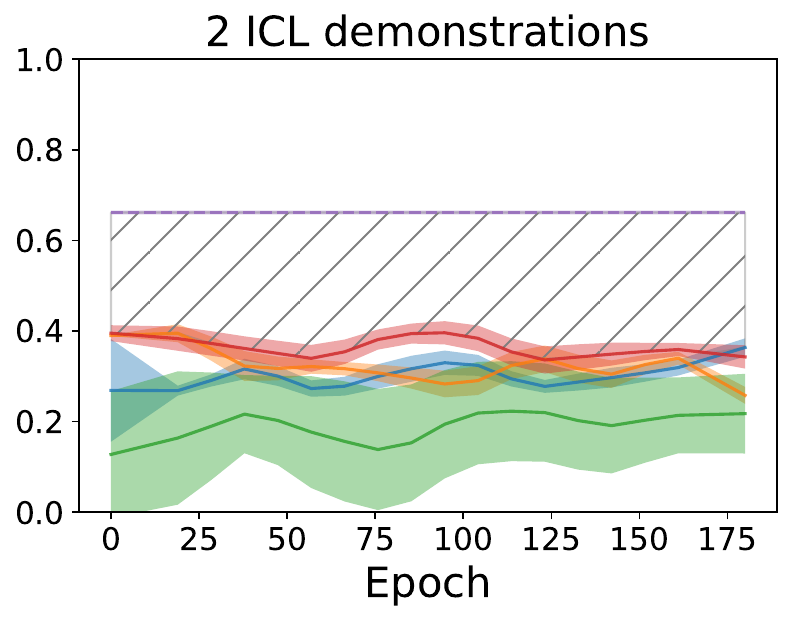}
    \includegraphics[width=0.24\textwidth]{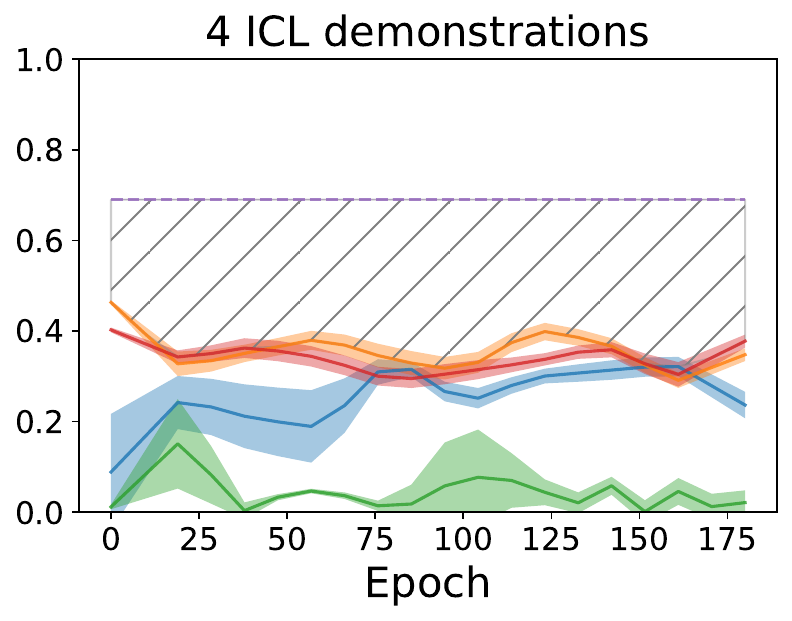}
    \includegraphics[width=0.24\textwidth]{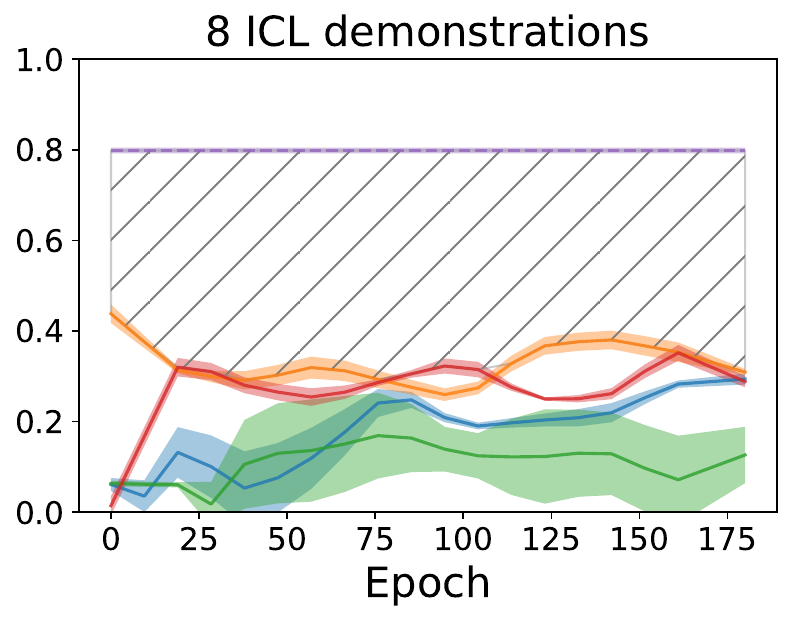}
    }
    \subfigure[\textit{Token overlap} comparison]{
    \includegraphics[width=0.25\textwidth]{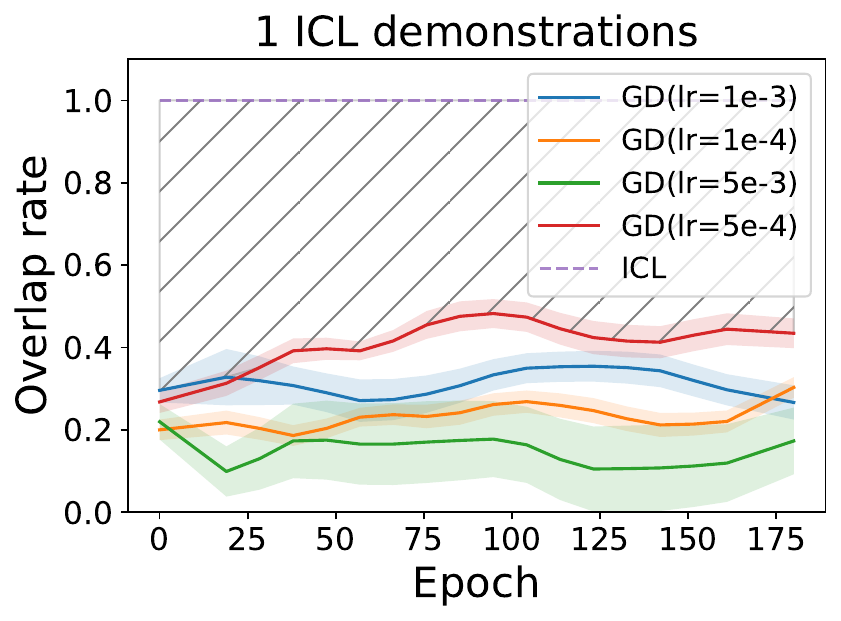}
    \includegraphics[width=0.24\textwidth]{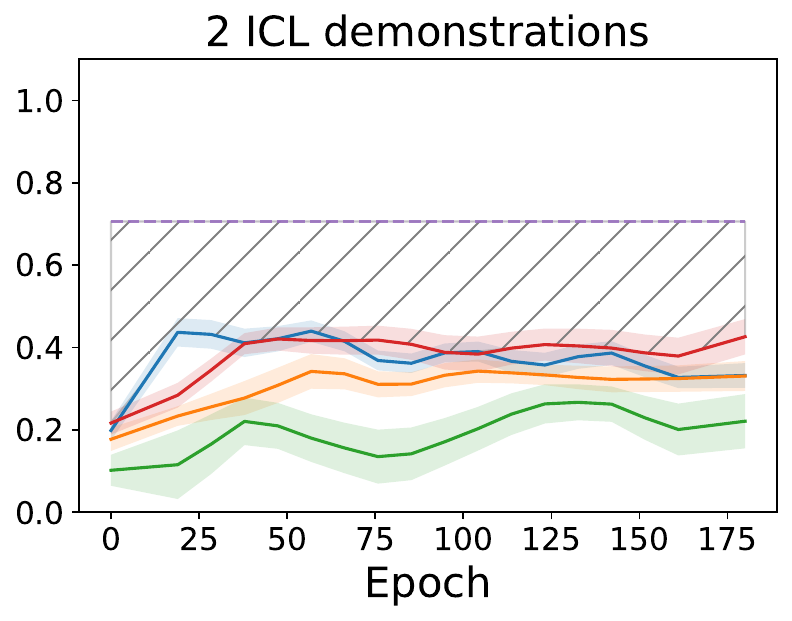}
    \includegraphics[width=0.24\textwidth]{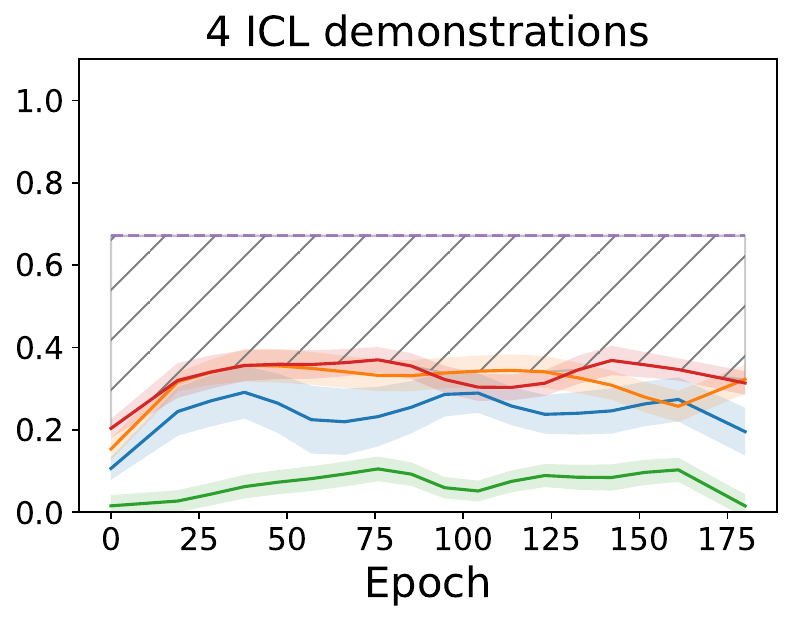}
    \includegraphics[width=0.24\textwidth]{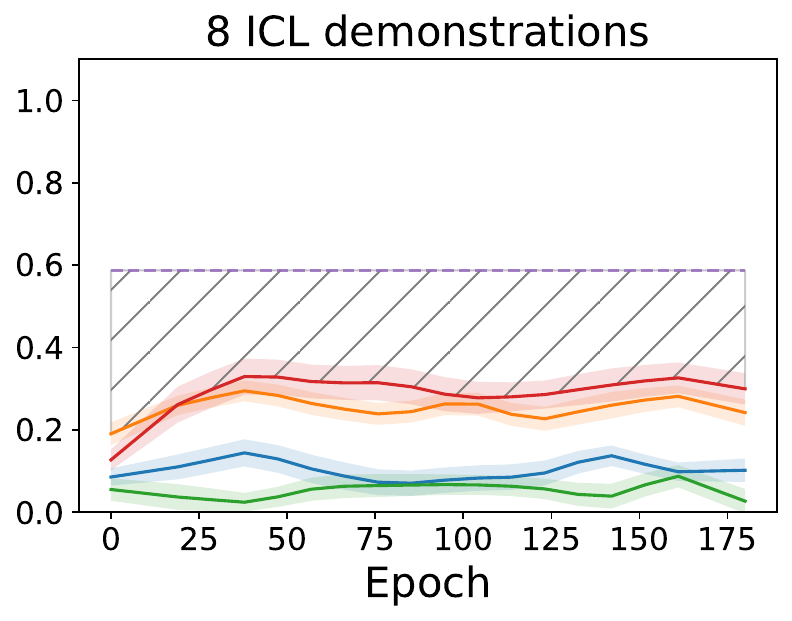}
    } 
    \subfigure[\textit{Overlap Cosine Similarity} comparison]{
    \includegraphics[width=0.25\textwidth]{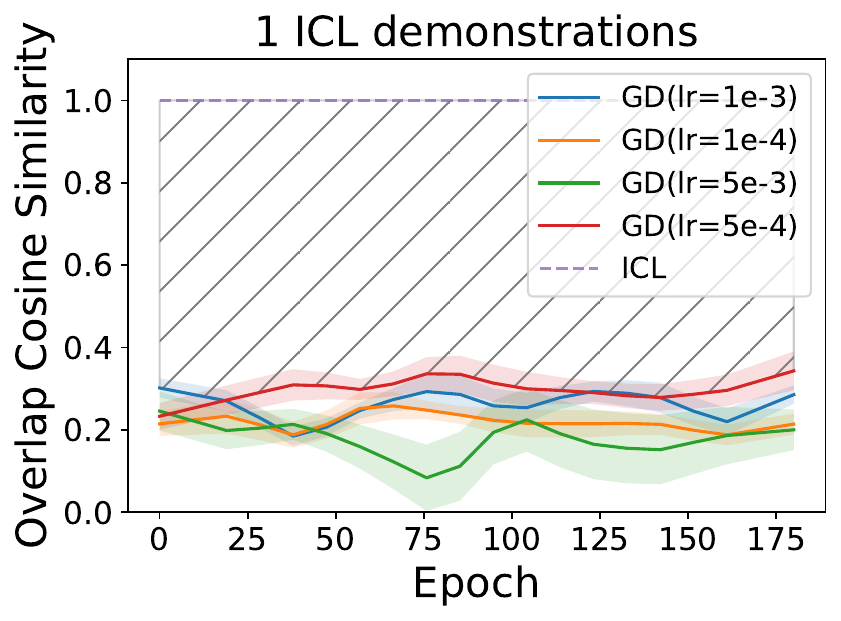}
    \includegraphics[width=0.24\textwidth]{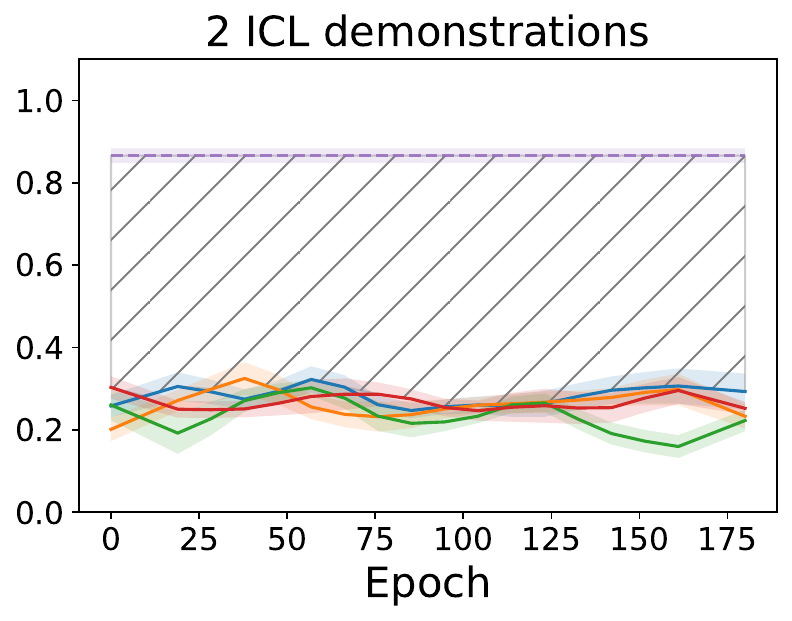}
    \includegraphics[width=0.24\textwidth]{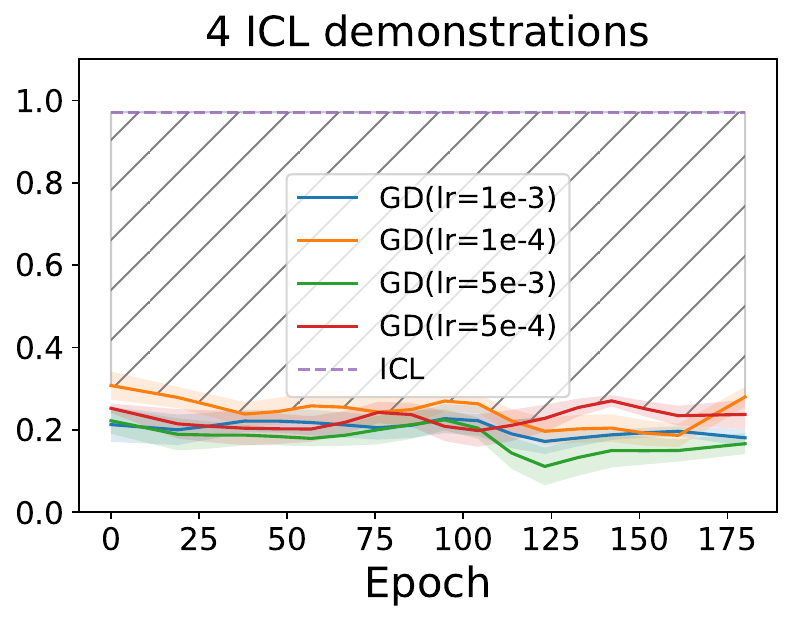}
    \includegraphics[width=0.24\textwidth]{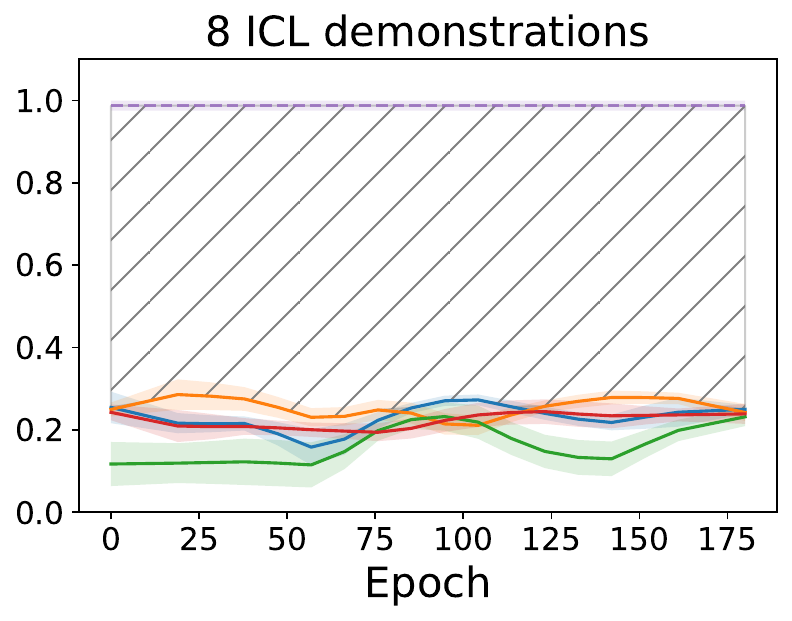}
    }\vspace{-3.5mm}
\caption{Comparison of ICL and $\widehat{\text{GD}}$ for the RTE dataset, with increasing number of demonstrations. $\widehat{\text{GD}}$ is simulated by optimizing on one random deep layer of LLaMa.}  
\label{rte-deep}
\end{figure}

\paragraph{Results of ICL vs. $\widehat{\text{GD}}$ (Middle layers)}
This time, we randomly select one layer from the middle layers of LLaMa (16-20).
The results are shown in \autoref{ag-new2} - \autoref{rte-new2}, we can observe similar gaps between ICL and $\widehat{\text{GD}}$.

\begin{figure}[H]
\centering
    \subfigure[\textit{Accuracy} comparison]{
    \includegraphics[width=0.25\textwidth]{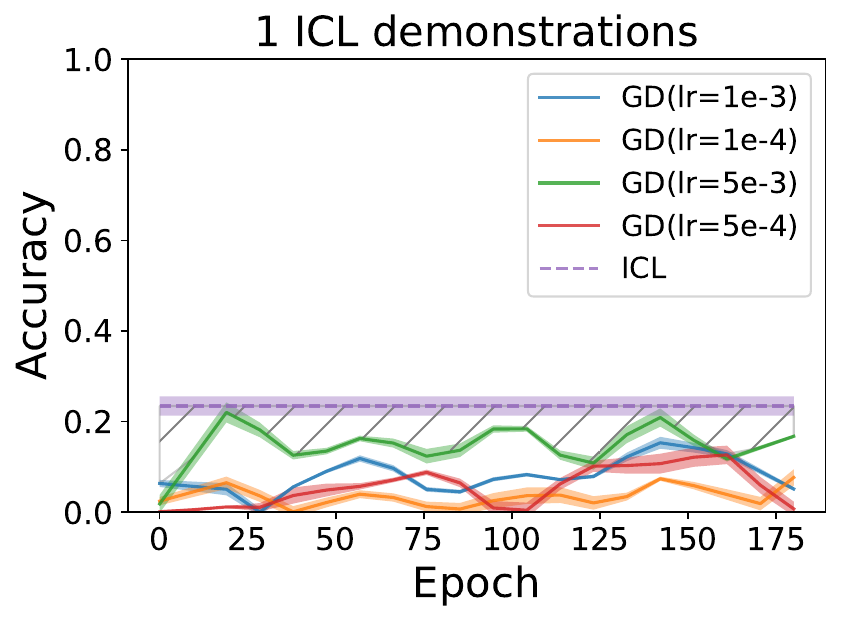}
    \includegraphics[width=0.24\textwidth]{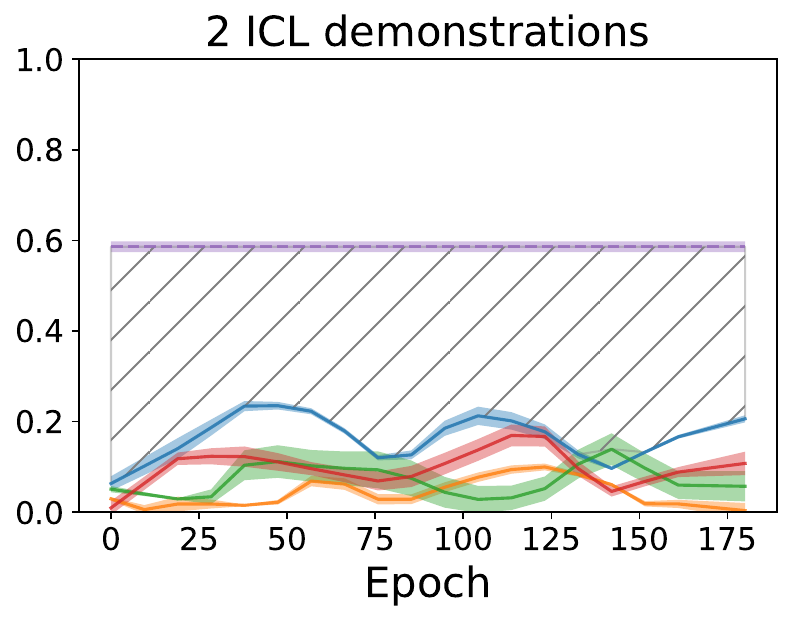}
    \includegraphics[width=0.24\textwidth]{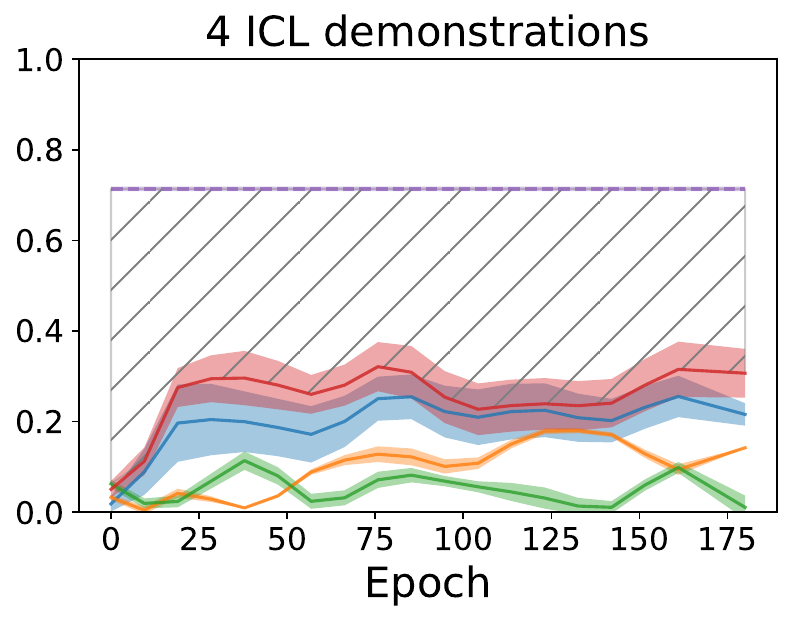}
    \includegraphics[width=0.24\textwidth]{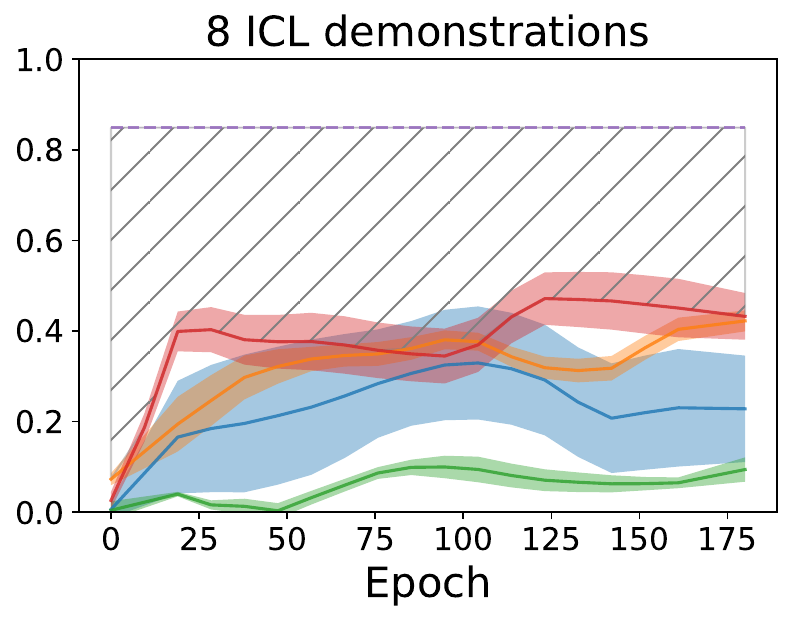}
    }
    \subfigure[\textit{Token overlap} comparison]{
    \includegraphics[width=0.25\textwidth]{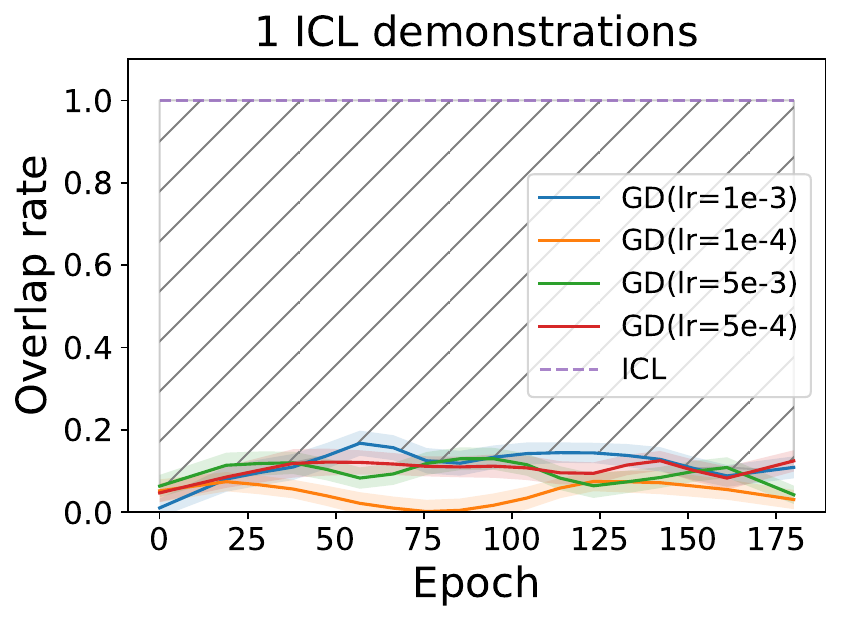}
    \includegraphics[width=0.24\textwidth]{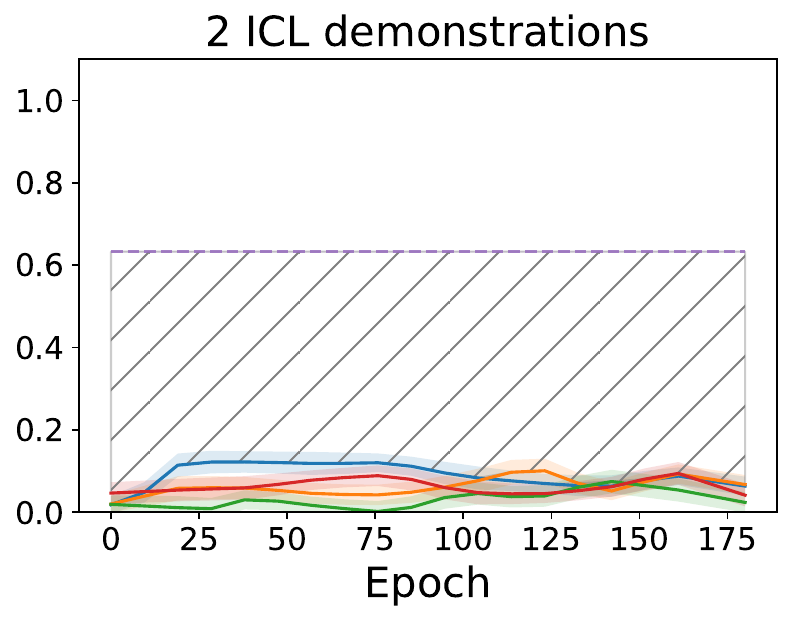}
    \includegraphics[width=0.24\textwidth]{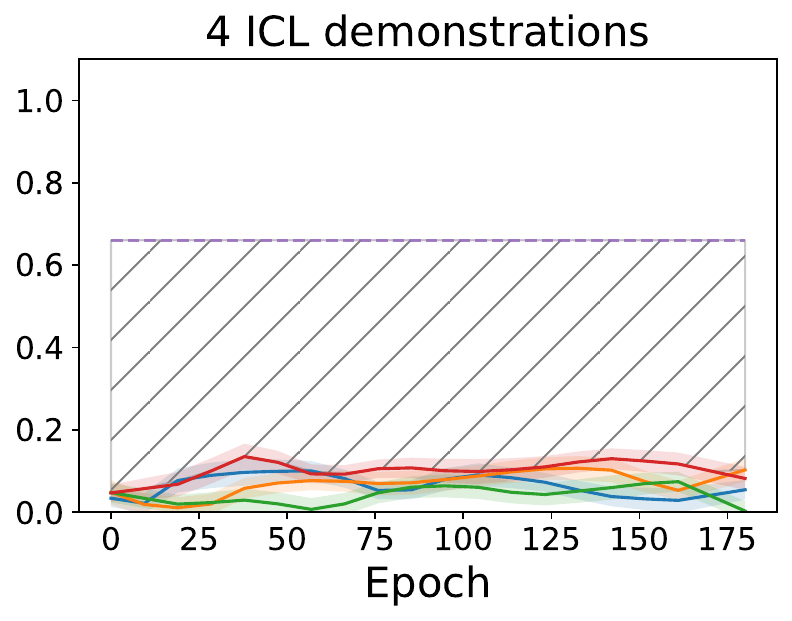}
    \includegraphics[width=0.24\textwidth]{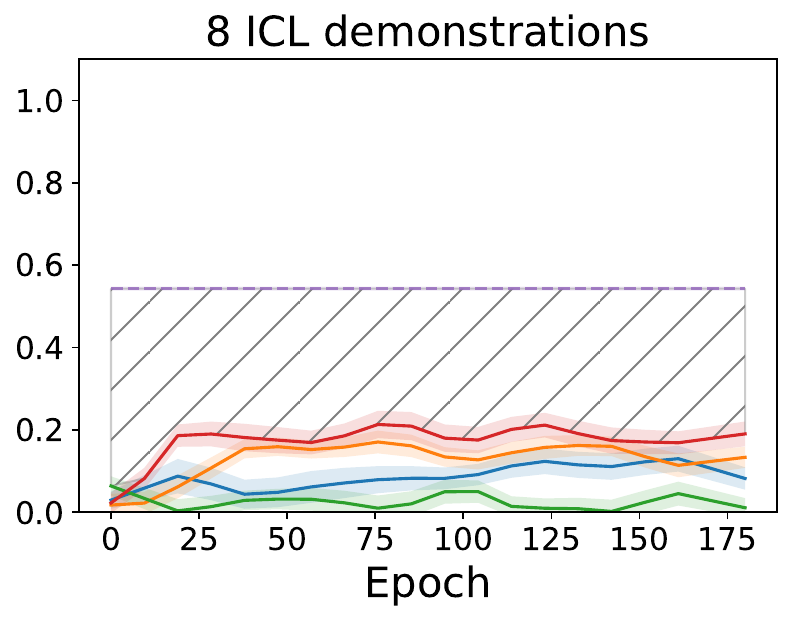}
    } 
    \subfigure[\textit{Overlap Cosine Similarity} comparison]{
    \includegraphics[width=0.25\textwidth]{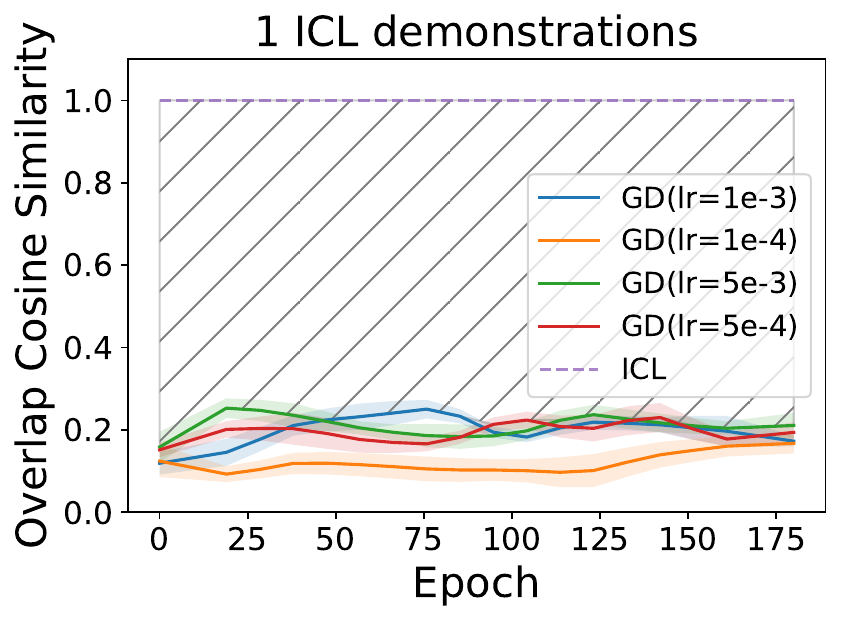}
    \includegraphics[width=0.24\textwidth]{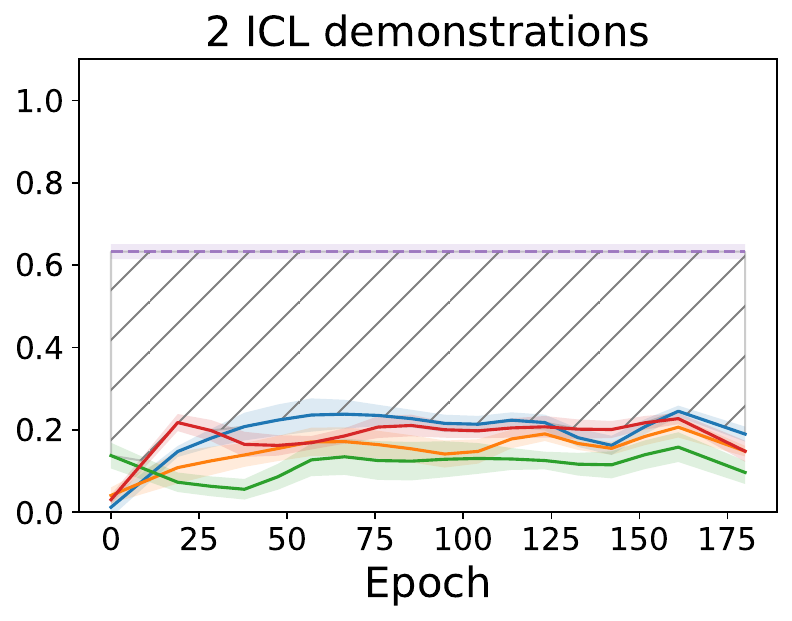}
    \includegraphics[width=0.24\textwidth]{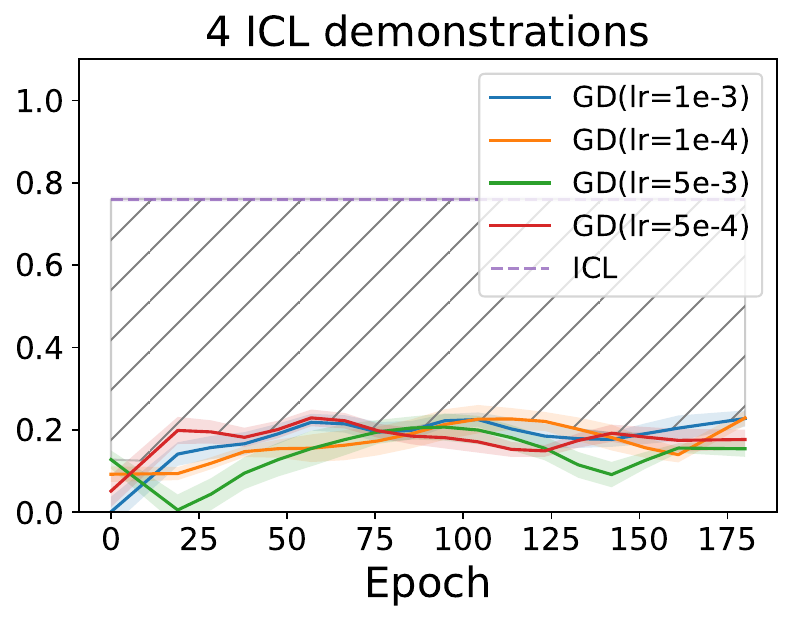}
    \includegraphics[width=0.24\textwidth]{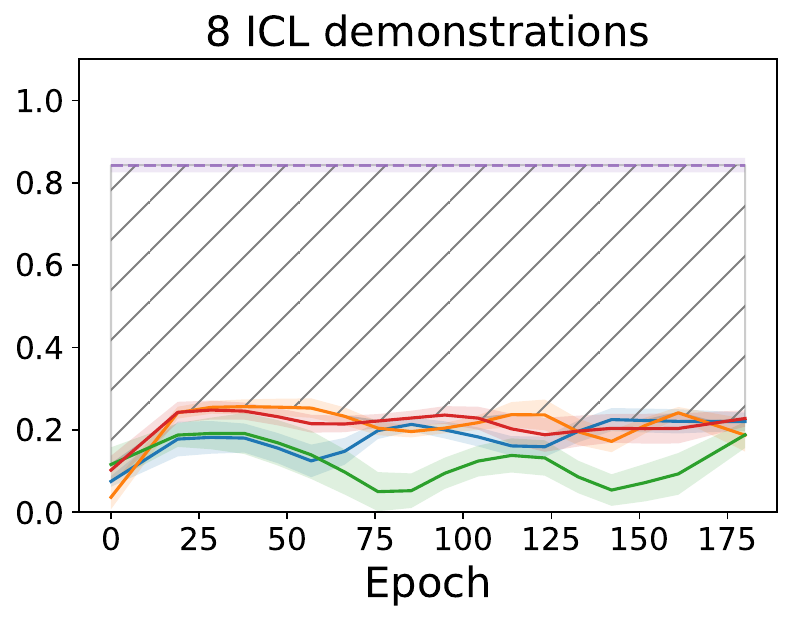}
    }\vspace{-3.5mm}
\caption{Comparison of ICL and $\widehat{\text{GD}}$ for the AGNews dataset, with increasing number of demonstrations. $\widehat{\text{GD}}$ is simulated by optimizing on one random middle layer of LLaMa.}  
\label{ag-new2}
\end{figure}
\vspace{-2mm}

\begin{figure}[H]
\centering
    \subfigure[\textit{Accuracy} comparison]{
    \includegraphics[width=0.25\textwidth]{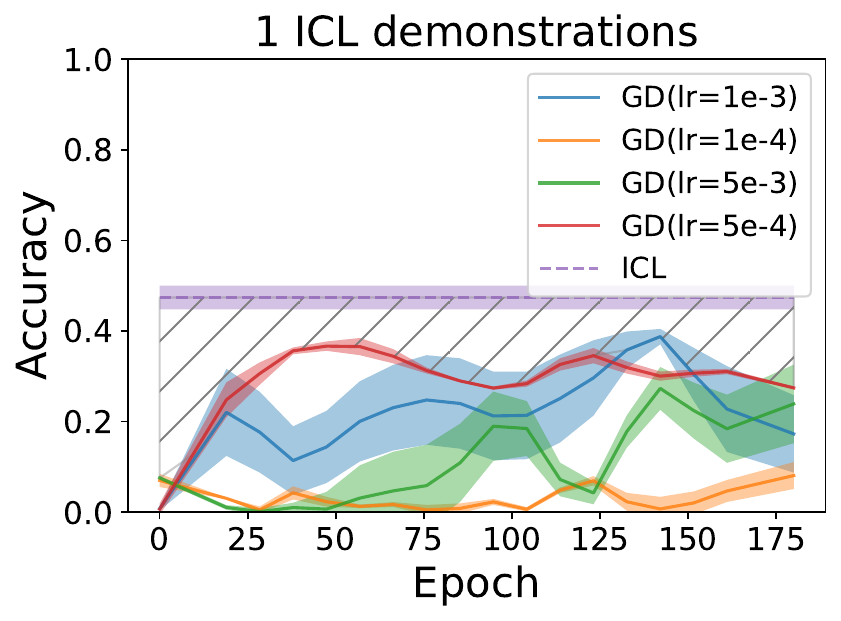}
    \includegraphics[width=0.24\textwidth]{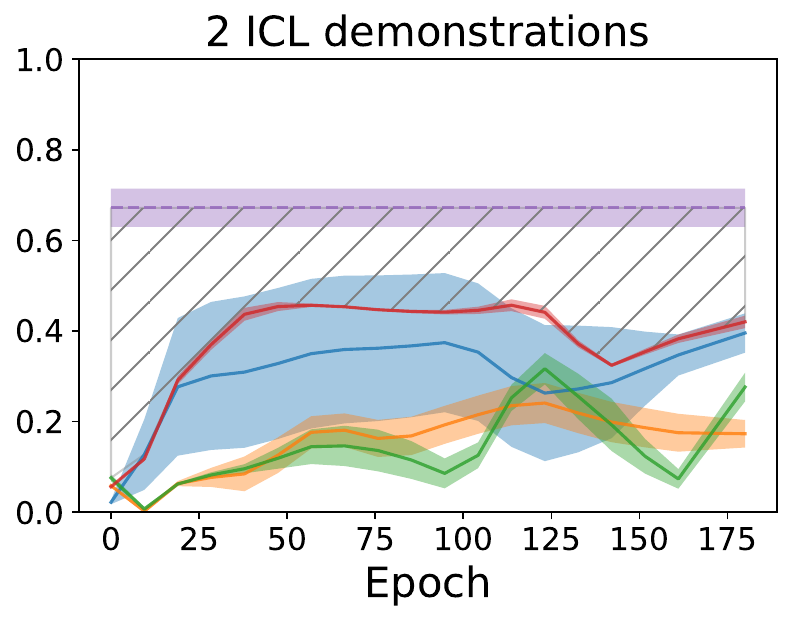}
    \includegraphics[width=0.24\textwidth]{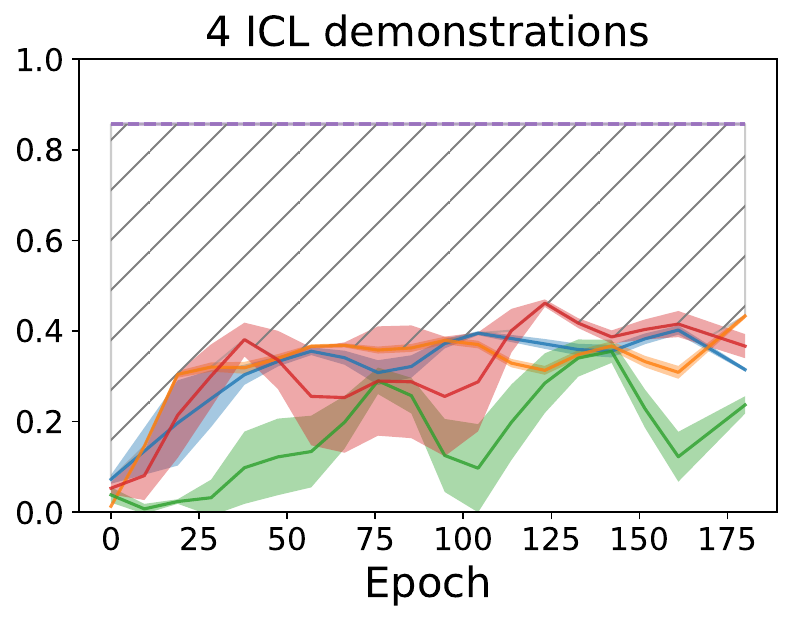}
    \includegraphics[width=0.24\textwidth]{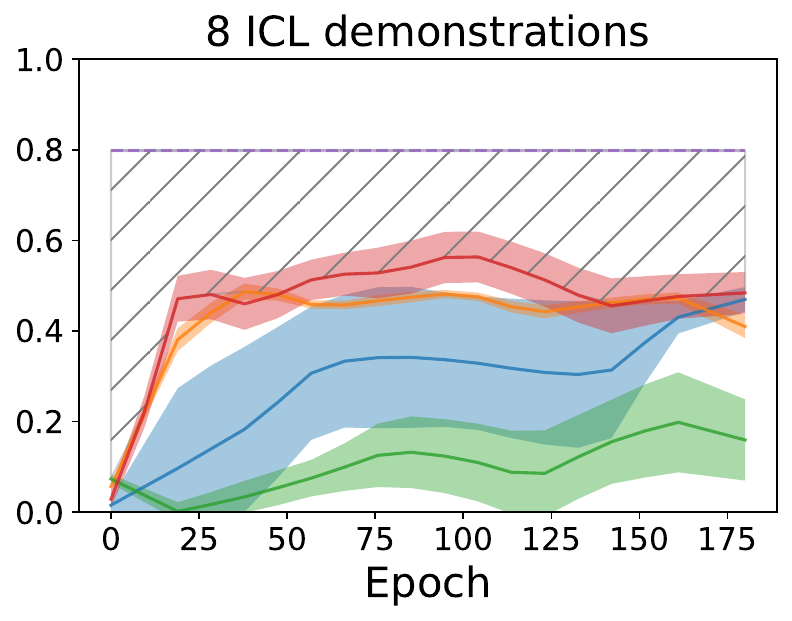}
    }
    \subfigure[\textit{Token overlap} comparison]{
    \includegraphics[width=0.25\textwidth]{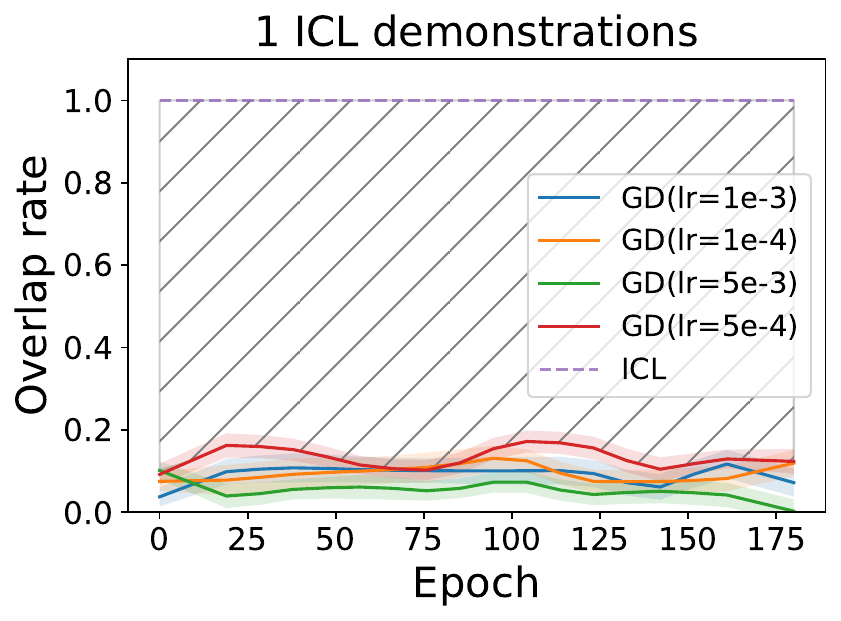}
    \includegraphics[width=0.24\textwidth]{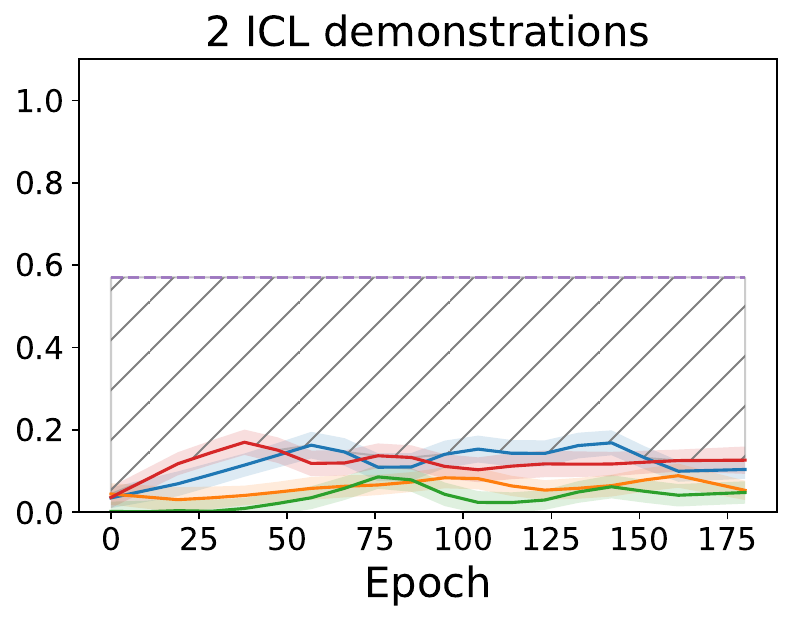}
    \includegraphics[width=0.24\textwidth]{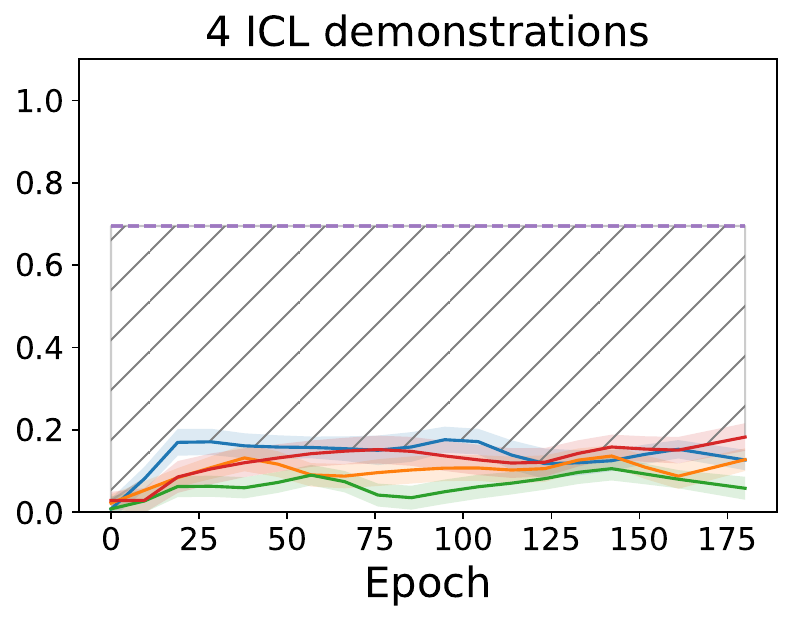}
    \includegraphics[width=0.24\textwidth]{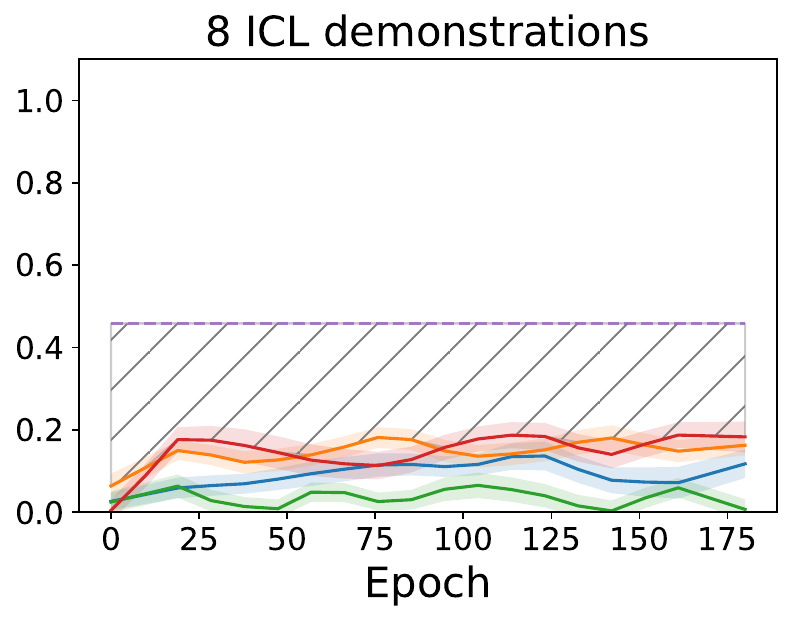}
    } 
    \subfigure[\textit{Overlap Cosine Similarity} comparison]{
    \includegraphics[width=0.25\textwidth]{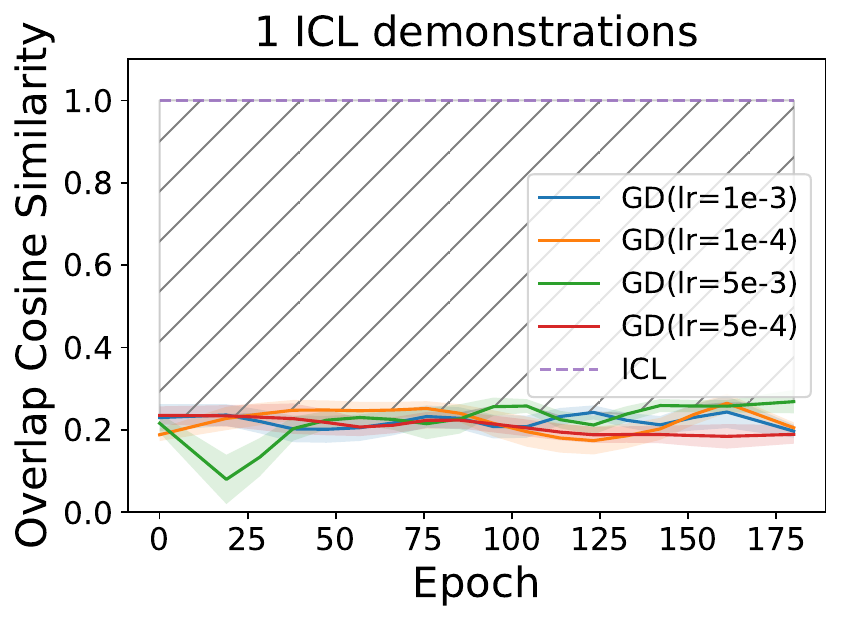}
    \includegraphics[width=0.24\textwidth]{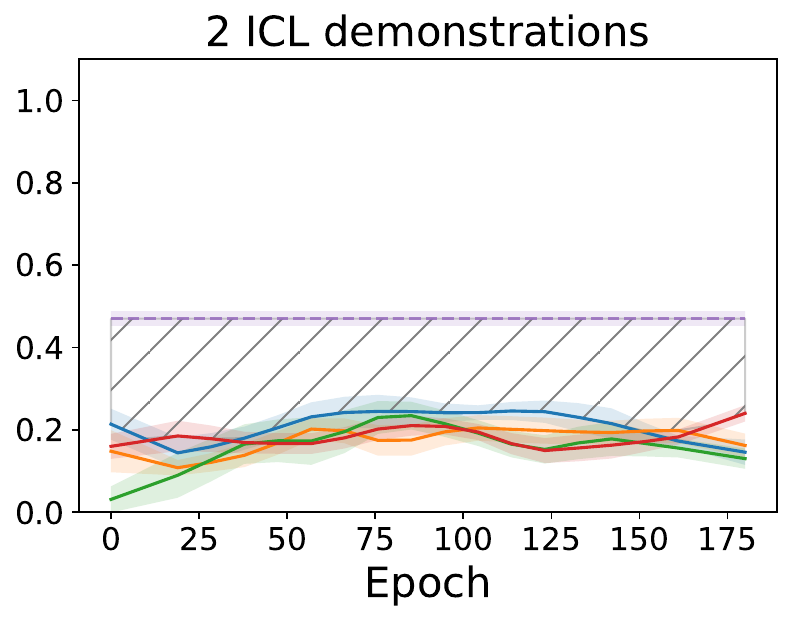}
    \includegraphics[width=0.24\textwidth]{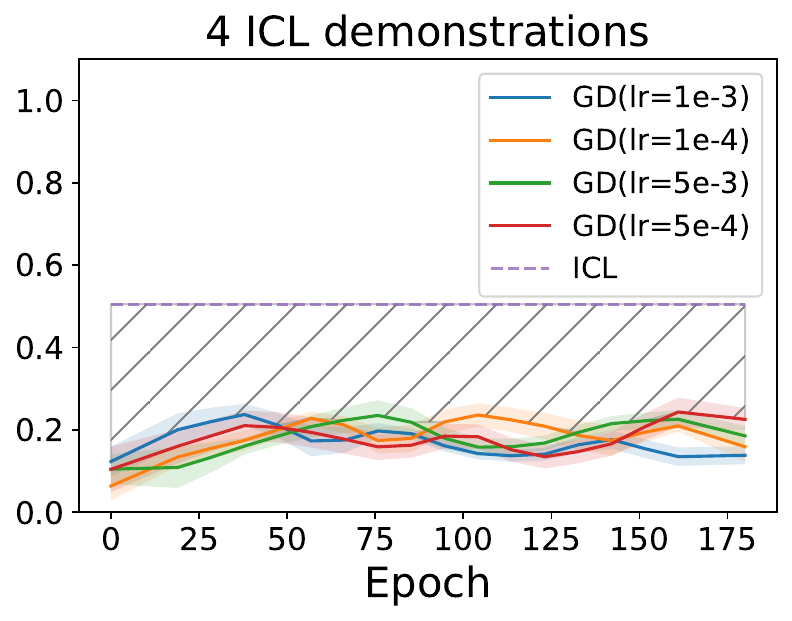}
    \includegraphics[width=0.24\textwidth]{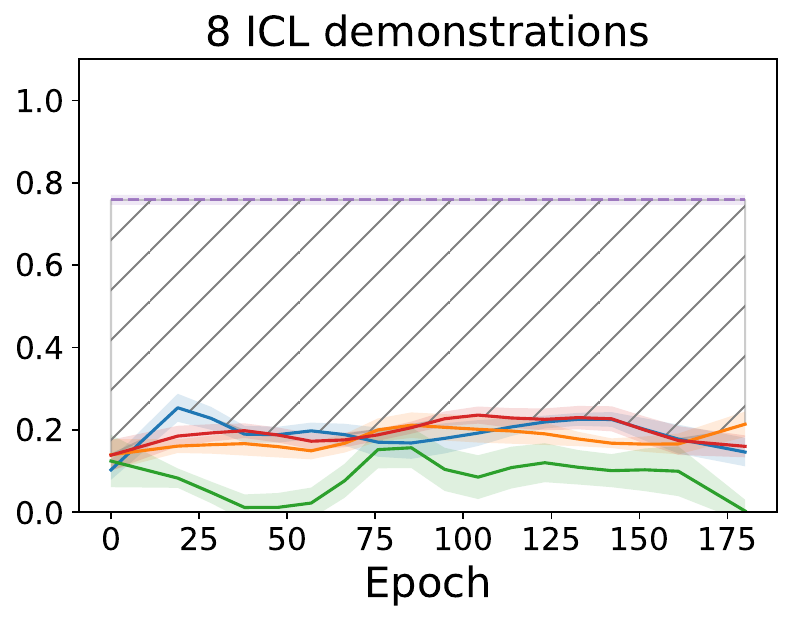}
    }\vspace{-3.5mm}
\caption{Comparison of ICL and $\widehat{\text{GD}}$ for the SST dataset, with increasing number of demonstrations. $\widehat{\text{GD}}$ is simulated by optimizing on one random middle layer of LLaMa.}  
\label{sst2-new2}
\end{figure}
\vspace{-2mm}

\begin{figure}[H]
\centering
    \subfigure[\textit{Accuracy} comparison]{
    \includegraphics[width=0.25\textwidth]{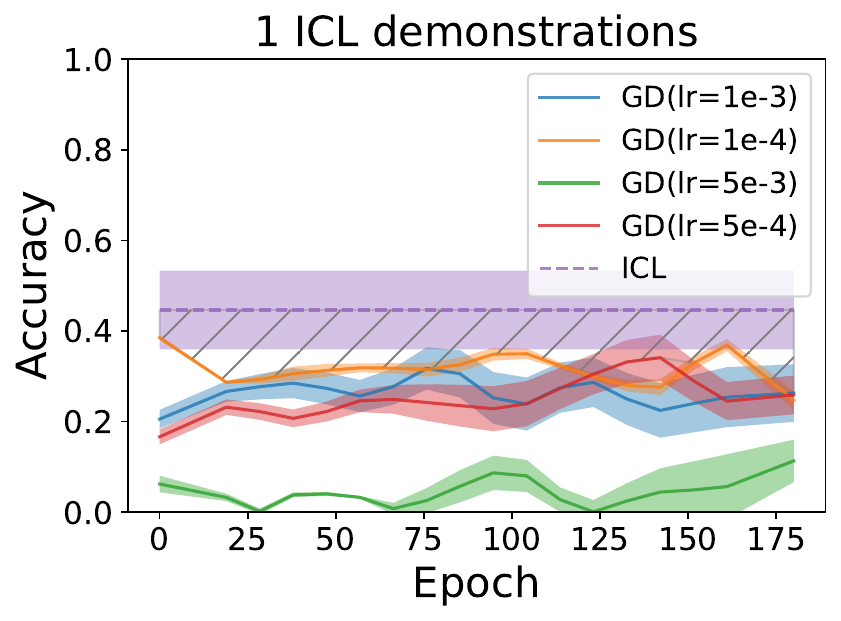}
    \includegraphics[width=0.24\textwidth]{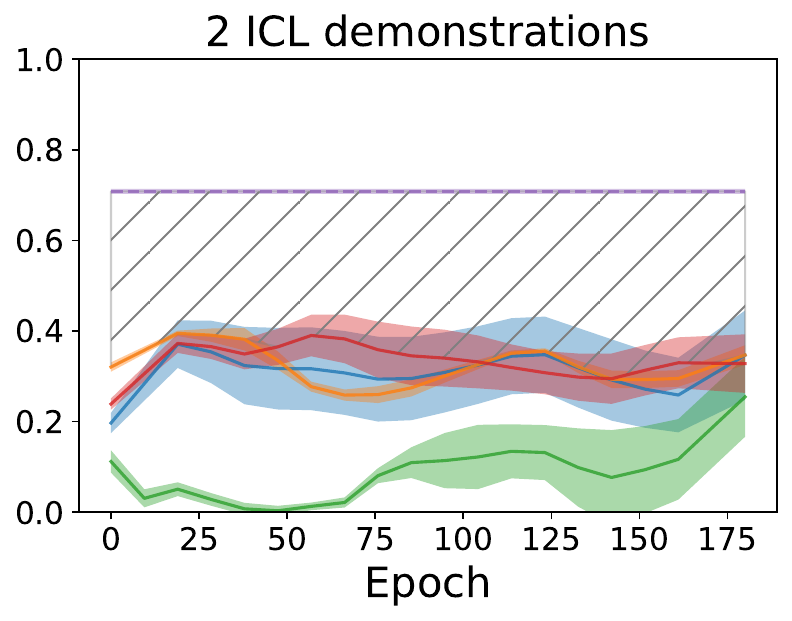}
    \includegraphics[width=0.24\textwidth]{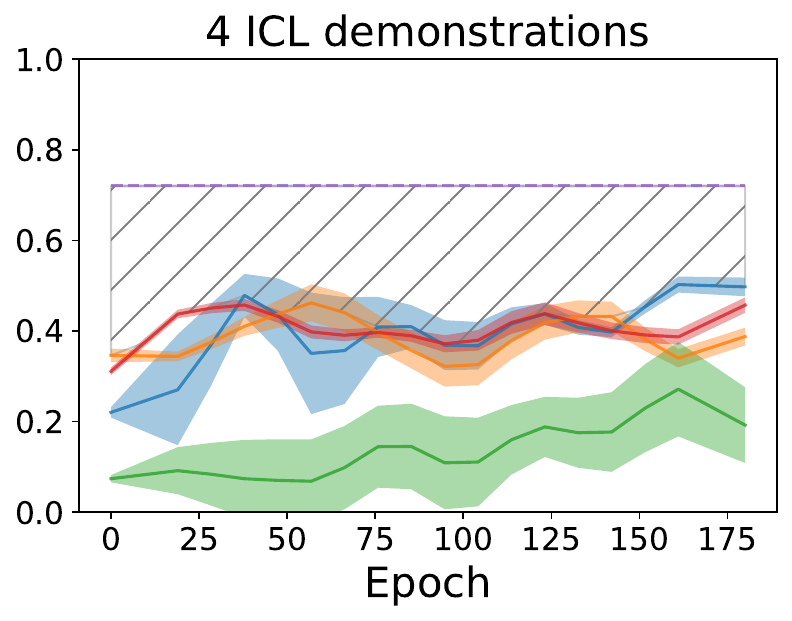}
    \includegraphics[width=0.24\textwidth]{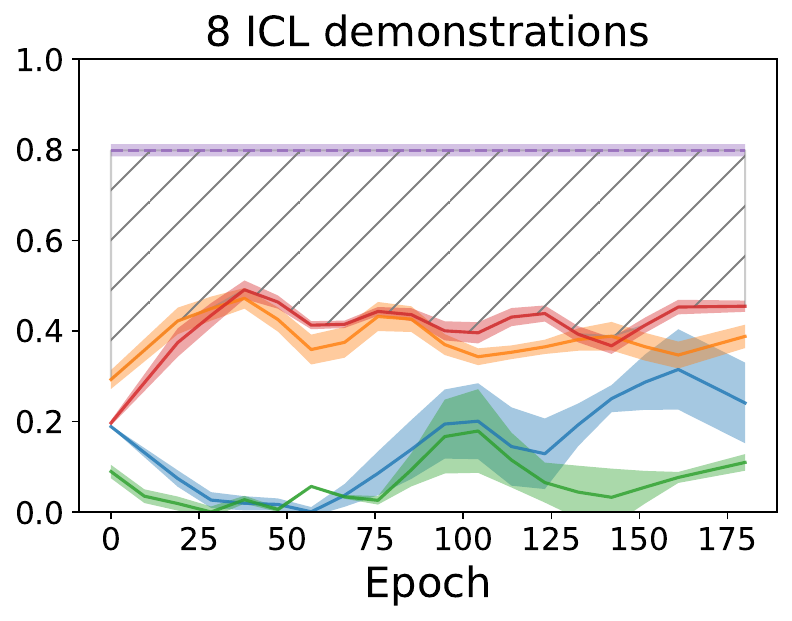}
    }
    \subfigure[\textit{Token overlap} comparison]{
    \includegraphics[width=0.25\textwidth]{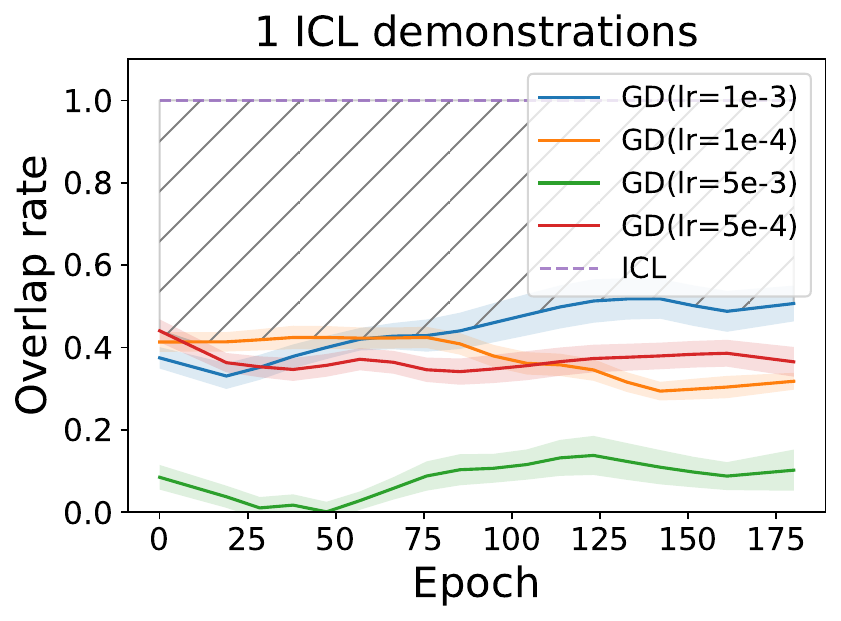}
    \includegraphics[width=0.24\textwidth]{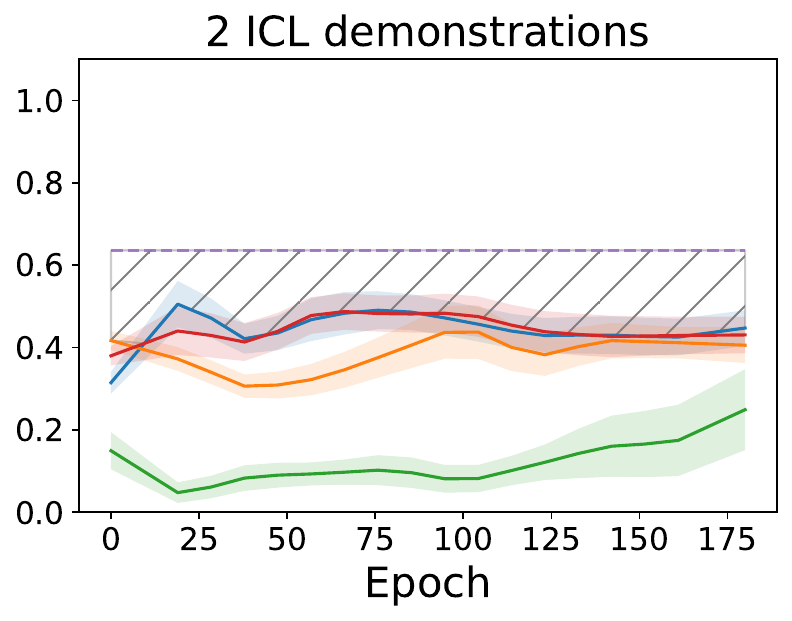}
    \includegraphics[width=0.24\textwidth]{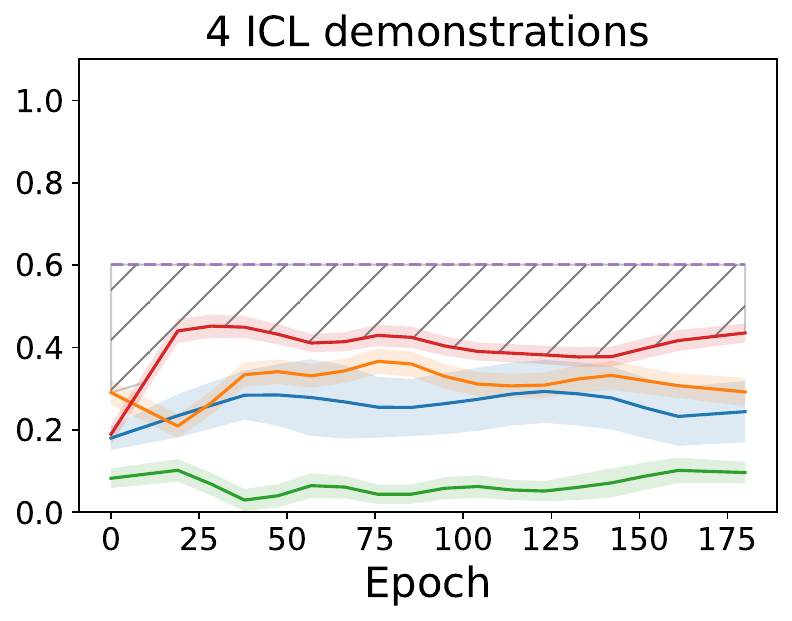}
    \includegraphics[width=0.24\textwidth]{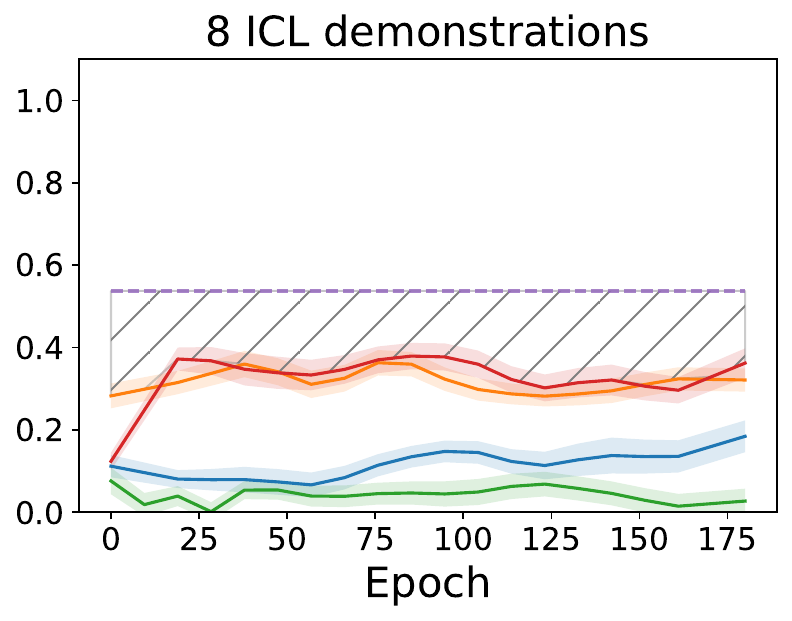}
    } 
    \subfigure[\textit{Overlap Cosine Similarity} comparison]{
    \includegraphics[width=0.25\textwidth]{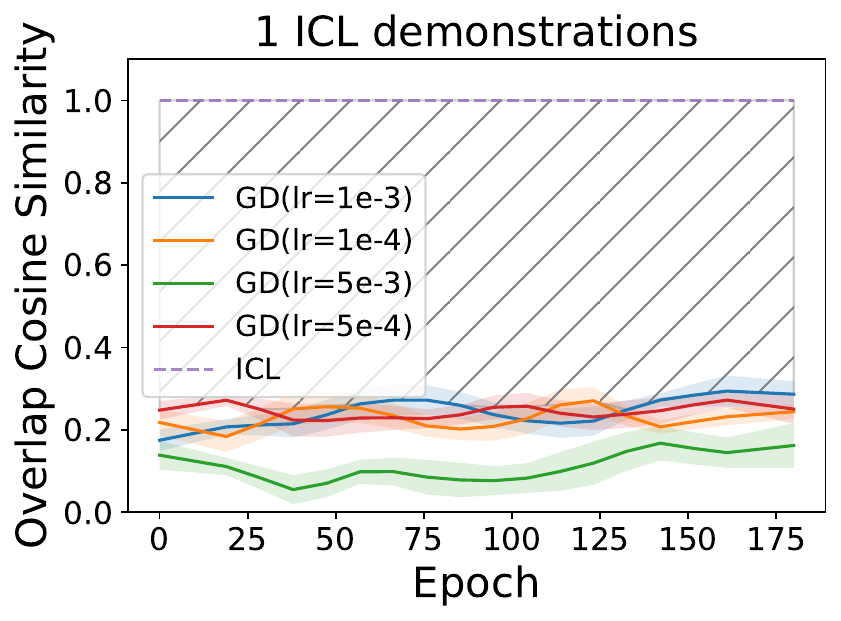}
    \includegraphics[width=0.24\textwidth]{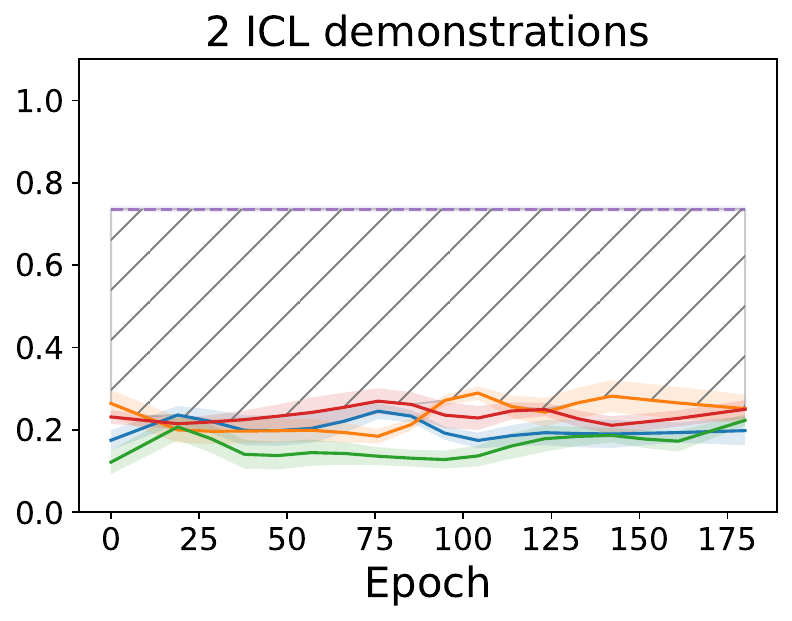}
    \includegraphics[width=0.24\textwidth]{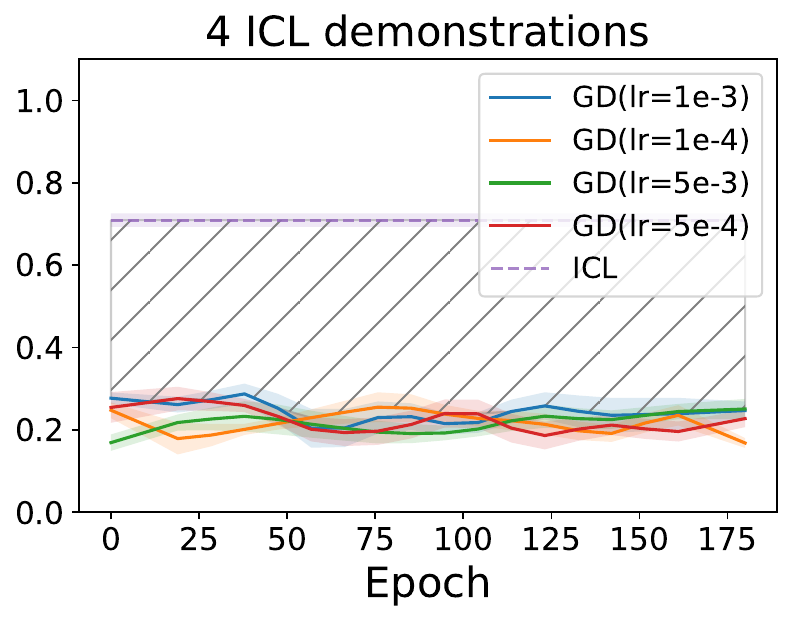}
    \includegraphics[width=0.24\textwidth]{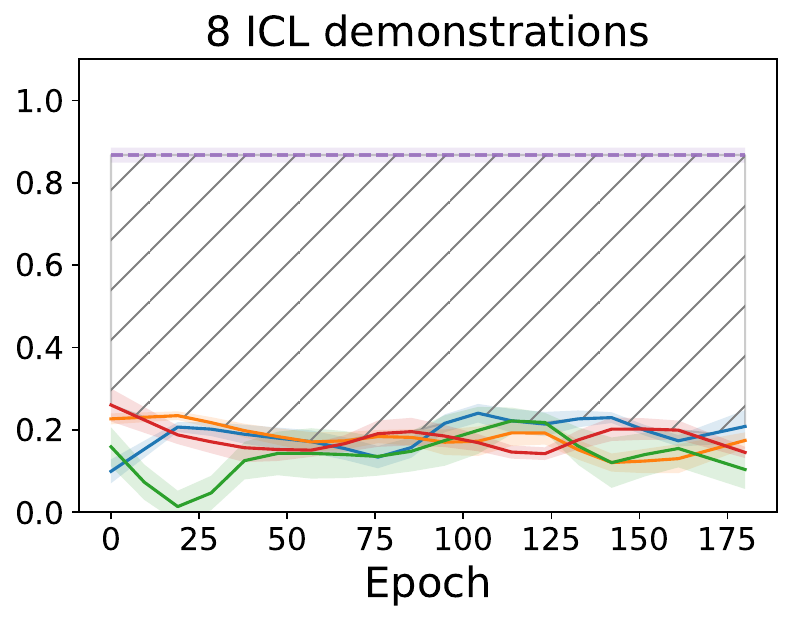}
    }\vspace{-3.5mm}
\caption{Comparison of ICL and $\widehat{\text{GD}}$ for the CB dataset, with increasing number of demonstrations. $\widehat{\text{GD}}$ is simulated by optimizing on one random middle layer of LLaMa.}  
\label{cb-new2}
\end{figure}
\vspace{-2mm}

\begin{figure}[H]
\centering
    \subfigure[\textit{Accuracy} comparison]{
    \includegraphics[width=0.25\textwidth]{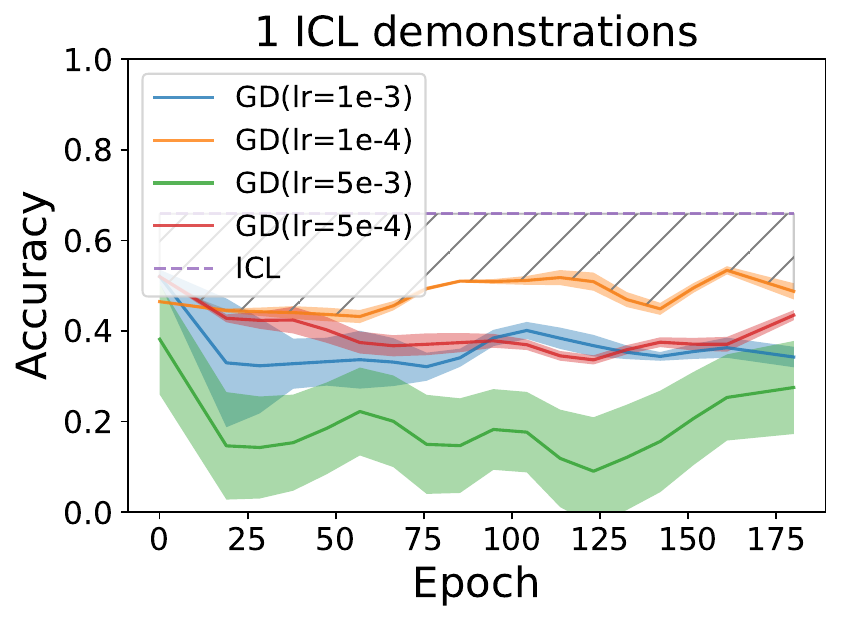}
    \includegraphics[width=0.24\textwidth]{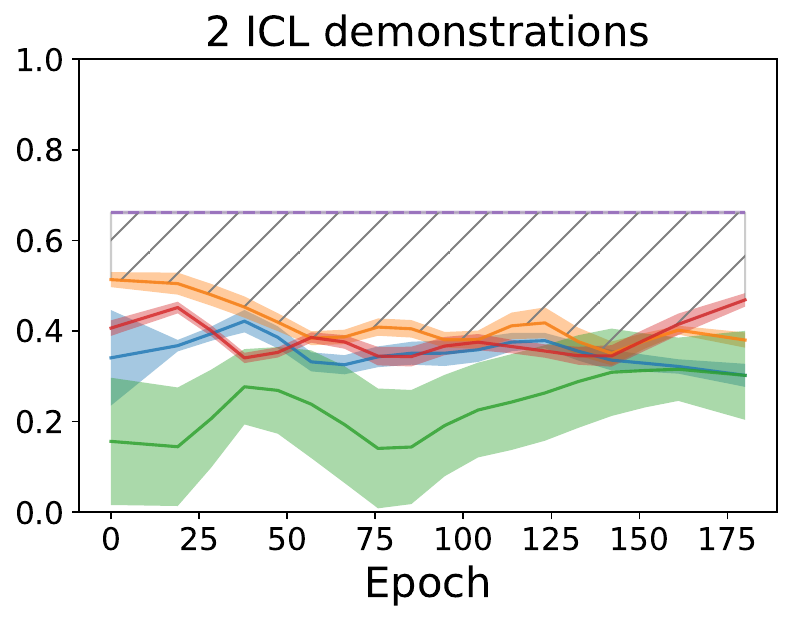}
    \includegraphics[width=0.24\textwidth]{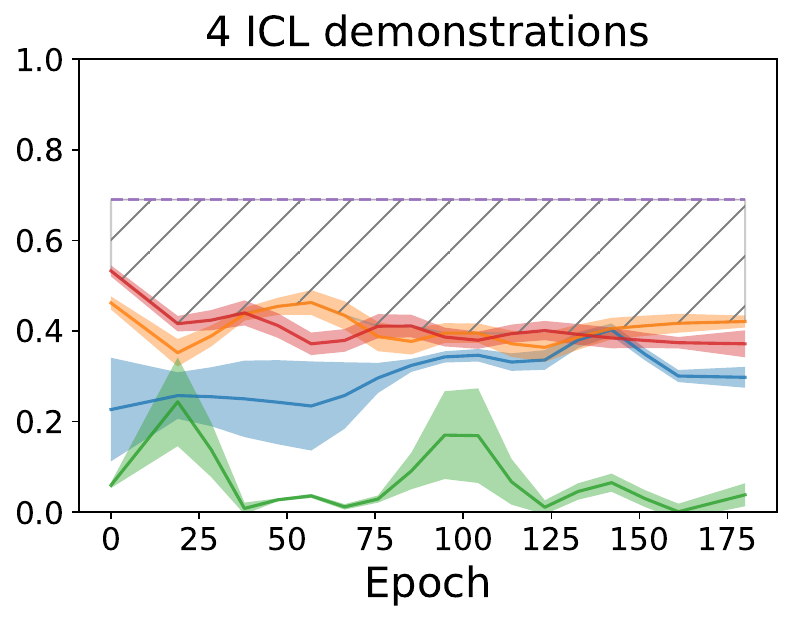}
    \includegraphics[width=0.24\textwidth]{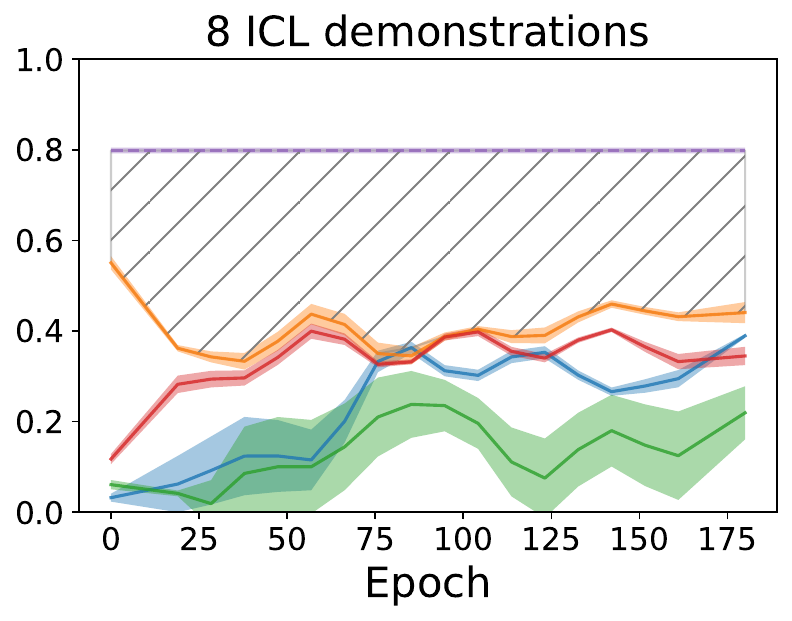}
    }
    \subfigure[\textit{Token overlap} comparison]{
    \includegraphics[width=0.25\textwidth]{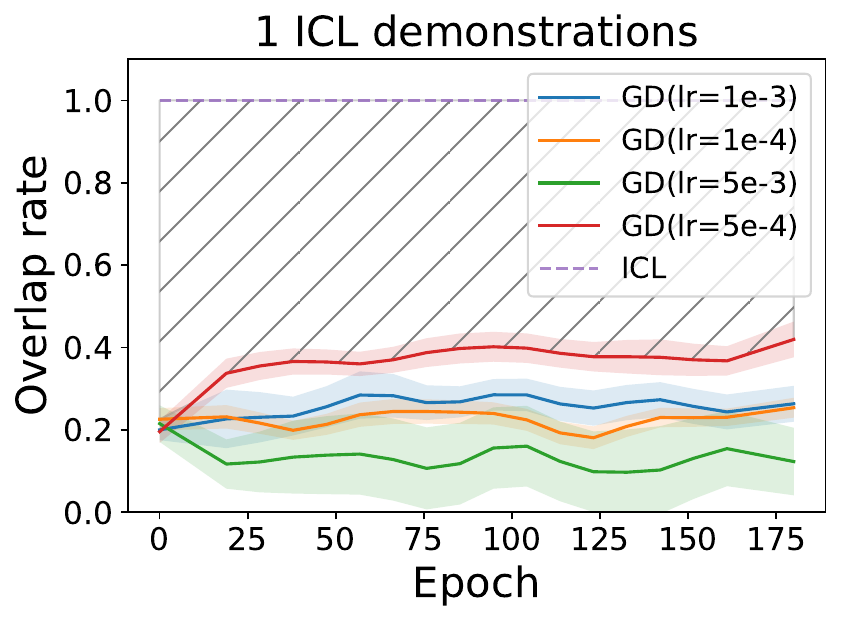}
    \includegraphics[width=0.24\textwidth]{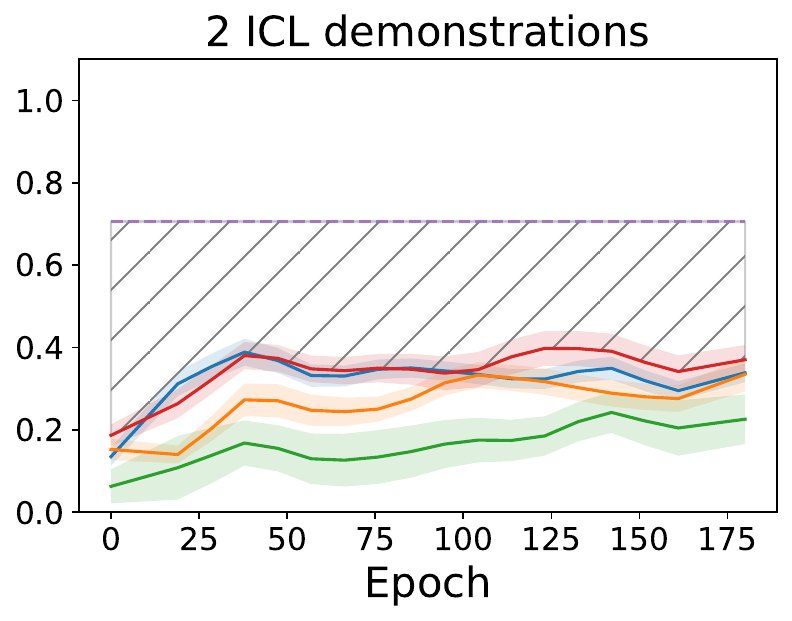}
    \includegraphics[width=0.24\textwidth]{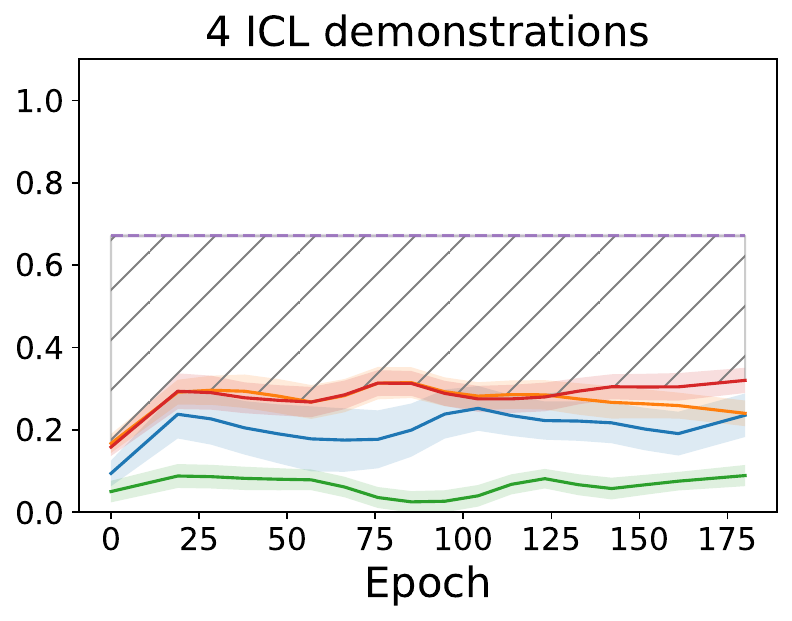}
    \includegraphics[width=0.24\textwidth]{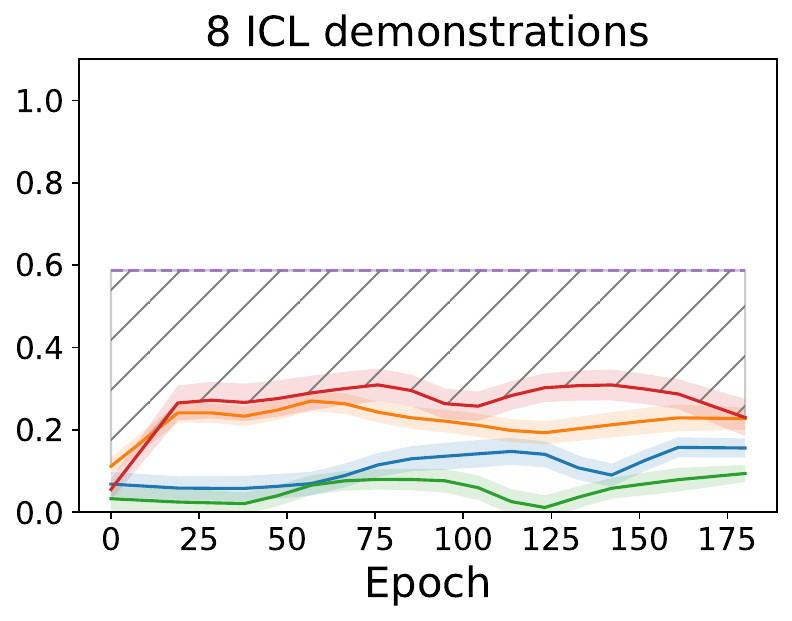}
    } 
    \subfigure[\textit{Overlap Cosine Similarity} comparison]{
    \includegraphics[width=0.25\textwidth]{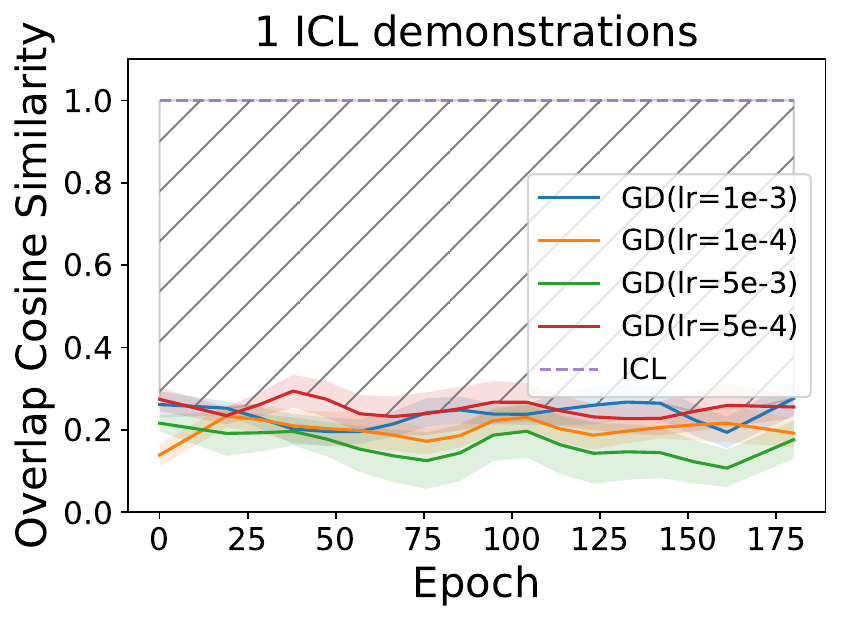}
    \includegraphics[width=0.24\textwidth]{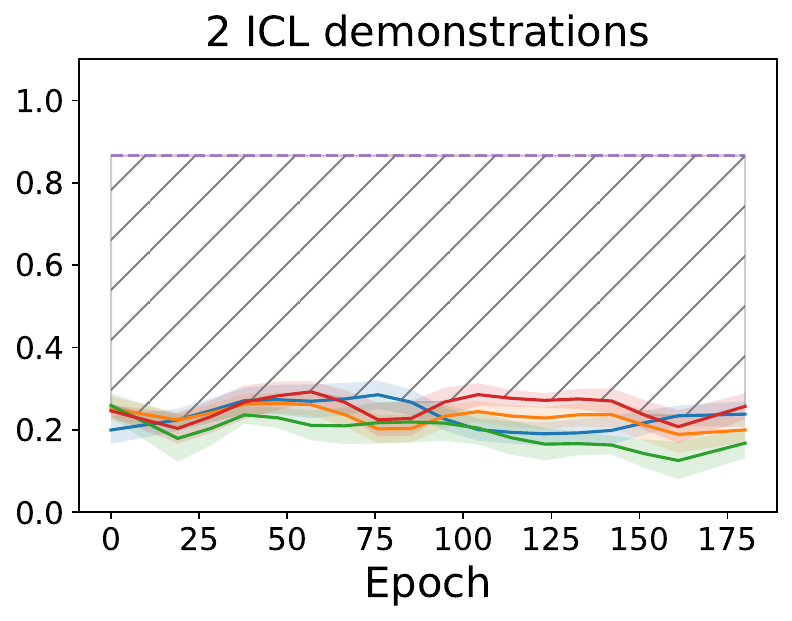}
    \includegraphics[width=0.24\textwidth]{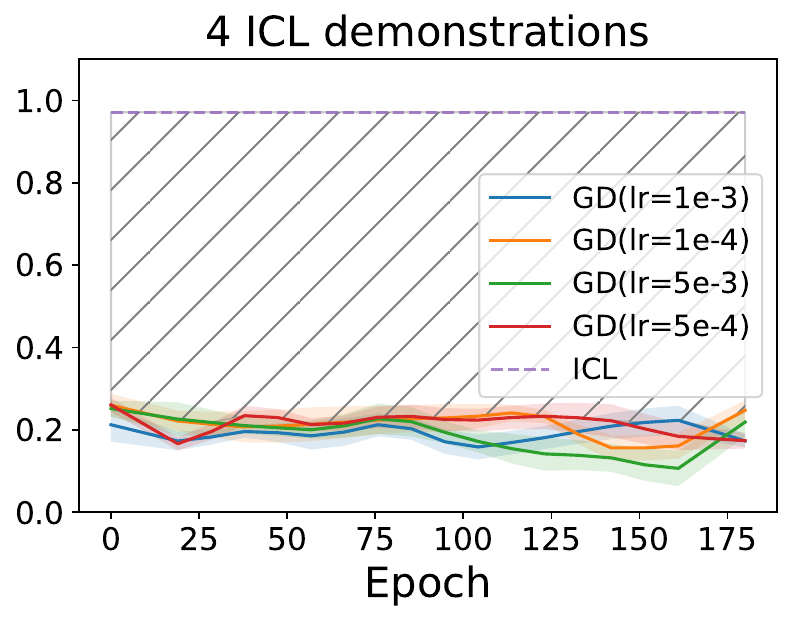}
    \includegraphics[width=0.24\textwidth]{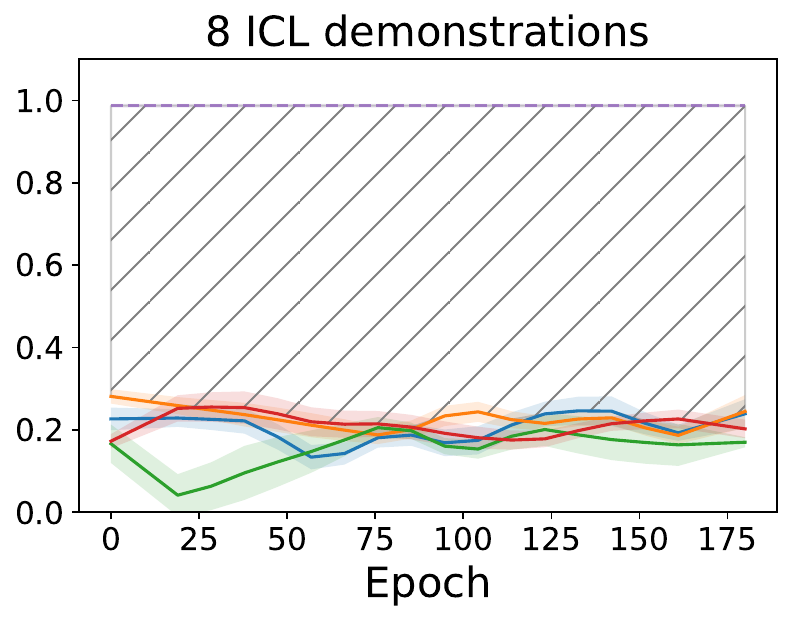}
    }\vspace{-3.5mm}
\caption{Comparison of ICL and $\widehat{\text{GD}}$ for the RTE dataset, with increasing number of demonstrations. $\widehat{\text{GD}}$ is simulated by optimizing on one random middle layer of LLaMa.}  
\label{rte-new2}
\end{figure}

\section{Impact of model capacity on the ICL vs GD.}\label{app:model_size}
We also investigated the influence of model size on the gap between ICL and GD. Specifically, we fix the dataset to AGNews, $N=8$, and select \textsc{GPT2-XL} \citep{radford2019language}, \textsc{GPT-Neo} \citep{gpt-neo}, \textsc{GPT-J} \citep{gpt-j} as models of choice to conduct ICL vs GD experiments. Note that the model capacity is ranked as follows: \textsc{LLaMa (7B)} \textgreater  \textsc{GPT-J (6B)}\textgreater \textsc{GPT-Neo (2.7B)}\textgreater \textsc{GPT2-XL (1.5B)}. The results are shown in \autoref{model_size}, from where we can see that the gap does not change significantly as the model size increases from \textsc{GPT2-XL} to \textsc{LLaMa}.

\begin{figure}[!ht]
\centering
\vspace{-2.0mm}
    \subfigure[\textit{Accuracy} comparison]{
    \includegraphics[width=0.27\textwidth]{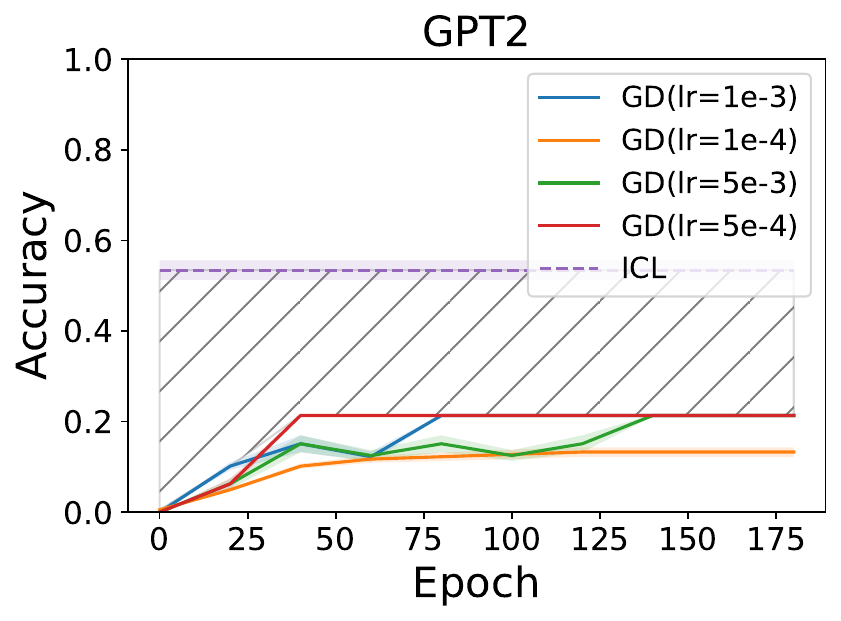}
    \includegraphics[width=0.233\textwidth,trim=1.1cm 0cm 0cm 0cm,clip=true]{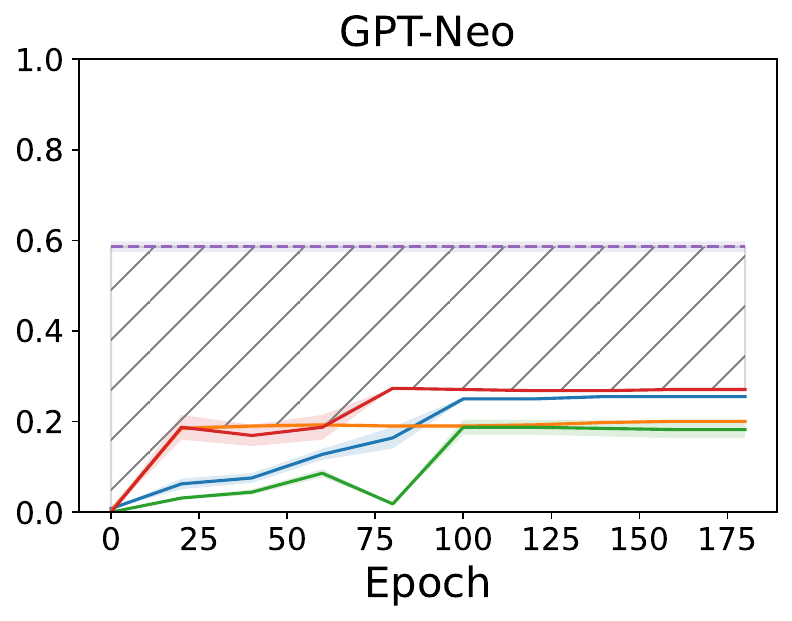}
    \includegraphics[width=0.233\textwidth,trim=1.1cm 0cm 0cm 0cm,clip=true]{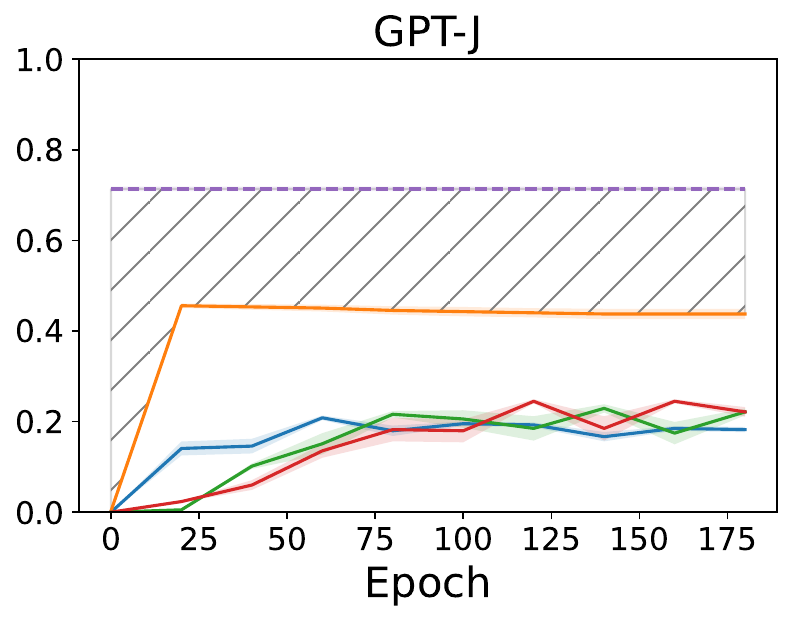}
    \includegraphics[width=0.233\textwidth,trim=1.1cm 0cm 0cm 0cm,clip=true]{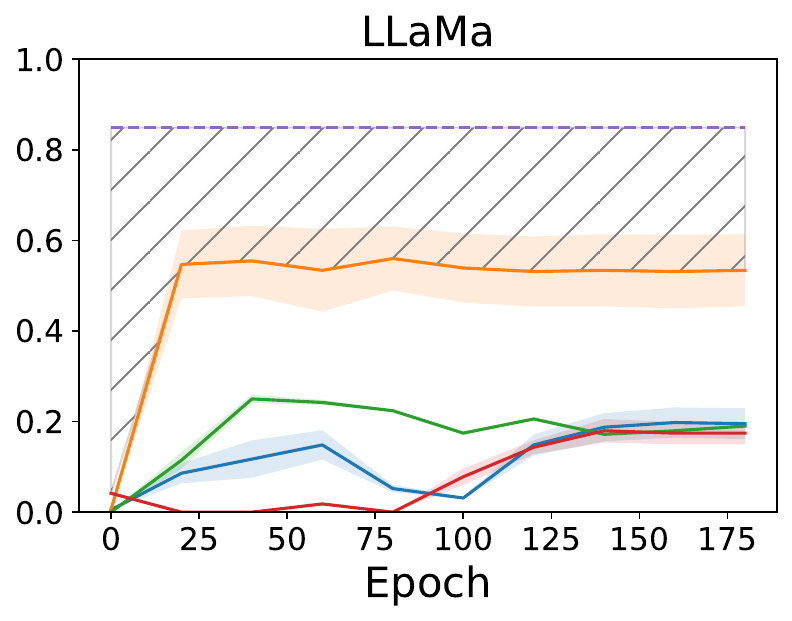}

    }
\vspace{-2.0mm}
    \subfigure[\textit{Token overlap} comparison]
    {
    \includegraphics[width=0.27\textwidth]{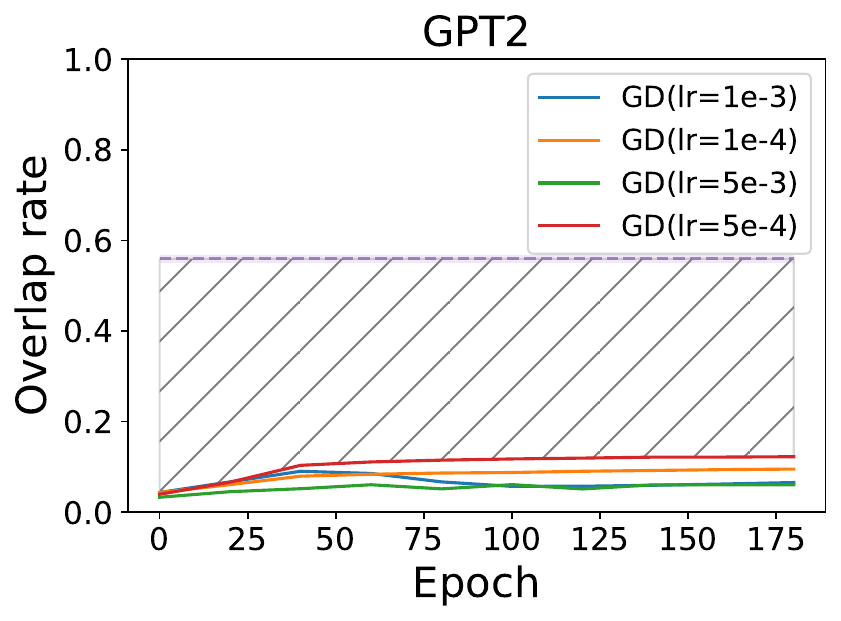}
    \includegraphics[width=0.233\textwidth,trim=1.1cm 0cm 0cm 0cm,clip=true]{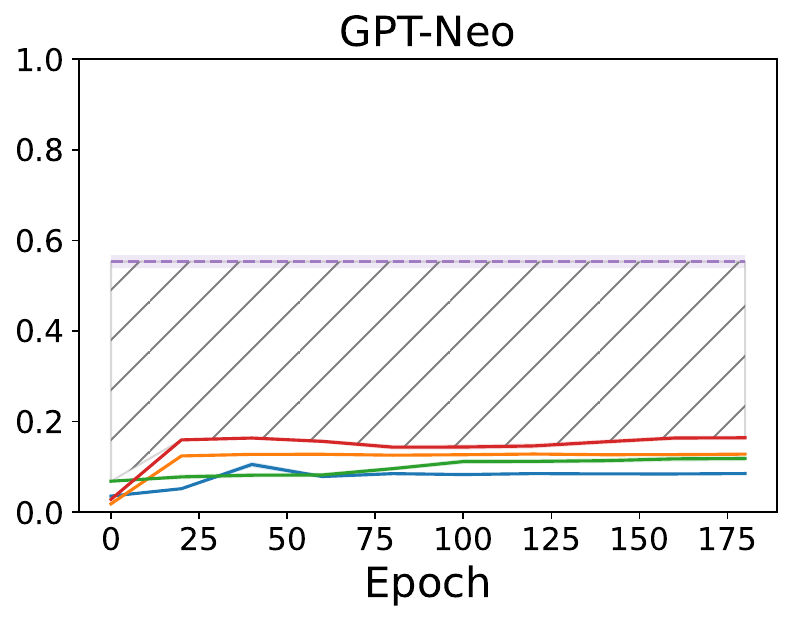}
    \includegraphics[width=0.233\textwidth,trim=1.1cm 0cm 0cm 0cm,clip=true]{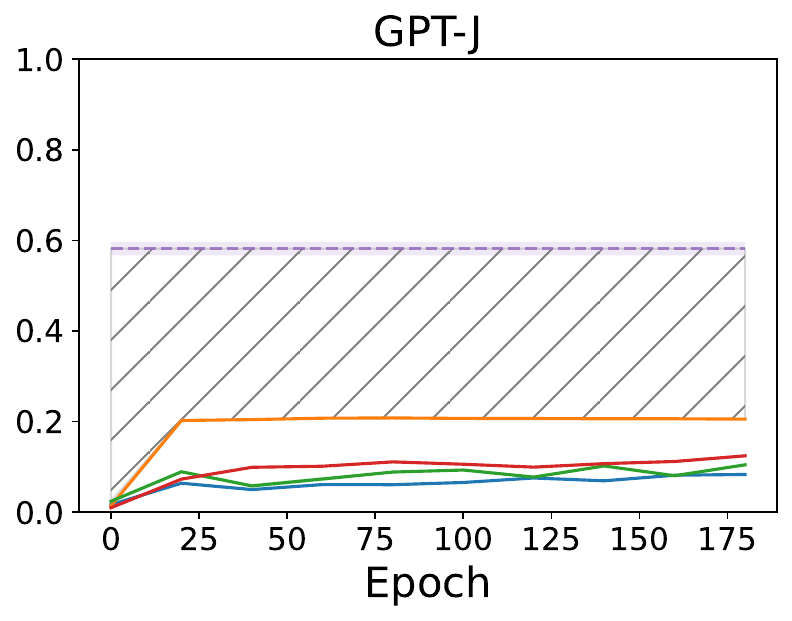}
    \includegraphics[width=0.233\textwidth,trim=1.1cm 0cm 0cm 0cm,clip=true]{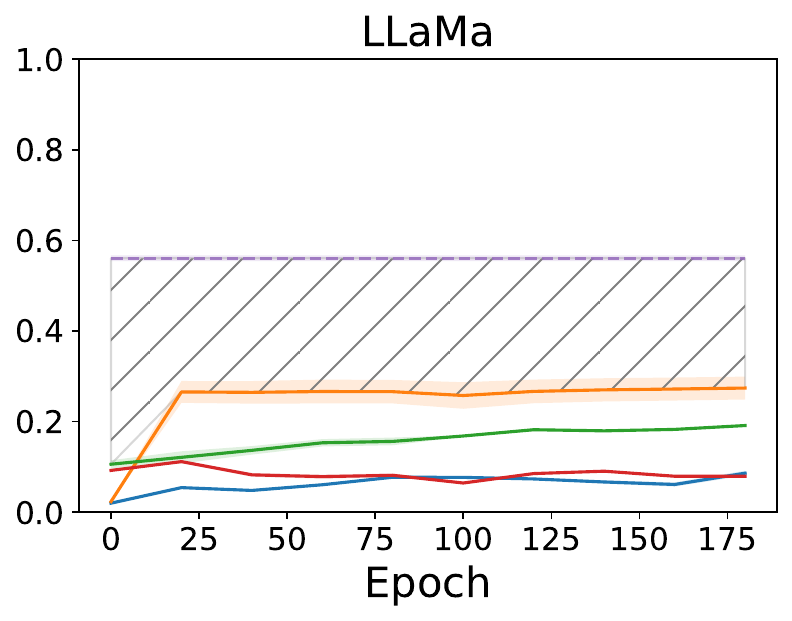}
    
    }

    \subfigure[\textit{Overlap Cosine Similarity} comparison]{
    \includegraphics[width=0.27\textwidth]{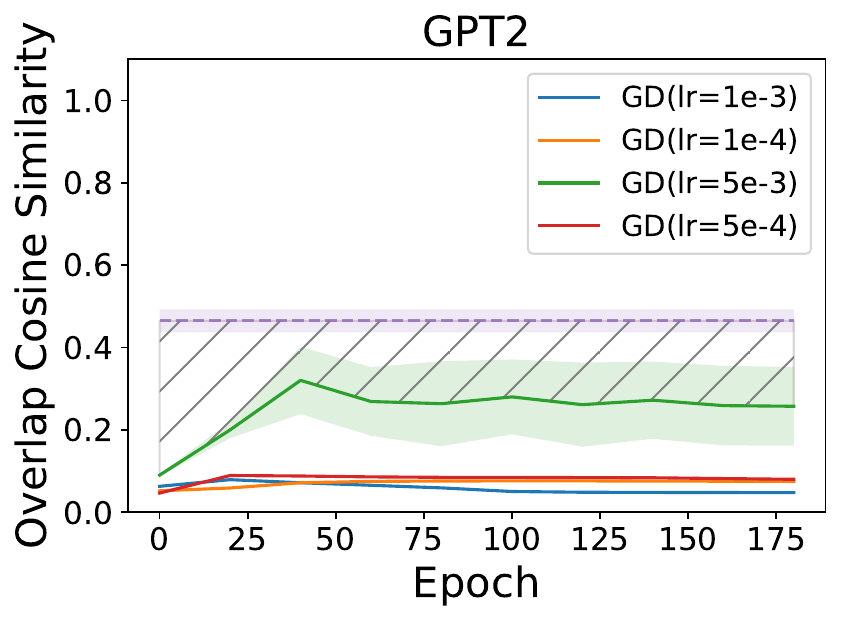}
    \includegraphics[width=0.233\textwidth,trim=1.1cm 0cm 0cm 0cm,clip=true]{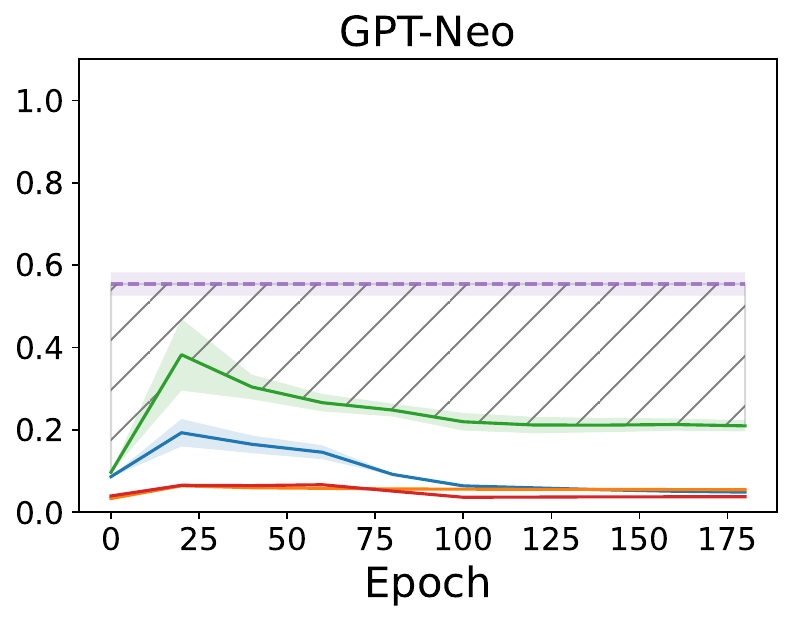}
    \includegraphics[width=0.233\textwidth,trim=1.1cm 0cm 0cm 0cm,clip=true]{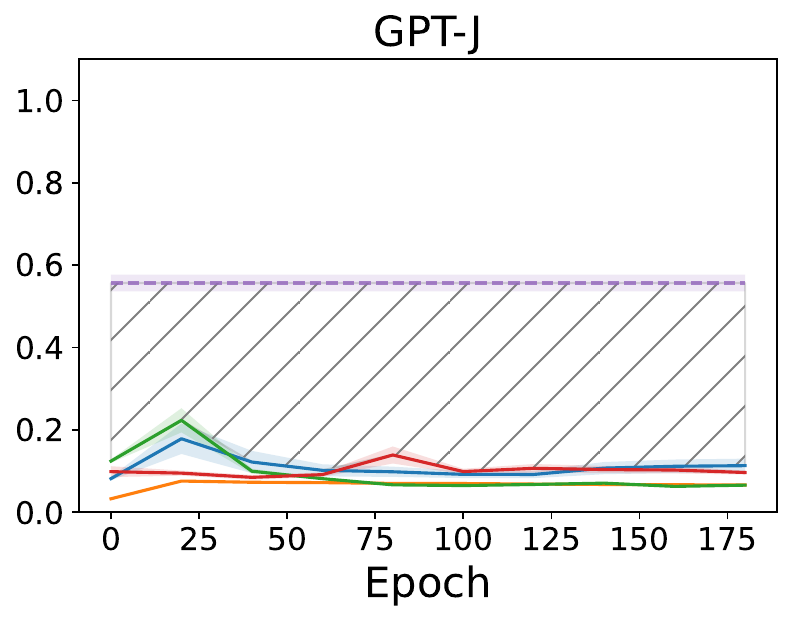}
    \includegraphics[width=0.233\textwidth,trim=1.1cm 0cm 0cm 0cm,clip=true]{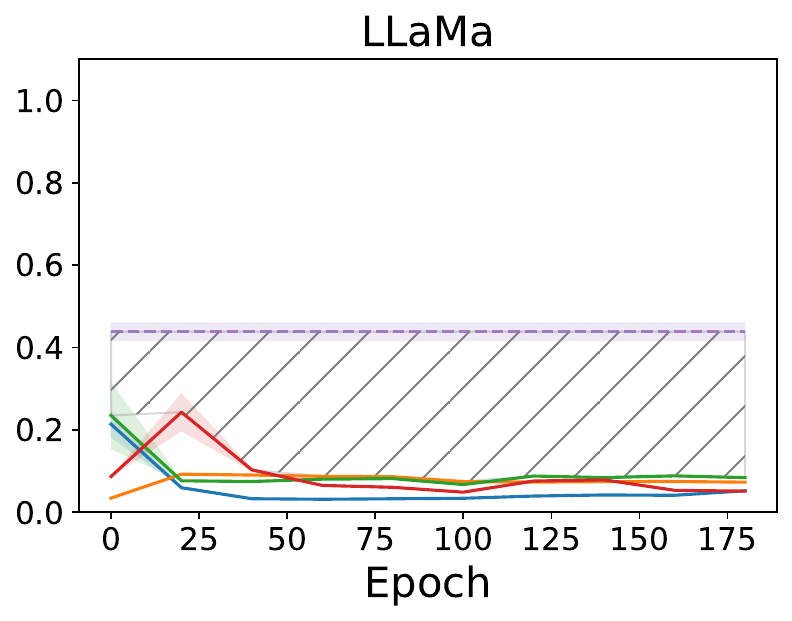}
    } 
\vspace{-4.0mm}
\caption{Comparison of ICL and GD for the AGNews dataset as model size varies.}  
\label{model_size}
\end{figure}
\vspace{-2mm}


\end{document}